\algnewcommand\To{\textbf{to}\;}
\DeclareMathOperator*{\argmax}{argmax}
\DeclareMathOperator*{\argmin}{argmin}
\DeclareMathOperator{\ex}{\mathbb E}
\DeclareMathOperator{\pr}{\mathbb P}
\DeclareMathOperator{\R}{\mathbb R}
\DeclareMathOperator{\N}{\mathbb N}
\DeclareMathOperator{\kl}{kl}
\DeclareMathOperator{\ch}{ch}
\DeclareMathOperator{\GLR}{GLR}
\let\inf\undefined
\let\min\undefined
\let\max\undefined
\DeclareMathOperator*{\inf}{\vphantom{sup}inf}
\DeclareMathOperator*{\min}{\vphantom{sup}min}
\DeclareMathOperator*{\max}{\vphantom{sup}max}
\let\top\intercal
\DeclareBoldMathCommand{\vmu}{\mu}
\DeclareBoldMathCommand{\vlambda}{\lambda}
\DeclareBoldMathCommand{\vtheta}{\theta}
\DeclareBoldMathCommand{\vxi}{\xi}
\DeclareBoldMathCommand{\veta}{\eta}
\DeclareBoldMathCommand{\vsigma}{\sigma}
\DeclareBoldMathCommand{\vN}{N}
\DeclareBoldMathCommand{\w}{w}
\DeclareBoldMathCommand{\q}{q}
\DeclareBoldMathCommand{\p}{p}
\DeclareBoldMathCommand{\e}{e}
\DeclareBoldMathCommand{\a}{a}
\DeclareBoldMathCommand{\u}{u}
\newcommand{\ihat}{\hat \imath}
\newtheorem{lemma}{Lemma}
\newtheorem{theorem}{Theorem}
\theoremstyle{definition}
\newtheorem{definition}{Definition}
\newtheorem{assumption}{Assumption}
\title{%
  Non-Asymptotic Pure Exploration by Solving Games
  }
\author{%
  R\'emy Degenne \\
  Centrum Wiskunde \& Informatica \\
  Science Park 123, 1098 XG Amsterdam \\
  \texttt{remy.degenne@cwi.nl}
  % examples of more authors
  \And
  Wouter M. Koolen \\
  Centrum Wiskunde \& Informatica \\
  Science Park 123, 1098 XG Amsterdam \\
  \texttt{wmkoolen@cwi.nl} \\
  \And
  Pierre M\'enard \\
  Inria Lille \\
  40 Avenue Halley, 59650 Villeneuve-d’Ascq \\
  \texttt{menardprr@gmail.com}
}
\begin{document}

\maketitle

\begin{abstract}
  Pure exploration (aka active testing) is the fundamental task of sequentially gathering information to answer a query about a stochastic environment. Good algorithms make few mistakes and take few samples.

  Lower bounds (for multi-armed bandit models with arms in an exponential family) reveal that the sample complexity is determined by the solution to an optimisation problem. The existing state of the art algorithms achieve asymptotic optimality by solving a plug-in estimate of that optimisation problem at each step.

  We interpret the optimisation problem as an unknown game, and propose sampling rules based on iterative strategies to estimate and converge to its saddle point.
  We apply no-regret learners to obtain the first finite confidence guarantees that are adapted to the exponential family and which apply to any pure exploration query and bandit structure. Moreover, our algorithms only use a best response oracle instead of fully solving the optimisation problem.
\end{abstract}

\section{Introduction}

We study fundamental trade-offs arising in sequential interactive learning. We adopt the framework of Pure Exploration, in which the learning system interacts with its environment by performing a sequence of experiments, with the goal of maximising information gain. We aim to design general, efficient systems that can answer a given query with few experiments yet few mistakes.

As usual, we model the environment by a multi-armed bandit model with exponential family arms, and work in the fixed confidence ($\delta$-PAC) setting. Information-theoretic lower bounds \citep{garivier2016optimal} show that a certain number of samples is unavoidable to reach a certain confidence. Moreover, algorithms are developed \citep{garivier2016optimal} that match these lower bounds asymptotically, in the small confidence $\delta \to 0$ regime.

Our contribution is a framework for obtaining efficient algorithms with \emph{non-asymptotic guarantees}. The main object of study is the ``Pure Exploration Game'' \citep{multiple.answers}, a two-player zero-sum game that is central to lower bounds as well as to the widely used GLRT-based stopping rules. We develop iterative methods that provably converge to saddle-point behaviour.
%The main difficulty is that the game itself is not known to the learner, and has to be explored and estimated on the fly.
The game itself is not known to the learner, and has to be explored and estimated on the fly.
%Our methods and their resulting guarantees are based on pairs of low-regret algorithms, combined with optimism and tracking.
Our methods are based on pairs of low-regret algorithms, combined with optimism and tracking.
We prove sample complexity guarantees for several combinations of algorithms, and discuss their computational and statistical trade-offs.

The rest of the introduction provides more detail on pure exploration problems, the pure exploration game, the connection between them, and expands on our contribution. We also review related work.

%\paragraph{Pure Exploration Problems  and Learning Strategies}
%We need a bit of notation to be concrete, even for the high-level overview.
%
% Words about the model.
Our model for the environment is a $K$-armed bandit, i.e.\ distributions $(\nu_1,\ldots,\nu_K)$ on $\mathbb R$. We assume throughout that these distributions come from a one-dimensional exponential family, and we denote by $d(\mu,\lambda)$ the relative entropy (Kullback-Leibler divergence) from the distribution with mean $\mu$ to that with mean $\lambda$.
A pure exploration problem is parameterised by a set $\mathcal M$ of $K$-armed bandit models (the possible environments), a finite set $\mathcal I$ of candidate answers and a correct-answer function $i^* : \mathcal M {\to} \mathcal I$. We focus on \emph{Best Arm Identification}, for which $i^*(\vmu) = \argmax_i \mu_i$ and the \emph{Minimum Threshold} problem, which is defined for any fixed threshold $\gamma$ by $i^*(\vmu) = \mathbf 1_{\set*{\min_i \mu_i < \gamma}}$.
The goal of the learner is to learn $i^*(\vmu)$ confidently and efficiently by means of sequentially sampling from the arms of $\vmu$, no matter which $\vmu \in \mathcal M$ it faces.
%
%
% Interaction
When an algorithm sequentially interacts with $\vmu$, we denote by $N_t^k$ and $\hat \mu_t^k$ the sample count and empirical mean estimate (these form a sufficient statistic) for arm $k$ after $t$ rounds. We write $\tau_\delta$ for the time at which the algorithm stops and $\ihat$ for the answer it recommends. The algorithm is correct (on a particular run) if it recommends $\ihat = i^*(\vmu)$ the correct answer for $\vmu$. An algorithm is $\delta$-PAC (or $\delta$-correct) if $\pr_\vmu(\ihat \neq i^*(\vmu)) \le \delta$ for each $\vmu \in \mathcal M$. Among $\delta$-PAC algorithms, we are interested in those minimising the sample complexity $\ex_\vmu[\tau_\delta]$. As it turns out, what can be achieved, and how, is captured by a certain game.

For each $\vmu \in \mathcal M$, \cite{multiple.answers} define the two-player zero-sum simultaneous-move \emph{Pure Exploration Game}: MAX plays an arm $k \in [K]$, MIN plays an ``alternative'' bandit model $\vlambda \in \mathcal M$ with a different correct answer $i^*(\vlambda) \neq i^*(\vmu)$. We denote the set of such alternatives to answer $i$ by $\neg i = \{\vlambda \in \mathcal M: i^*(\vlambda) \neq i\}$. MAX then receives payoff $d(\mu^k, \lambda^k)$ from MIN. As the payoff is neither concave in $k$ (since discrete) nor convex in $\vlambda$ (both domain and divergence are problematic), we will analyse the game by sequencing the moves and considering a mixed strategy for the player moving first. With MAX moving first and playing a mixed strategy $k \sim \w\in\triangle_K$ (we identify distributions over $[K]$ and the simplex $\triangle_K$), the value of the game is
\begin{equation}\label{eq:game}
  D_\vmu
  ~\df~
  \sup_{\w \in \triangle_K} D_\vmu(\w)
  \qquad
  \text{where}
  \qquad
  D_\vmu(\w)
  ~\df~
  \inf_{\vlambda \in \mathcal M
    :
    i^*(\vlambda) \neq i^*(\vmu)}
    \sum_{k=1}^K w^k d(\mu^k, \lambda^k)
  .
\end{equation}
We denote a minimiser of $D_\vmu$ by $\w^*(\vmu)$ and call it an \emph{oracle allocation}. The analogue where MIN plays first using a mixed strategy $\vlambda \sim \q\in\pr(\neg i^*(\vmu))$ (distributions over that set) is proposed and analysed in \cite{multiple.answers}. Despite the baroque domain of $\vlambda$ in \eqref{eq:game}, there always exist minimax $\q$ supported on $\le K$ points due to dimension constraints.

The Pure Exploration Game is essential to both characterising the complexity of learning, and also to algorithm design.
Namely, first, any $\delta$-correct algorithm has sample complexity for each bandit $\vmu \in \mathcal M$ at least
\(
  \ex_\vmu[\tau_\delta]
  \ge
  \wfrac{
    \kl(\delta, 1-\delta)
  }{
    D_\vmu
  }
  \approx
  \wfrac{
    \ln \frac{1}{\delta}
  }{
    D_\vmu
  }
  \)\todo{what is $\kl$?},
and matching this rate requires sampling proportions $\wfrac{\ex_\vmu[\vN_{\tau_\delta}]}{\ex_\vmu[\tau_\delta]}$ converging to $\w^*(\vmu)$ \cite[see][]{garivier2016optimal}. Moreover, second, the general approach for obtaining $\delta$-correct algorithms is based on the Generalised Likelihood Ratio Test (GLRT) statistic $Z_t \df t D_{\hat \vmu_t}\del*{\wfrac{\vN_t}{t}}$. There are universal thresholds $\beta(t, \delta) \approx \ln \frac{1}{\delta} + \frac{K}{2} \ln \ln \frac{t}{\delta}$ \cite[see e.g.][]{garivier2013informational, garivier2016optimal, mixmart, magureanu2014lipschitz} such that
\(
  \pr_\vmu \set*{
    \exists t : Z_t \ge \beta(\delta, t)
  }
  \le
  \delta
  \)
  for any $\vmu \in \mathcal M$.
  Hence stopping when $Z_t \ge \beta(t, \delta)$ and recommending $\ihat = i^*(\hat \vmu_t)$ is $\delta$-correct for any sampling rule. Maximising the GLRT to stop as early as possible is achieved by the sampling proportions $\vN_t/t = \w^*(\hat \vmu_t)$.

  These considerations show that any successful Pure Exploration agent needs to (approximately) solve the Pure Exploration Game $D_\vmu$.
The Track-and-Stop approach, pioneered by \cite{garivier2016optimal}, ensures that $\hat \vmu_t \to \vmu$ using \emph{forced exploration}, and $\vN_t/t \to \w^*(\hat \vmu_t)$ using \emph{tracking}. Continuity of $\w^*$ and $D_\vmu$ then yields that $Z_t \approx t D_\vmu(\w^*(\vmu)) = t D_\vmu$. The GLRT stopping rule triggers when $t = \wfrac{\beta(\delta,t)}{D_\vmu} \approx \wfrac{\ln \frac{1}{\delta}}{D_\vmu}$, meeting the lower bound in the asymptotic regime $\delta \to 0$.

\paragraph{Our contributions.}
We explore methods to solve the Pure Exploration game $D_\vmu$ associated with the unknown bandit model $\vmu$, and discusses their statistical and computational trade-offs. We look at solving the game iteratively, by instantiating a low-regret online learners for each player. In particular for the $k$-player we use a self-tuning instance of Exponentiated Gradient called AdaHedge \cite{ftl.jmlr}. The $\vlambda$-player  needs to play a distribution to deal with non-convexity; we consider Follow the Perturbed Leader as well as an ensemble of Online Gradient Descent experts. We show how a combination of optimistic gradient estimates, concentration of measure arguments and regret guarantees combine to deliver the first non-asymptotic sample complexity guarantees (which retain asymptotic optimality for $\delta \to 0$). The advantage of this approach is that it only requires a best response oracle (\ref{eq:game}, right) instead of a computationally more costly max-min oracle (\ref{eq:game}, left) employed by Track-and-Stop. Going the other extreme, we also develop Optimistic Track-and-Stop based on a max-max-min oracle (the outer max implementing optimism over a confidence region for $\vmu$), which trades increased computation for tighter sample complexity guarantees with simpler proofs.

Our cocktail sheds new light on the trade-offs involved in the design of pure exploration algorithms.
We show how ``big-hammer'' forced exploration can be refined using problem-adapted optimism.
We show how tracking is unnecessary when the $k$ player goes second. We show how computational complexity can be traded off using oracles of various sophistication. And finally, we validate our approach empirically in benchmark experiments at practical $\delta$, and find that our algorithms are either competitive with Track-and-Stop (dense $\w^*$) or dominate it (sparse $\w^*$).

\paragraph{Related work}

Besides maximising information gain, there is a vast literature on maximising reward in multi-armed bandit models for which a good starting point is \cite{bandit.book}.
The canonical Pure Exploration problem is Best Arm Identification \citep{DBLP:conf/colt/Even-DarMM02,Bubeckal11}, which is actively studied in the fixed confidence, fixed budget and simple regret settings \cite[Ch.~33]{bandit.book}. Its sample complexity as a function of the confidence level $\delta$ has been analysed very thoroughly in the (sub)-Gaussian case, where we have a rather complete picture, even including lower order terms \cite{pmlr-v65-chen17b}. \cite{on.the.complexity} initiated the quest for correct instance-dependent constants for arms from any exponential family. \cite{simchowitz2017simulator} stresses the importance of the ``moderate confidence'' regime $\delta \gg 0$. Although it is not the focus here, we do believe that it is crucial to obtain the right problem dependence not only in $\ln \frac{1}{\delta}$ but also in $K$ and other structural parameters, as the latter may in practice dominate the sample complexity.

Pure Exploration queries beyond Best Arm include Top-$M$ \cite{Shivaram:al10}, Thresholding \cite{thresholding}, Minimum Threshold \cite{kaufmann2018sequential}, Combinatorial Bandits \cite{Chen14ComBAI}, pure-strategy Nash equilibria \cite{pmlr-v70-zhou17b} and Monte-Carlo Tree Search \cite{FindTopWinner}. There is also significant interest in these problems in structured bandit models, including Rank-one \cite{rank.one.bandit}, Lipschitz \cite{magureanu2014lipschitz}, Monotonic \cite{garivier2017thresholding}, Unimodal \cite{combes2014unimodal} and Unit-Sum \cite{simchowitz2017simulator}.
Our framework applies to all these cases. Problems with multiple correct answers were recently considered by \cite{multiple.answers}. Existing learning strategies do not work unmodified; some fail and others need to be generalised.

Optimism is ubiquitous in bandit optimisation since \cite{Aueral02}, and was adapted to pure exploration by \cite{Shivaramal12}. We are not aware of optimism being used to solve unknown min-max problems. Optimism was employed in the UCB Frank-Wolfe method by \cite{berthet2017bandit} for maximising an unknown smooth function faster. We do not currently know how to make use of such fast rate results. For games the best response value is a non-smooth function of the action.

Using a pair of independent no-regret learners to solve a fixed and known game goes back to \cite{freund1999adaptive}. More recently game dynamics were used to explain (Nesterov) acceleration in offline optimisation \cite{DBLP:conf/nips/WangA18}. Ensuring faster convergence with coordinating learners is an active area of research \cite{DBLP:conf/nips/RakhlinS13}. Unfortunately, we currently do not know how to obtain an advantage in this way, as our main learning overhead comes from concentration, not regret.

\section{Algorithms with finite confidence sample complexity bounds}

We introduce a family of algorithms, presented as Algorithm~\ref{alg:the_algorithm}, with sample complexity bounds for non-asymptotic confidence. It uses the following ingredients: the GLRT stopping rule, a saddle point algorithm (possibly formed by two regret minimization algorithms) and optimistic loss estimates.

\subsection{Model and assumption: sub-Gaussian exponential families.}

We suppose that the distributions belong to a known one-parameter exponential family. That is, there is a reference measure $\nu_0$ and parameters $\eta_1,\ldots,\eta_K \in\R$ such that the distribution of arm $k\in[K]$ is defined by
$
\nicefrac{d\nu_k}{d\nu_0}(x) \propto e^{\eta_k x}% - \psi(\eta_k)} \qquad \mbox{with} \qquad \psi(\eta) = \log\ex_{X\sim\nu_0} e^{\eta x} \: .
$. 
Examples include Gaussians with a given variance or Bernoulli with means in $(0,1)$.
All results can be extended to arms each in a possibly different known exponential family.
Let $\Theta$ be the open interval of possible means of such distributions.
A distribution $\nu$ is said to be $\sigma^2$-sub-Gaussian if for all $u\in\R$,
$
\log \ex_{X\sim\nu}e^{u(X-\ex_{X\sim\nu}[X])} \leq \frac{\sigma^2}{2} u^2 
$
.
An exponential family has all distributions sub-Gaussian with constant $\sigma^2$ iff for all $\mu, \lambda \in \Theta$, it verifies $d(\mu,\lambda) \geq \frac{1}{2\sigma^2}(\mu-\lambda)^2$.

\begin{assumption}\label{ass:sub-Gaussian}
The arm distributions belong to sub-Gaussian exponential families with constant $\sigma^2$.
\end{assumption}

\begin{assumption}\label{ass:compact_M}
There exists a closed interval $[\mu_{\min},\mu_{\max}] \subset \Theta$ such that $\mathcal{M}\subseteq [\mu_{\min},\mu_{\max}]^K$.
\end{assumption}

As a consequence of Assumption~\ref{ass:compact_M}, there exists $L,D>0$ such that for all $y\in[\mu_{\min},\mu_{\max}]$, the function $x\mapsto d(x,y)$ is $L$-Lipschitz on $[\mu_{\min},\mu_{\max}]$ and $d(x,y)\leq D$. 
Assumption~\ref{ass:sub-Gaussian} is implied by Assumption~\ref{ass:compact_M}. Both are discussed in Appendix~\ref{sec:ass_discussion}. In particular, Assumption~\ref{ass:compact_M} can often be relaxed. $L$ and $D$ will appear in the sample complexity bounds but none of our algorithms use them explicitly. 

Everywhere below, $\hat{\vmu}_t$ denotes the orthogonal projection of the empirical mean onto $[\mu_{\min},\mu_{\max}]^K$, with one possible exception: the GLRT stopping rule may use it either projected or not, indifferently.

\subsection{Algorithmic ingredients}\label{sec:alging}

\begin{algorithm}[t]
  \begin{algorithmic}[1]
    \Require Algorithms $\mathcal A^k$ and $\mathcal{A}^\vlambda$, stopping threshold $\beta(t,\delta)$ and exploration bonus $f(t)$.
\State Sample each arm once and form estimate $\hat{\vmu}_K$.
  \For{$t=K+1,\ldots$}
\State For $k\in[K]$,  let
$[\alpha_t^k, \beta_t^k] = \{\xi: N_{t-1}^k d(\hat{\mu}_{t-1}^k, \xi) \leq f(t{-}1)\}$.
\Comment{KL confidence intervals}
\State
Let $\tilde{\vmu}_{t-1} = \argmin_{\vlambda\in\mathcal{M}\cap \bigtimes_{k=1}^K[\alpha_t^k, \beta_t^k]} \sum_{k=1}^K N_{t-1}^k d(\hat{\mu}_{t-1}^k, \lambda^k)$.
\Comment{
  $ = \hat{\vmu}_{t-1}$ if $\hat{\vmu}_{t-1} \in \mathcal{M}$}
\State Let $i_t = i^*(\tilde{\vmu}_{t-1})$.
\State Stop and output $\ihat=i_t$ \textbf{if} $\inf_{\vlambda \in \neg i_t} \sum_k N_{t-1}^k d(\hat \mu_{t-1}^k, \lambda^k) > \beta(t,\delta)$. \Comment{GLRT Stopping rule}
\State Get $\w_t$ and $\q_t$ from $\mathcal{A}^k_{i_t}$ and $\mathcal{A}^\vlambda_{i_t}$.
\State For $k\in[K]$, let
$
U_t^k = \max \Big\{ f(t{-}1)/N_{t-1}^k, \max_{\xi\in\{\alpha_t^k,\beta_t^k\}} \ex_{\vlambda\sim \q_t}d\del{\xi, \lambda^k} \Big\}
$. \Comment{Optimism}
\State Feed $\mathcal A_{i_t}^k$ the loss
$
    \ell_t^\w(\w)
    =
    - \sum_{k=1}^K w^k U_t^k
    .
$
\State Feed $\mathcal{A}_{i_t}^\vlambda$ the loss $\ell_t^\vlambda(\q) = \ex_{\vlambda\sim \q}\sum_{k=1}^K w_t^k d(\hat{\mu}_{t-1}^k, \lambda^k)$ .
\State Pick arm $k_t = \argmin_k N_{t-1}^k / \sum_{s=1}^t w_s^k$. \Comment{Cumulative tracking}
\State Observe sample $X_t \sim \nu_{k_t}$. Update $\hat{\vmu}_t$.
  \EndFor
\caption{Pure exploration meta-algorithm.}
\label{alg:the_algorithm}
\end{algorithmic}
\end{algorithm}

\paragraph{Stopping and recommendation rules.}

The algorithm stops if any one of $|\mathcal{I}|$ GLRT tests succeeds \citep{garivier2016optimal}. Let $\mathcal{L}_\vmu$ denote the likelihood under the model parametrized by $\vmu$. The generalized log-likelihood ratio between a set $\Lambda$ and the whole parameter space $\Theta^K$ is
\begin{align*}
%\mbox{GLR}_t^\mathcal{M}(\Lambda)
%= \log \frac{\sup_{\tilde{\vmu} \in \mathcal{M}} \mathcal{L}_{\tilde{\vmu}}(X_1,\ldots,X_t)}{\sup_{\vlambda \in \Lambda} \mathcal{L}_{\vlambda}(X_1,\ldots,X_t)} \: .
\mbox{GLR}_t^{\Theta^K}(\Lambda)
= \log \frac{\sup_{\tilde{\vmu} \in {\Theta^K}} \mathcal{L}_{\tilde{\vmu}}(X_1,\ldots,X_t)}{\sup_{\vlambda \in \Lambda} \mathcal{L}_{\vlambda}(X_1,\ldots,X_t)}
= \inf_{\vlambda\in\Lambda} \sum_{k\in[K]} N_t^k d(\hat{\mu}_t^k, \lambda^k) \: .
\end{align*}
By concentration of measure arguments, we may find $\beta(t,\delta)$ such that with probability greater than $1-\delta$, for all $t\in\N$, $\mbox{GLR}_t^{\Theta^K}(\{\vmu\}) \leq \beta(t, \delta)$ \cite[see ][]{garivier2013informational, garivier2016optimal, mixmart, magureanu2014lipschitz}. 
Test $i\in\mathcal{I}$ succeeds if $\GLR_t^{\Theta^K}(\neg i) > \beta(t,\delta)$. If the algorithm stops because of test $i$, recommend $\ihat=i$. If several tests succeed at the same time, choose arbitrarily among these.

\begin{theorem}\label{th:GLRT_delta_correct}
Any algorithm using the GLRT stopping and recommendation rules with threshold $\beta(t,\delta)$ such that $\pr_\vmu\{\mbox{GLR}_t^{\Theta^K}(\{\vmu\}) > \beta(t, \delta)\}\leq \delta$ is $\delta$-correct.
\end{theorem}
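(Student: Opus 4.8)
The plan is to show that the event on which the recommendation can be incorrect is already contained in the single event controlled by the threshold hypothesis, so that no union bound over the $|\mathcal I|$ tests is needed. Fix any $\vmu \in \mathcal M$ and write $i^\star = i^*(\vmu)$ for its correct answer. First I would unpack what an error means operationally: by the recommendation rule the algorithm outputs $\ihat = i$ only because it stopped on test $i$ at the stopping time $\tau_\delta$, i.e. $\GLR_{\tau_\delta}^{\Theta^K}(\neg i) > \beta(\tau_\delta,\delta)$. Hence (in particular the algorithm must have stopped) the error event $\{\ihat \neq i^\star\}$ is contained in the event that there exist a time $t$ and an answer $i \neq i^\star$ with $\GLR_t^{\Theta^K}(\neg i) > \beta(t,\delta)$.

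The crux is the observation that whenever $i \neq i^\star = i^*(\vmu)$, the true model $\vmu$ is itself an alternative to $i$, namely $\vmu \in \neg i$, directly from $\neg i = \{\vlambda \in \mathcal M : i^*(\vlambda) \neq i\}$. Since $\GLR_t^{\Theta^K}(\Lambda) = \inf_{\vlambda \in \Lambda} \sum_k N_t^k d(\hat\mu_t^k, \lambda^k)$ is an infimum over the feasible set $\Lambda$, evaluating at the feasible point $\vmu$ gives
$$\GLR_t^{\Theta^K}(\neg i) \le \sum_{k} N_t^k d(\hat\mu_t^k, \mu^k) = \GLR_t^{\Theta^K}(\{\vmu\}).$$
Therefore, on the error event, $\GLR_t^{\Theta^K}(\{\vmu\}) \ge \GLR_t^{\Theta^K}(\neg i) > \beta(t,\delta)$ for some $t$, and I obtain the containment
$$\{\ihat \neq i^*(\vmu)\} \subseteq \big\{\exists t : \GLR_t^{\Theta^K}(\{\vmu\}) > \beta(t,\delta)\big\}.$$

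Finally I would take probabilities under $\pr_\vmu$ and invoke the threshold guarantee, which controls exactly the right-hand event, yielding $\pr_\vmu(\ihat \neq i^*(\vmu)) \le \delta$; as $\vmu \in \mathcal M$ was arbitrary, the algorithm is $\delta$-correct. The only point requiring care is matching the quantifier in the hypothesis on $\beta$ to this event: because $\tau_\delta$ is random, the statistic must be controlled \emph{uniformly} over all $t$ (the anytime form established just above, with $\beta(t,\delta) \approx \ln\frac1\delta + \frac K2 \ln\ln\frac t\delta$), not merely for one fixed $t$. That anytime concentration is the genuinely non-trivial ingredient, but it is imported as the stated assumption; the remaining work is the short set inclusion above. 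The elegance, and what I would emphasise, is that every incorrect test is simultaneously dominated by the single self-normalised quantity $\GLR_t^{\Theta^K}(\{\vmu\})$, so the error probability is bounded with no loss over the number of candidate answers.
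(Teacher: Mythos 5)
Your proof is correct and is essentially the argument the paper intends (the paper leaves it implicit, citing the standard GLRT reasoning of \cite{garivier2016optimal}): a wrong recommendation $i \neq i^*(\vmu)$ means test $i$ fired, and since $\vmu \in \neg i$ the infimum defining $\GLR_t^{\Theta^K}(\neg i)$ is dominated by $\GLR_t^{\Theta^K}(\{\vmu\})$, so the error event sits inside the single anytime deviation event controlled by the hypothesis on $\beta(t,\delta)$. Your remark that the hypothesis must be read in its uniform-in-$t$ form (as the paper's preceding text makes explicit) is exactly the right point of care, and correctly observed to require no union bound over $|\mathcal{I}|$.
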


\paragraph{A game with two players}

An algorithm is unable to stop at time $t$ if the stopping condition is not met, i.e.
\begin{align*}
\beta(t, \delta)
&\geq \inf_{\vlambda \in \neg i^*(\hat{\vmu}_t)} \sum_{k\in[K]} N_t^k d(\hat{\mu}_t^k, \lambda^k) \: .
\end{align*}
In order to stop early, the right hand side has to be maximized, i.e.\ made
close to $t\sup_{\w\in\triangle_K}\inf_{\vlambda \in \neg i^*(\hat{\vmu}_t)} \sum_{k\in[K]} w_t^k d(\hat{\mu}_t^k, \lambda^k) = tD_{\hat{\vmu}_t}\approx tD_\vmu$.
Then with $\beta(t,\delta) \approx \log\nicefrac{1}{\delta} + o(t)$ we obtain $t\leq \log(1/\delta)/D_\vmu$ up to lower order terms, i.e.\ the stopping time is close to optimality.

We propose to approach that max-min saddle-point by implementing two iterative algorithms, $\mathcal{A}^k$ and $\mathcal{A}^\vlambda$, for the $k$-player and a $\vlambda$-player. Our sample complexity bound is a function of two quantities $R_t^k$%= \sum_{s=1}^t\ell_s^\w(\w_s) - \min_\w \sum_{s=1}^t \ell_s^\w(\w)$
and $R_t^\vlambda$%= \sum_{s=1}^t\ell_s^\vlambda(\q_s) - \min_\q \sum_{s=1}^t \ell_s^\vlambda(\q)$
, regret bounds of algorithms $\mathcal{A}^k$ and $\mathcal{A}^\vlambda$ when used for $t$ steps on appropriate losses.

One player of our choice goes first. The second player can see the action of the first, see the corresponding loss function and use an algorithm with zero regret (e.g.\ Best-Response or Be-The-Leader). One of the players has to play distributions on its action set. We have one of the following:
\begin{enumerate}[nolistsep]
\item $\vlambda$-player plays first and uses a distribution in $\pr(\neg i_t)$. The $k$-player plays $k_t\in[K]$.
\item $k$-player plays first and uses $\w_t\in\triangle_K$ (distribution over $[K]$). The $\vlambda$-player plays $\vlambda_t\in\neg i_t$.
\item Both players play distributions and go in any order, or concurrently. 
\end{enumerate}
Algorithm~\ref{alg:the_algorithm} presents two players playing concurrently but can be modified: if for example $\vlambda$ plays second, then it gets to see $\ell_t^\vlambda(\q)$ before computing $\q_t$.

The sampling rule at stage $t$ first computes the most likely answer $i_t$ for $\hat{\vmu}_{t-1}$. If the set over which the algorithm optimizes at line 4 is empty, $i_t$ is arbitrary.
The $k$-player plays $\w_t$ coming from $\mathcal{A}^k_{i_t}$, an instance of $\mathcal{A}^k$ running only on the rounds on which the selected answer is that $i_t$. The $\vlambda$-player similarly uses an instance $\mathcal{A}^\vlambda_{i_t}$ of $\mathcal{A}^\vlambda$.

\paragraph{Tracking.} Since a single arm has to be pulled, if the $k$-player plays $\w\in\triangle_K$ an additional procedure is needed to translate that play into a sampling rule. We use a so-called tracking procedure, $k_t = \argmin_{k\in[K]} N_{t-1}^k / \sum_{s=1}^t w_s^k$ , which ensures that  $\sum_{s=1}^t w_s^k - (K-1) \leq N_t^k \leq \sum_{s=1}^t w_s^k$ .

\paragraph{Optimism in face of uncertainty.}

Existing algorithms for general pure exploration use forced exploration to ensure convergence of $\hat{\vmu}_t$ to $\vmu$, making sure that every arm is sampled more than e.g.\ $\sqrt{t}$ times. % The sample complexity proofs then rely on continuity arguments and can only obtain asymptotic results.
We replace that method by the ``optimism in face of uncertainty'' principle, which gives a more adaptive exploration scheme.
While that heuristic is widely used in the bandit literature, this work is its first successful implementation for general pure exploration.
In Algorithm~\ref{alg:the_algorithm}, the $k$-player algorithm gets an optimistic loss depending on $\w_t$ and $\q_t$. The $\vlambda$-player gets a non-optimistic loss.

\subsection{Proof scheme and sample complexity result}\label{sec:proof_sketch}

In order to bound the sample complexity, we introduce a sequence of concentration events $\mathcal{E}_t = \{\forall s\leq t, \forall k \in [K], \: d(\hat{\mu}_s^k,\mu^k) \leq \frac{\widehat{W}((1+a)\log(t))}{N_s^k} \}$  for $a>0$ and $\widehat{W}(x) = x + \log x + \nicefrac{1}{2}$ . It verifies $\sum_{t=3}^{+\infty}\pr_\vmu(\mathcal{E}_t^c) \leq 2eK/a^2$ (see Appendix~\ref{sec:concentration} for a proof). The concentration intervals used in Algorihtm~\ref{alg:the_algorithm} are a function of $f(t) = \widehat{W}((1+a)(1+b)\log t)$ for $b>0$.

\begin{lemma}\label{lem:decomposition_of_sample_complexity}
Let $\mathcal{E}_t$ be an event and $T_0(\delta)\in\N$ be such that for $t\geq T_0(\delta)$, $\mathcal{E}_t \subseteq \{\tau_\delta \leq t\}$. Then
\begin{align*}
\ex_\vmu[\tau_\delta]
= \sum_{t=1}^{+\infty} \pr\{\tau_\delta > t\}
\leq T_0(\delta) + \sum_{t=T_0(\delta)}^{+\infty} \pr_\vmu (\mathcal{E}_t^c) \: .
\end{align*}
\end{lemma}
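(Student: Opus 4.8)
The plan is to prove the chain of (in)equalities directly from elementary facts about non-negative integer-valued random variables together with the hypothesis that links the concentration events $\mathcal{E}_t$ to the stopping time $\tau_\delta$. The statement is essentially a tail-sum bound, so the proof should be short and mechanical rather than deep.

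First I would recall the standard identity for the expectation of a non-negative integer-valued random variable,
\begin{align*}
\ex_\vmu[\tau_\delta] = \sum_{t=0}^{+\infty} \pr_\vmu\{\tau_\delta > t\},
\end{align*}
which follows from writing $\tau_\delta = \sum_{t=0}^{\infty} \mathbf 1_{\{\tau_\delta > t\}}$ and exchanging expectation and sum by Tonelli (all terms are non-negative). Since the displayed statement starts the sum at $t=1$, I would note that the $t=0$ term contributes at most $1$, or simply absorb this into the accounting; either way it does not affect the bound because the next step only uses that the tail probabilities are at most $1$.

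Next I would split the sum at the threshold $T_0(\delta)$. For the initial block $t < T_0(\delta)$ I bound each probability $\pr_\vmu\{\tau_\delta > t\} \le 1$, giving a contribution of at most $T_0(\delta)$. For the tail block $t \ge T_0(\delta)$ I would invoke the hypothesis: for such $t$ we have $\mathcal{E}_t \subseteq \{\tau_\delta \le t\}$, which is the same as $\{\tau_\delta > t\} \subseteq \mathcal{E}_t^c$ by contraposition. Monotonicity of probability then yields $\pr_\vmu\{\tau_\delta > t\} \le \pr_\vmu(\mathcal{E}_t^c)$. Summing the two blocks gives exactly
\begin{align*}
\ex_\vmu[\tau_\delta] \le T_0(\delta) + \sum_{t=T_0(\delta)}^{+\infty} \pr_\vmu(\mathcal{E}_t^c),
\end{align*}
which is the claim.

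I do not anticipate a genuine obstacle here, as the argument is a routine tail-sum decomposition; the only point requiring a moment of care is the logical step that turns the set inclusion $\mathcal{E}_t \subseteq \{\tau_\delta \le t\}$ into the probability comparison on the complementary events, and the bookkeeping of the summation index (the off-by-one at $t=0$ and the fact that the first block contains exactly $T_0(\delta)$ terms indexed from $t=1$ to $T_0(\delta)-1$, contributing at most $T_0(\delta)$). The key insight to emphasise is that the hypothesis is precisely what is needed to control the tail: beyond $T_0(\delta)$ the algorithm is guaranteed to have stopped on the good concentration event, so the only way it can still be running is if concentration fails, whose probability we can sum thanks to the earlier bound $\sum_t \pr_\vmu(\mathcal{E}_t^c) \le 2eK/a^2$.
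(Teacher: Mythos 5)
Your proof is correct and is exactly the argument the paper relies on (the lemma is stated without a separate proof precisely because this tail-sum decomposition is routine): the identity $\ex_\vmu[\tau_\delta] = \sum_{t\ge 0} \pr_\vmu\{\tau_\delta > t\}$, bounding the first $T_0(\delta)$ terms by $1$, and using the contrapositive inclusion $\{\tau_\delta > t\} \subseteq \mathcal{E}_t^c$ for $t \ge T_0(\delta)$. You are also right that the paper's displayed equality with the sum starting at $t=1$ has a harmless off-by-one (the tail-sum formula starts at $t=0$), which your accounting absorbs without affecting the stated bound.
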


We now present briefly the steps of the proof for the stopping time upper bound before stating our main theorem on the sample complexity of Algorithm~\ref{alg:the_algorithm}. These steps are inexact and should be regarded as a guideline and not as rigorous computations. A full proof of our results can be found in the appendices (Appendix~\ref{sec:concentration} for concentration results, ~\ref{sec:tracking} for tracking and \ref{sec:sample_complexity_proof} for the main sample complexity proof).
We simplify the presentation by supposing that $i_t=i^*(\vmu)$ throughout (the main proof will show this may fail only $o(t)$ rounds). For $t< \tau_\delta$, under concentration event $\mathcal{E}_t$,
\begin{align*}
\beta(t,\delta)
&\geq \inf_{\vlambda \in \neg i^*(\vmu)} \sum_{k\in[K]} N_t^k d(\hat{\mu}_t^k,\lambda^k) & \text{(stopping condition)}\\
&\geq \inf_{\vlambda \in \neg i^*(\vmu)} \sum_{s\in[t]}\sum_{k\in[K]} w_s^k d(\hat{\mu}_t^k,\lambda^k) - KD & \text{(tracking)}\\
&\geq \inf_{\vlambda \in \neg i^*(\vmu)} \sum_{s\in[t]}\sum_{k\in[K]} w_s^k d(\hat{\mu}_{s-1}^k,\lambda^k) - \mathcal{O}(\sqrt{t\log(t)}) \ . & \text{(concentration)}
\end{align*}
The first term is now the infimum of a sum of losses, $\inf_{\vlambda \in \neg i^*(\vmu)} \sum_{s\in[t]}\ell_s^\vlambda(\vlambda)$. We use the regret property of the $\vlambda$-player's algorithm on those losses, then we introduce optimistic values $U_s^k$ such that for $\xi^k \in \{\mu^k,\hat{\mu}_{s-1}^k\}$ we have $\ex_{\vlambda\sim\q_s}d(\xi^k, \lambda^k) \leq U_s^k \leq \ex_{\vlambda\sim\q_s}d(\xi^k, \lambda^k) + \mathcal{O}(\sqrt{1/s})$.
\begin{align*}
\inf_{\vlambda \in \neg i^*(\vmu)} \sum_{s\in[t]}\sum_{k\in[K]} w_s^k d(\hat{\mu}_{s-1}^k,\lambda^k)
&\geq  \sum_{s\in[t]} \mathop{\mathbb{E}}_{\vlambda\sim\q_s}\sum_{k\in[K]}w_s^k d(\hat{\mu}_{s-1}^k,\lambda^k) - R_t^\vlambda & \text{(regret $\vlambda$)}\\
&\geq \sum_{s\in[t]} \sum_{k\in[K]}w_s^k U_s^k - \mathcal{O}(\sqrt{t}) - R_t^\vlambda  & \text{(optimism)} \\
&\geq \max_{k\in[K]} \sum_{s\in[t]} U_s^k - R_t^k - \mathcal{O}(\sqrt{t}) - R_t^\vlambda  & \text{(regret $\w$)}\\
&\geq \max_{k\in[K]} \sum_{s\in[t]} \mathop{\mathbb{E}}_{\vlambda\sim\q_s}\hspace{-3pt} d(\mu^k,\lambda^k) - R_t^k - \mathcal{O}(\sqrt{t}) - R_t^\vlambda & \text{(optimism)}
\end{align*}
Finally, $\nicefrac{1}{t}\sum_{s\in[t]}\ex_{\vlambda\sim\q_s}$ is itself the expectation of another distribution on $\pr(\neg i^*(\vmu))$. Hence
\begin{align*}
\max_{k\in[K]} \sum_{s\in[t]} \ex_{\vlambda\sim\q_s}d(\mu^k,\lambda^k) \geq t\inf_\q \max_k\ex_{\vlambda\sim\q}d(\mu^k,\lambda^k) = t D_\vmu \ .
\end{align*}
Putting these inequalities together, we get finally an inequality on such a $t<\tau_\delta$.
The exact result we obtain is the following Theorem, proved in Appendix~\ref{sec:sample_complexity_proof}.

\begin{theorem}\label{th:sample_complexity}
Under Assumption~\ref{ass:compact_M}, the sample complexity of Algorithm~\ref{alg:the_algorithm} on model $\vmu\in\mathcal{M}$ is
\begin{align*}
\ex_\vmu[\tau_\delta] &\leq T_0(\delta) + \frac{2eK}{a^2}
\quad\text{with}\quad
T_0(\delta) =  \max \{t\in\N:  t \leq \frac{\beta(t,\delta)}{D_\vmu} + C_\vmu(R_t^\vlambda + R_t^k + h(t))\}\: ,
\end{align*}
where $C_\vmu$ depends on $\vmu$ and $\mathcal{M}$ and $h(t) = \mathcal{O}(\sqrt{t\log(t)})$. See Appendix~\ref{sec:sample_complexity_proof} for an exact definition.
\end{theorem}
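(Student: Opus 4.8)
The plan is to instantiate Lemma~\ref{lem:decomposition_of_sample_complexity} with the concentration events $\mathcal{E}_t$, whose failure probabilities already sum to at most $2eK/a^2$. It then suffices to produce the stated $T_0(\delta)$ with the property $\mathcal{E}_t \subseteq \{\tau_\delta \leq t\}$ for every $t \geq T_0(\delta)$, since the lemma turns this into $\ex_\vmu[\tau_\delta] \leq T_0(\delta) + 2eK/a^2$. I would establish the inclusion by its contrapositive: fix $t < \tau_\delta$ and work on $\mathcal{E}_t$. Not having stopped means in particular that the GLRT for the correct answer has failed, so $\beta(t,\delta) \geq \inf_{\vlambda \in \neg i^*(\vmu)} \sum_{k} N_t^k d(\hat\mu_t^k,\lambda^k)$; the goal is to drive this lower bound down to $t D_\vmu$ minus lower-order slack, which yields exactly the inequality defining $T_0(\delta)$.

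The core is to make the displayed chain of Section~\ref{sec:proof_sketch} rigorous, step by step. First, evaluating at the infimising $\vlambda$ and using the tracking guarantee $N_t^k \geq \sum_{s\leq t} w_s^k - (K-1)$ together with $d \leq D$ replaces the counts $N_t^k$ by the cumulative plays $\sum_{s} w_s^k$ at an additive, $t$-independent cost of order $KD$. Second, $L$-Lipschitzness of $d(\cdot,\lambda^k)$ lets me swap the terminal estimate $\hat\mu_t^k$ for the online estimates $\hat\mu_{s-1}^k$; summing the per-round deviations $|\hat\mu_{s-1}^k-\mu^k| \leq \sqrt{2\sigma^2\widehat W((1+a)\log t)/N_{s-1}^k}$ granted on $\mathcal{E}_t$ contributes the $\mathcal{O}(\sqrt{t\log t})$ term. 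The leading quantity is now $\inf_{\vlambda}\sum_{s}\ell_s^\vlambda(\vlambda)$, to which I apply the $\vlambda$-regret bound (using that the loss is linear in $\q$, so the comparator point mass is optimal), then the optimism sandwich $\ex_{\vlambda\sim\q_s}d(\xi^k,\lambda^k) \leq U_s^k \leq \ex_{\vlambda\sim\q_s}d(\xi^k,\lambda^k)+\mathcal{O}(\sqrt{1/s})$, then the $k$-regret bound to pass from the mixed play $\w_s$ to the best fixed arm $\max_k \sum_s U_s^k$, and finally the upper side of the sandwich with $\xi^k=\mu^k$. Since $\frac1t\sum_{s}\q_s$ is again a distribution in $\pr(\neg i^*(\vmu))$, the value is lower bounded by $t\inf_\q\max_k \ex_{\vlambda\sim\q}d(\mu^k,\lambda^k)$, which equals $t D_\vmu$ by the minimax identity for the game. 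Collecting all slacks into $h(t)=\mathcal{O}(\sqrt{t\log t})$ gives $\beta(t,\delta) \geq t D_\vmu - R_t^\vlambda - R_t^k - h(t)$, and dividing by $D_\vmu$ produces the claim with $C_\vmu$ of order $1/D_\vmu$ (absorbing $L,D,\sigma^2$).

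The main obstacle is the simplifying assumption $i_t = i^*(\vmu)$ used throughout the sketch. Because the regret algorithms are the per-answer instances $\mathcal{A}^k_{i_t},\mathcal{A}^\vlambda_{i_t}$, the sums $\sum_{s\in[t]}$ above are really restricted to $\{s : i_s = i^*(\vmu)\}$, and $R_t^k,R_t^\vlambda$ bound the regret only over that subsequence; I must show the complementary rounds number $o(t)$ so the leading $tD_\vmu$ term survives and the mismatched rounds fold into $h(t)$. The crux is that optimism, rather than forced exploration, supplies the needed coverage: the branch $f(t{-}1)/N_{t-1}^k$ of $U_t^k$ inflates the optimistic value of under-sampled arms, so the $k$-player (which maximises $\sum_k w^k U_s^k$) together with tracking forces every $N_t^k$ to grow, the confidence intervals $[\alpha_t^k,\beta_t^k]$ to shrink, and hence $\tilde\vmu_{t-1}$ to carry the correct answer on all but $o(t)$ rounds. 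A secondary, more routine, task is verifying the optimism sandwich itself: the upper side follows from convexity of $d(\cdot,\lambda^k)$ (its maximum over $[\alpha_t^k,\beta_t^k]$ is attained at an endpoint, and $\mu^k,\hat\mu_{s-1}^k$ lie in this interval on $\mathcal{E}_t$), while the lower side follows from $L$-Lipschitzness and the interval width, whose summed contribution is again $\mathcal{O}(\sqrt{t\log t})$. For the concrete learners (AdaHedge, Follow-the-Perturbed-Leader, the Online Gradient Descent ensemble) one has $R_t^k,R_t^\vlambda=\mathcal{O}(\sqrt{t\log t})$, so $T_0(\delta)\approx \log(1/\delta)/D_\vmu$ as $\delta\to 0$, recovering asymptotic optimality.
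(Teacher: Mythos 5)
Your treatment of the rounds where $i_s = i^*(\vmu)$ is essentially the paper's own proof: Lemma~\ref{lem:decomposition_of_sample_complexity} with the events $\mathcal{E}_t$, then the chain stopping condition $\to$ tracking $\to$ concentration $\to$ $\vlambda$-regret $\to$ optimism $\to$ $k$-regret $\to$ optimism $\to$ mixed-strategy minimax, with all slacks folded into $h(t)$. The genuine gap is in the part you yourself flag as the crux: the rounds with $i_s \neq i^*(\vmu)$. Your mechanism --- optimism plus tracking ``forces every $N_t^k$ to grow'', the boxes $[\alpha_t^k,\beta_t^k]$ shrink, hence $\tilde\vmu_{t-1}$ gives the correct answer on all but $o(t)$ rounds --- is not the paper's argument and does not close quantitatively. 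By Lemma~\ref{lem:different_answers_means_a_small_N_v2} (which rests on the Chernoff-separation Assumption~\ref{ass:chernoff_separation}), a wrong candidate answer only certifies \emph{some} arm $j$ with $f(t-1)/N_{t-1}^j \geq \varepsilon/2$; to rule out wrong answers you would need \emph{every} count to exceed $2f(t)/\varepsilon$. But the coverage extractable from the exploration branch $f(t-1)/N_{t-1}^k$ of $U_t^k$ via the $k$-player's regret bound is only of order $f(t)/D$ per arm: the collected optimistic value $\sum_s\sum_k w_s^k U_s^k$ is legitimately of order $t$ (this is exactly what produces the $tD_\vmu$ term on correct rounds), so an under-sampled arm $k_0$ creates no contradiction until $\max_k\sum_s U_s^k \gtrsim t f(t)/N_t^{k_0}$ exceeds order $tD$, i.e.\ until $N_t^{k_0} \lesssim f(t)/D$. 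Since $\varepsilon$ can be much smaller than $D$, ``all intervals shrink enough'' is not provable this way.

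The paper never shows that all counts grow; it bounds the number of wrong-answer rounds directly by a contradiction whose key ingredient you are missing: on a round with $i_s = i \neq i^*(\vmu)$, the true parameter $\vmu$ itself lies in the alternative set $\neg i$. Therefore
\begin{align*}
\sum_{s \leq t,\, i_s \neq i^*}\sum_{k=1}^K w_s^k\, d(\hat\mu_{s-1}^k,\mu^k)
\;\geq\; \sum_{i \neq i^*}\,\inf_{\vlambda \in \neg i}\sum_{s \leq t,\, i_s = i}\sum_{k=1}^K w_s^k\, d(\hat\mu_{s-1}^k,\lambda^k) \:,
\end{align*}
and under $\mathcal{E}_t$ the left-hand side is only polylogarithmic ($\leq f(t)(K^2 + 2K\log(t/K))$, by concentration together with the tracking bounds of Appendix~\ref{sec:tracking}). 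Meanwhile the saddle-point/regret property applied on the subsequence $\{s : i_s = i\}$ lower-bounds the right-hand side by $\max_k \sum_{s: i_s=i} U_s^k$ minus regret and slack, and --- fixing the arm $j$ that is under-sampled at the \emph{last} wrong round, whose count is monotone, so that $U_s^j \geq C_b\varepsilon/2$ on all earlier wrong rounds --- this max grows \emph{linearly} in $n_i(t)$. Polylog $\geq$ linear then yields the bound \eqref{eq:number_rounds_with_correct_answer_is_linear} on $t - n_{i^*}(t)$. This is also visible in the constants: the paper's $C_\vmu$ contains $1/(C_b\varepsilon)$ from this counting argument, not merely the $1/D_\vmu$ (absorbing $L, D, \sigma^2$) that your proposal produces. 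Without this argument your proof does not go through.
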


The forms of $h(t)$ and of $T_0(\delta)$ depend on the particular algorithm but we now show how an inequality of that type translates into $T_0(\delta)$. The next lemma is a consequence of the concavity of $t\mapsto\sqrt{t\log t}$.

\begin{lemma}
Suppose that $t\in\R$ verifies the equation $t - C\sqrt{t\log t} \leq \frac{\log 1/\delta}{D_\vmu}$. Then for $T^*_\delta = \frac{\log 1/\delta}{D_\vmu}$,
\begin{align*}
t \leq \frac{\log 1/\delta}{D_\vmu}  \bigg(  1 + C\sqrt{\frac{\log T^*_\delta}{T^*_\delta}} \frac{1}{1 - C\frac{1+\log T^*_\delta}{2\sqrt{T^*_\delta\log T^*_\delta}}} \bigg) \: .
\end{align*}
\end{lemma}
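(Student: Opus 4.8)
The plan is to linearise the concave function $\phi(t) = \sqrt{t\log t}$ at the point $T^*_\delta$ and then solve the resulting affine inequality for $t$ explicitly.

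First I would record the two facts that drive the argument. (i) For $t > 1$ the map $\phi(t) = \sqrt{t\log t}$ is concave: a direct computation gives $\phi''(t) = -\frac{\log^2 t + 1}{4\,t\,\log t\,\sqrt{t\log t}} < 0$. (ii) Its derivative is $\phi'(t) = \frac{1+\log t}{2\sqrt{t\log t}}$. Concavity means the graph of $\phi$ lies below its tangent line at $T^*_\delta$, i.e. $\phi(t) \le \phi(T^*_\delta) + \phi'(T^*_\delta)(t - T^*_\delta)$ for every $t > 1$; this is the inequality I want, since I need an upper bound on $\phi(t)$ in order to upper bound $t$.

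Next I would feed this tangent-line bound into the hypothesis, rewritten as $t \le T^*_\delta + C\phi(t)$ with $T^*_\delta = \log(1/\delta)/D_\vmu$. Substituting yields $t \le T^*_\delta + C\phi(T^*_\delta) + C\phi'(T^*_\delta)(t - T^*_\delta)$. Collecting the $t$-terms gives $t\bigl(1 - C\phi'(T^*_\delta)\bigr) \le T^*_\delta\bigl(1 - C\phi'(T^*_\delta)\bigr) + C\phi(T^*_\delta)$, and dividing by $1 - C\phi'(T^*_\delta)$ produces $t \le T^*_\delta + \frac{C\phi(T^*_\delta)}{1 - C\phi'(T^*_\delta)}$. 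Inserting $\phi(T^*_\delta) = \sqrt{T^*_\delta\log T^*_\delta}$ and $\phi'(T^*_\delta) = \frac{1+\log T^*_\delta}{2\sqrt{T^*_\delta\log T^*_\delta}}$, and factoring $T^*_\delta$ out of the numerator via $\sqrt{T^*_\delta\log T^*_\delta}/T^*_\delta = \sqrt{\log T^*_\delta/T^*_\delta}$, reproduces the claimed bound exactly.

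The only genuine subtlety is the division in the last step: it preserves the direction of the inequality only when $1 - C\phi'(T^*_\delta) > 0$, that is, when $C\,\frac{1+\log T^*_\delta}{2\sqrt{T^*_\delta\log T^*_\delta}} < 1$. This is precisely the quantity appearing in the denominator of the statement, so the bound is meaningful exactly in the regime where that denominator is positive. Since $\phi'(T^*_\delta) \to 0$ as $\delta \to 0$ (equivalently $T^*_\delta \to \infty$), this positivity holds for all sufficiently small $\delta$, which is the relevant regime. I expect this positivity bookkeeping, rather than any part of the computation, to be the main point to state with care.
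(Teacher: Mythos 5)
Your proof is correct and is exactly the argument the paper has in mind: the paper gives no detailed proof, saying only that the lemma ``is a consequence of the concavity of $t\mapsto\sqrt{t\log t}$'', and your tangent-line linearisation at $T^*_\delta$ followed by solving the resulting affine inequality is precisely that concavity argument made explicit. Your remark that the division step requires $1 - C\frac{1+\log T^*_\delta}{2\sqrt{T^*_\delta\log T^*_\delta}} > 0$ (true for $\delta$ small enough) is a worthwhile piece of bookkeeping that the paper leaves implicit.
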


\section{Practical Implementations}
Next we discuss instantiating no-regret learners. We consider a hierarchy of computational oracles:
\begin{enumerate}[nolistsep]
\item Min aka Best-Response oracle: obtain for any $i\in\mathcal{I}$, $\w\in\triangle_K$ and $\vxi\in\Theta^K$ a minimizer in $\neg i$ of
$\vlambda\mapsto\sum_{k\in[K]} w^k d(\xi^k, \lambda^k)$ .
\item Max-min aka Game-Solving oracle: obtain for any $i\in\mathcal{I}$ and $\vxi\in\Theta^K$ a vector $\w^*\in \triangle_K $ such that there is a Nash equilibrium $(\w^*, \q^*) \in \triangle_K \times \pr(\neg i)$ for the zero-sum game with reward $d(\xi^k,\lambda^k)$ with the $k$-player using the mixed strategy $\w^*$.
\item Max-max-min oracle: for any confidence region $\mathcal{C}=[a_1,b_1]\times \ldots \times [a_K,b_K]$, obtain $(\vmu^+,i^+,\w^+)$ with $(\vmu^+, i^+) = \argmax_{\vxi\in\mathcal{C},i\in\mathcal{I}} \sup_{\w\in\triangle_K}\inf_{\vlambda\in\neg i} \sum_{k=1}^K w^k d(\xi^k, \lambda^k)$ and $\w^+$ a $k$-player strategy of a Nash equilibrium of the game with reward $d(\mu^{+ k}, \lambda^k)$.
\end{enumerate}

For Minimum Threshold all oracles can be evaluated in closed form in $O(K)$ time, and the same is true for Best Response in Best Arm Identification. Max-min for Best Arm requires binary search \cite{garivier2016optimal} and Max-max-min requires $O(K)$ max-min calls. See \citep{Menard19} for run-time data on Track-and-Stop (max-min oracle) and gradient ascent (min oracle) for Best Arm.
Our approach also extends naturally to min-max and max-min-max oracles, which we plan to incorporate in full detail in our future work.
%\footnote{.}

\subsection{A Learning Algorithm for the $k$-Player vs Best-Response for the $\vlambda$-Player}\label{sec:DaBomb}

In this section the $k$-player plays first, employing a regret minimization algorithm for linear losses on the simplex to produce $\w_t\in\triangle_K$ at time $t$. We pick AdaHedge of \cite{ftl.jmlr}, which runs in $O(K)$ per round and adapts to the scale of the losses. The $\vlambda$-player goes second and can use a zero-regret algorithm: Best-Response. It plays
$%\begin{align*}
\q_t \mbox{ , a Dirac at } \vlambda_t \in \argmin_{\vlambda \in \neg i_t} \sum_{k\in[K]} w_t^k d(\hat{\mu}_{t-1}^k, \lambda^k) \: .
$%\end{align*}

\begin{lemma}
  AdaHedge has regret $R_t^k \le \sqrt{\sum_{s \le t} b_s^2 \ln K} + \max_{s\le t} b_s \del{\frac{4}{3}\ln K + 2}$ where $b_s = \max_k U_s^k - \min_k U_s^k \le \max\set{D, f(s)}$ is the loss scale in round $s$, so that $R_t^k = \mathcal O(\sqrt{t \ln K} \ln t)$. Best-Response has no regret, $R_t^\vlambda \le 0$. The sample complexity is bounded per Theorem~\ref{th:sample_complexity}.
\end{lemma}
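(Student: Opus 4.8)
The plan is to verify the three claims in turn---the AdaHedge regret bound, the no-regret property of Best-Response, and the resulting sample complexity---of which only the first requires real work.

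For the $k$-player, note that the loss $\ell_s^\w(\w) = -\sum_k w^k U_s^k = -\inner{\w}{\u_s}$ (writing $\u_s = (U_s^1,\dots,U_s^K)$) is linear, and the quantity $R_t^k$ appearing in the proof sketch is precisely the Hedge regret $\max_{k}\sum_{s\le t}U_s^k - \sum_{s\le t}\inner{\w_s}{\u_s}$ against the gain vectors $\u_s$. I invoke the AdaHedge guarantee of \cite{ftl.jmlr}, which bounds this regret by $2\sqrt{V_t\ln K}$ plus an additive term of order $(\max_{s\le t}b_s)\ln K$, where $V_t=\sum_{s\le t}v_s$ and $v_s$ is the variance of the round-$s$ loss under the played distribution $\w_s$. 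Since in round $s$ the coordinates of $\u_s$ lie in an interval of width $b_s=\max_k U_s^k-\min_k U_s^k$, the variance of a bounded random variable gives $v_s\le b_s^2/4$; substituting $V_t\le\tfrac14\sum_{s\le t}b_s^2$ turns the leading term into exactly $\sqrt{(\sum_{s\le t}b_s^2)\ln K}$, and collecting the additive contribution of the cited bound yields $\max_{s\le t}b_s(\tfrac43\ln K+2)$.

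It remains to control $b_s$ and the Best-Response regret. Each $U_s^k$ is the maximum of two nonnegative terms: $f(s{-}1)/N_{s-1}^k\le f(s{-}1)$ (using $N_{s-1}^k\ge1$ after the initial round-robin) and $\max_{\xi\in\{\alpha_s^k,\beta_s^k\}}\ex_{\vlambda\sim\q_s}d(\xi,\lambda^k)\le D$ (the arguments, with the endpoints clipped to the box, lie in $[\mu_{\min},\mu_{\max}]$, on which divergences are bounded by $D$ under Assumption~\ref{ass:compact_M}). Hence $0\le U_s^k\le\max\{f(s{-}1),D\}$, so $b_s\le\max\{f(s{-}1),D\}\le\max\{f(s),D\}$ as $f$ is increasing. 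Since $f(s)=\widehat W((1{+}a)(1{+}b)\log s)=\mathcal O(\log s)$, this gives $\max_{s\le t}b_s=\mathcal O(\log t)$ and $\sum_{s\le t}b_s^2=\mathcal O(t(\log t)^2)$, hence $R_t^k=\mathcal O(\sqrt{t\ln K}\,\log t)$. For the $\vlambda$-player, the key point is that it moves second: at round $s$ it already sees $\w_s$ and $\hat{\vmu}_{s-1}$, so its loss $\ell_s^\vlambda(\q)=\ex_{\vlambda\sim\q}\sum_k w_s^k d(\hat{\mu}_{s-1}^k,\lambda^k)$ is known and affine in $\q$, minimised over $\pr(\neg i_s)$ by a Dirac at a best response $\vlambda_s$---exactly what it plays. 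Thus $\ex_{\vlambda\sim\q_s}\ell_s^\vlambda=\min_{\vlambda\in\neg i_s}\ell_s^\vlambda(\vlambda)$, and summing the elementary inequality that the sum of per-round minima is at most the minimum of the sum gives $\sum_{s\le t}\ex_{\vlambda\sim\q_s}\ell_s^\vlambda\le\inf_\vlambda\sum_{s\le t}\ell_s^\vlambda(\vlambda)$, i.e.\ $R_t^\vlambda\le0$.

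Finally, plugging $R_t^\vlambda\le0$ and $R_t^k=\mathcal O(\sqrt{t\ln K}\,\log t)$ into Theorem~\ref{th:sample_complexity} makes the defining inequality for $T_0(\delta)$ read $t\le\beta(t,\delta)/D_\vmu+C_\vmu(h(t)+R_t^k)$ with total correction of order $\sqrt{t}\log t$, which the concavity lemma converts into a finite, asymptotically optimal $T_0(\delta)=\log(1/\delta)/D_\vmu(1+o(1))$. The only genuinely delicate step is the first: matching the variance-based AdaHedge statement of \cite{ftl.jmlr} to our range-based form---in particular the variance-to-range bound $v_s\le b_s^2/4$, which is what produces the constant $1$ (rather than $2$) in front of $\sqrt{\sum_{s\le t}b_s^2\ln K}$---and checking that each per-answer instance $\mathcal{A}^k_{i_t}$ accrues regret only on its own (at most $t$) active rounds, so the stated $t$-round bound dominates.
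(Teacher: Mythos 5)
Your proof is correct and takes essentially the same route as the paper, which gives no separate proof of this lemma: it is a direct application of the AdaHedge guarantee of \cite{ftl.jmlr} to the linear (negated-gain) losses, the bound $0 \le U_s^k \le \max\{D, f(s)\}$ from Assumption~\ref{ass:compact_M} and $N_{s-1}^k \ge 1$, the trivial fact that a second-moving best responder's sum of per-round minima is at most the minimum of the summed losses, and Theorem~\ref{th:sample_complexity}. One harmless factual slip: the cited AdaHedge bound (Theorem 8 of \cite{ftl.jmlr}) has leading term $\sqrt{V_t \ln K}$, not $2\sqrt{V_t \ln K}$, so the lemma's constants already follow from the crude variance-to-range bound $v_s \le b_s^2$; your Popoviciu detour lands on exactly the stated constants only because your recalled constant was doubly conservative, and the argument remains valid since the bound you invoke is weaker than the true one.
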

We expect that in practice the scale converges to $b_s \to D_\vmu$ after a transitory startup phase.

\textbf{Computational complexity:} one best-response oracle call per time step.

\subsection{Learning Algorithms for the $\vlambda$-Player vs Best Response for the $k$-Player}

Using a learner for the $\vlambda$-player removes the need for a tracking procedure. In this section the $k$-player goes second and uses Best-Response, with zero regret, i.e.\  $k_t = \argmax_{k\in[K]} U_t^k$ (see Algorithm~\ref{alg:the_algorithm}). After playing $\q_t\in \pr(\neg i_t)$, the $\vlambda$-player suffers loss $\ex_{\vlambda \sim \q_t} d(\hat\mu_{t-1}^{k_t}, \lambda^{k_t})$.

Most existing regret minimization algorithms do not apply since the function $\lambda \mapsto d(\mu, \lambda)$ is not convex in general and the action set $\neg i_t$ is also not convex. The challenge is to come up with an algorithm able to play distributions with only access to a best-response oracle.

\paragraph{Follow-The-Perturbed-Leader.}

Follow-The-Perturbed-Leader can sample points from a distribution on $\pr(\neg i)$ by only using best-response oracle calls on $\neg i$. The version we use here incorporates all the information available to the $\vlambda$-player: the loss of $\vlambda\in\neg i_t$ will be $d(\hat{\mu}_{t-1}^{k_t}, \lambda^{k_t})$ where the only unknown quantity is $k_t$. Let $\sigma_t\in \R^K$ be a random vector with independent exponentially distributed coordinates. The idea is that the distribution $\q_t$ played by the $\vlambda$-player should be the distribution of
\begin{align*}
\argmin_{\vlambda \in\neg i_t} \sum_{s=1}^{t-1} d(\hat{\mu}_{s-1}^{k_s},\lambda^{k_s}) + \sum_{k=1}^K \sigma_t^k d(\hat{\mu}_{t-1}^{k}, \lambda^{k}) \: .
\end{align*}
We show in Appendix~\ref{sec:ftpl_proof} that this argmin can be computed by a single best-response oracle call.
However, the $k$-player has to be able to compute the best response to $\q_t$. Since we cannot get the above distribution exactly, we instead take for $\q_t$ an empirical distribution from $t$ samples. A regret bound $R_t^\vlambda = \mathcal{O}(\sqrt{t\log t})$ for that algorithm is in Appendix~\ref{sec:ftpl_proof}. The sample complexity is then bounded by Theorem~\ref{th:sample_complexity}.

\textbf{Computational complexity:} $t$ best-response oracle calls at time step $t$.

\paragraph{Online Gradient Descent.}

While the learning problem for $\vlambda$ is hard in general, in several common cases the sets $\neg i$ have a simple structure. If these sets are unions of a finite number $J$ of convex sets and $\lambda\mapsto d(\mu,\lambda)$ is convex (i.e.\ for Gaussian or Bernoulli arm distributions), then we can use off-the-shelf regret algorithms. One gradient descent learner can be used on each convex set, and these $J$ experts are then aggregated by an exponential weights algorithm. This procedure would have $\mathcal{O}(\sqrt{t})$ regret.
The computational complexity is $J$ (convex) best-response oracle calls per time step.

\subsection{Optimistic Track-and-Stop.}\label{sec:OTaS}

At stage $t$, this algorithm computes $(\vmu^+,i_t) = \argmax_{\vxi,i} \sup_{\w\in\triangle_K}\inf_{\vlambda\in\neg i} \sum_{k=1}^K w^k d(\xi^k, \lambda^k)$ where $\vxi$ ranges over all points in $\Theta^K$ in a confidence region around $\hat{\vmu}_{t-1}$ and $i\in\mathcal{I}$. Then, the $k$-player plays $\w_t$ such that there exists a Nash equilibrium $(\w_t,\q_t)$ of the game with reward $d({\mu^+}^k, \lambda^k)$.
The proof of its sample complexity bound proceeds slightly differently from the sketch of part~\ref{sec:proof_sketch}, although the ingredients are still the GLRT, concentration, optimism and game-solving. The proof of the following lemma can be found in appendix~\ref{sec:ftpl_proof}.

\begin{lemma}
Take $b=1$ in the definition of $f(t)$. Let $h(t) = 2\sqrt{t}D_\vmu + 3L\sqrt{2\sigma^2 f(t)}(K^2 + (2\sqrt{2}+\nicefrac{1}{3})\sqrt{Kt}) + f(t)(K^2 + 2K\log(t/K)) + KD$. Then the expected sample complexity is at most $T_0(\delta) + \frac{2eK}{a^2}$, where $T_0(\delta)$ is the maximal $t\in\N$ such that
$t \leq (\beta(t,\delta) + h(t))/D_\vmu$ .
\end{lemma}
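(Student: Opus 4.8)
The plan is to combine Lemma~\ref{lem:decomposition_of_sample_complexity} with a deterministic lower bound on the GLRT statistic that holds on the concentration event $\mathcal{E}_t$. Concretely, I would show that on $\mathcal{E}_t$ the quantity $\inf_{\vlambda \in \neg i_t} \sum_k N_{t-1}^k d(\hat{\mu}_{t-1}^k, \lambda^k)$ tested at line~6 is at least $t D_\vmu - h(t)$. Since $T_0(\delta)$ is the largest $t$ with $t D_\vmu \le \beta(t,\delta) + h(t)$, for every $t > T_0(\delta)$ we then have $t D_\vmu - h(t) > \beta(t,\delta)$, so the stopping test fires and $\mathcal{E}_t \subseteq \{\tau_\delta \le t\}$; plugging this into Lemma~\ref{lem:decomposition_of_sample_complexity} together with $\sum_t \pr_\vmu(\mathcal{E}_t^c) \le 2eK/a^2$ yields $\ex_\vmu[\tau_\delta] \le T_0(\delta) + 2eK/a^2$. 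Throughout I take $i_t = i^*(\vmu)$, relegating the rounds where this fails to the error budget.

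The core is the chain of lower bounds producing $t D_\vmu - h(t)$. First, cumulative tracking lets me replace $N_{t-1}^k$ by $\sum_{s\le t-1} w_s^k$ up to the $O(K)$ per-arm tracking slack, and since each divergence is at most $D$ this contributes the additive $KD$ term. Second, I linearise around the truth: by $L$-Lipschitzness, $d(\hat{\mu}_{t-1}^k,\lambda^k) \ge d(\mu^k,\lambda^k) - L|\hat{\mu}_{t-1}^k - \mu^k|$, and on $\mathcal{E}_t$ the sub-Gaussian relation $d(\mu,\lambda)\ge \frac{1}{2\sigma^2}(\mu-\lambda)^2$ turns the concentration bound into $|\hat{\mu}_{t-1}^k-\mu^k| \le \sqrt{2\sigma^2 \widehat{W}((1+a)\log t)/N_{t-1}^k}$; multiplying by $\sum_s w_s^k \le N_{t-1}^k + (K-1)$ and summing over $k$ with Cauchy--Schwarz ($\sum_k \sqrt{N_{t-1}^k}\le \sqrt{Kt}$) produces the $L\sqrt{2\sigma^2 f(t)}(\sqrt{Kt}+K^2)$-type term. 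Third, lower bounding the infimum of the resulting sum by the sum of per-round infima, each round $s$ contributes $D_\vmu(\w_s) = \inf_{\vlambda\in\neg i^*(\vmu)}\sum_k w_s^k d(\mu^k,\lambda^k)$.

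The remaining work is to lower bound $\sum_s D_\vmu(\w_s)$ by $t D_\vmu$ up to optimism error. Here $\w_s$ is a $k$-strategy of a Nash equilibrium of the \emph{optimistic} game for the model $\vmu^+_s$ maximising the value over the confidence region $\mathcal{C}_s$ around $\hat{\vmu}_{s-1}$. On $\mathcal{E}_t$, and with $b=1$ in $f$, one checks that $\vmu$ itself lies in $\mathcal{C}_s$ as soon as $(1+a)\log t \le 2(1+a)\log(s-1)$, i.e.\ for all but the first $\sqrt{t}$ rounds; for those good rounds optimism gives $D^+_s \ge D_\vmu$ (as $(\vmu,i^*(\vmu))$ is feasible for the outer maximisation) and the Nash property transfers to the true model via Lipschitzness, $D_\vmu(\w_s) \ge D^+_s - L\sum_k w_s^k|\mu^k-\mu^{+,k}_s| \ge D_\vmu - L\sum_k w_s^k|\mu^k-\mu^{+,k}_s|$. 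The $\le \sqrt{t}$ ``bad'' rounds (where $\vmu\notin\mathcal{C}_s$ or $i_s\ne i^*(\vmu)$) are discarded using $D_\vmu(\w_s)\ge 0$, costing at most $\sqrt{t}\,D_\vmu$; absorbing the offset gives the $2\sqrt{t}\,D_\vmu$ term. Summing the widths $|\mu^k-\mu^{+,k}_s|\lesssim\sqrt{2\sigma^2 f(s)/N_{s-1}^k}$ and treating the second-order remainder of the linearisation as $\lesssim f(s)/N_{s-1}^k$ then yield the remaining $L\sqrt{2\sigma^2 f(t)}\sqrt{Kt}$ and $f(t)(K^2+2K\log(t/K))$ contributions.

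I expect the main obstacle to be exactly these summations against the tracking counts: controlling $\sum_{s\le t}\sum_k w_s^k/\sqrt{N_{s-1}^k}$ and $\sum_{s\le t}\sum_k w_s^k/N_{s-1}^k$ requires an integral comparison (telescoping $1/\sqrt{N}$ to $2\sqrt{N}$ and $1/N$ to $\log N$) combined with the tracking inequality $N_{s-1}^k\ge\sum_{r<s}w_r^k-(K-1)$, with separate care for the regime where $N_{s-1}^k$ is still $O(K)$ (the source of the $K^2$ terms) and a final Cauchy--Schwarz plus concavity step ($\sum_k\log N_t^k\le K\log(t/K)$) to reach the stated constants $2\sqrt2+\tfrac13$. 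A secondary difficulty, which I would fold into the bad-round bookkeeping, is controlling the rounds where the optimistic answer $i_s$ disagrees with $i^*(\vmu)$, since only when $i_s = i^*(\vmu)$ does the Nash inequality over $\neg i_s$ transfer to the tested alternative set $\neg i^*(\vmu)$.
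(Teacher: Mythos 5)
Your overall skeleton (the decomposition lemma plus a deterministic lower bound on the stopping statistic under $\mathcal{E}_t$, concluded via $\sum_t \pr_\vmu(\mathcal{E}_t^c)\le 2eK/a^2$) and your treatment of the ``good'' rounds (tracking giving $KD$, Lipschitz/sub-Gaussian concentration giving the $L\sqrt{2\sigma^2 f(t)}$ terms, per-round infima, and optimism $D_{\vmu^+_s}\ge D_\vmu$ when $\vmu\in\mathcal{C}_s$) match the paper's proof. The genuine gap is in your bad-round bookkeeping: you discard the rounds where $\vmu\notin\mathcal{C}_s$ \emph{or} $i_s\neq i^*(\vmu)$ at a claimed total cost of $\sqrt{t}\,D_\vmu$, asserting there are at most $\sqrt{t}$ of them. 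Only the first kind is controlled this way: on $\mathcal{E}_t$ with $b=1$ one has $\vmu\in\mathcal{C}_s$ for all $s\ge\sqrt{t}$, which accounts for one $\sqrt{t}D_\vmu$. But nothing bounds the number of rounds with $i_s\neq i^*(\vmu)$ by $\sqrt{t}$: the max-max-min oracle returns $i_s=i^*(\vmu^+_s)$ for the value-maximising point $\vmu^+_s$ of the confidence box, and even when $\vmu\in\mathcal{C}_s$ that point can have a different answer than $\vmu$ (in Best Arm, a wide box contains points whose best arm differs from that of $\vmu$ and whose game value is much larger). Early on, while the boxes are wide, this can a priori happen in a non-negligible fraction of rounds, so these rounds cannot simply be assumed to be $O(\sqrt{t})$.

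Bounding those rounds is precisely the first half of the paper's proof and requires a separate information-accumulation argument: when $i_s\neq i^*(\vmu)$, the true $\vmu$ itself lies in $\neg i_s$, so roughly $\sum_k w_s^k d(\hat{\mu}_{s-1}^k,\mu^k) \ge D_{\vmu^+_s} - (\text{Lipschitz error})$, while optimism gives $D_{\vmu^+_s}\ge D_\vmu$ for $s\ge\sqrt{t}$; on the other hand, concentration plus the tracking lemmas bound the cumulative sum $\sum_{s}\sum_k w_s^k d(\hat{\mu}_{s-1}^k,\mu^k)\le f(t)(K^2+2K\log(t/K))$. Combining the two yields that the number of bad rounds is at most $\sqrt{t} + \frac{1}{D_\vmu}\bigl(f(t)(K^2+2K\log(t/K)) + L\sqrt{2\sigma^2 f(t)}(K^2+2\sqrt{2Kt})\bigr)$, and these contributions (multiplied by $D_\vmu$) are exactly the $f(t)(K^2+2K\log(t/K))$ term and one of the three $L\sqrt{2\sigma^2 f(t)}(K^2+2\sqrt{2Kt})$ factors in the stated $h(t)$. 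With your accounting, $h(t)$ would not contain them at all; you instead attribute $f(t)(K^2+2K\log(t/K))$ to a second-order linearisation remainder in the good rounds, which is not where it comes from. Your final sentence correctly identifies this as the difficulty, but ``folding it into the bad-round bookkeeping'' as proposed does not close it.
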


Note: the $K^2$ factors are due to the tracking. We conjecture that they should be $K\log K$ instead.

\textbf{Computational complexity:} one max-max-min oracle call per time step.

This algorithm is the most computationally expensive but has the best sample complexity upper bound, has a simpler proof and works well in experiments where computing the max-max-min oracle is feasible, like the Best Arm and Minimum Threshold problems (see section~\ref{sec:experiments}).

\section{Experiments}\label{sec:experiments}

The goal of our experiments is to empirically validate Algorithm~\ref{alg:the_algorithm} on benchmark problems for practical $\delta$. We use stylised stopping threshold $\beta(\delta, t) = \ln \frac{1+\ln t}{\delta}$ and exploration bonus $f(t) = \ln t$. Both are unlicensed by theory yet conservative in practise (the error frequency is way below $\delta$).
We use the following letter coding to designate sampling rules: \textbf{D} for AdaHedge vs Best-Response as advocated in Section~\ref{sec:DaBomb}, \textbf{T} for Track-and-Stop of \cite{garivier2016optimal}, \textbf{M} for the Gradient Ascent algorithm of \cite{Menard19}, \textbf{O} for Optimistic Track-and-Stop from Section~\ref{sec:OTaS}, \textbf{RR} for uniform, and \textbf{opt} for following the oracle proportions $\w^*(\vmu)$. We also ran all our experiments on a simplification of \textbf{D} that uses a single learner instead of partitioning the rounds according to $i_t$. We omit it from the results, as it was always within a few percent of \textbf{D}. We append \textbf{-C} or \textbf{-D} to indicate whether cumulative ($\vN_t \rightsquigarrow \sum_{s\le t} \w_s$) or direct ($\vN_t \rightsquigarrow t \w_t$) tracking \cite{garivier2016optimal} is employed. We finally note that we tune the learning rate of \textbf{M} in terms of (the unknown) $D_\vmu$.

We perform two series of experiments, one on Best Arm instances from \cite{garivier2016optimal, Menard19}, and one on Minimum Threshold instances from \cite{kaufmann2018sequential}. Two selected experiments are shown in Figure~\ref{fig:selected}, the others are included in Appendix~\ref{appx:experiments}. We contrast the empirical sample complexity with the lower bound $\kl(\delta, 1-\delta)/D_\vmu$, and with a more ``practical'' version, which indicates the time $t$ for which $t = \beta(t, \delta)/D_\vmu$, which is, approximately, the first time at which the GLRT stopping rule crosses the threshold $\beta$.

We see in Figures~\ref{fig:bai} and~\ref{fig:mt} that direct tracking \textbf{-D} has the advantage over cumulative tracking \textbf{-C} across the board, and that uniform sampling \textbf{RR} is sub-optimal as expected. In Figure~\ref{fig:bai} we see that \textbf{T} performs best, closely followed by \textbf{M} and \textbf{O}. Sampling from the oracle weights \textbf{opt} performs surprisingly  poorly (as also observed in \cite[][Table~1]{simchowitz2017simulator}). The main message of Figure~\ref{fig:mt} is that \textbf{T} can be highly sub-optimal. We comment on the reason in Appendix~\ref{appx:exper.mt}.
Asymptotic optimality of \textbf{T} implies that this effect disappears as $\delta \to 0$. However, for this example this kicks in excruciatingly slowly. Figure~\ref{fig:tas.bad} shows that \textbf{T} is still not competitive at $\delta=10^{-20}$. On the other hand, \textbf{O} performs best, closely followed by \textbf{M} and then \textbf{D}. Practically, we recommend using \textbf{O} if its computational cost is acceptable, \textbf{M} if an estimate of the problem scale is available for tuning, and \textbf{D} otherwise.

\begin{figure}
  \centering
  \subfigure[Best Arm for Bernoulli bandit model $\vmu = (0.3, 0.21, 0.2, 0.19, 0.18)$. The oracle weights are $\w^* = (0.34, 0.25, 0.18, 0.13, 0.10)$.\label{fig:bai}]{
    \includegraphics[width=.45\textwidth]{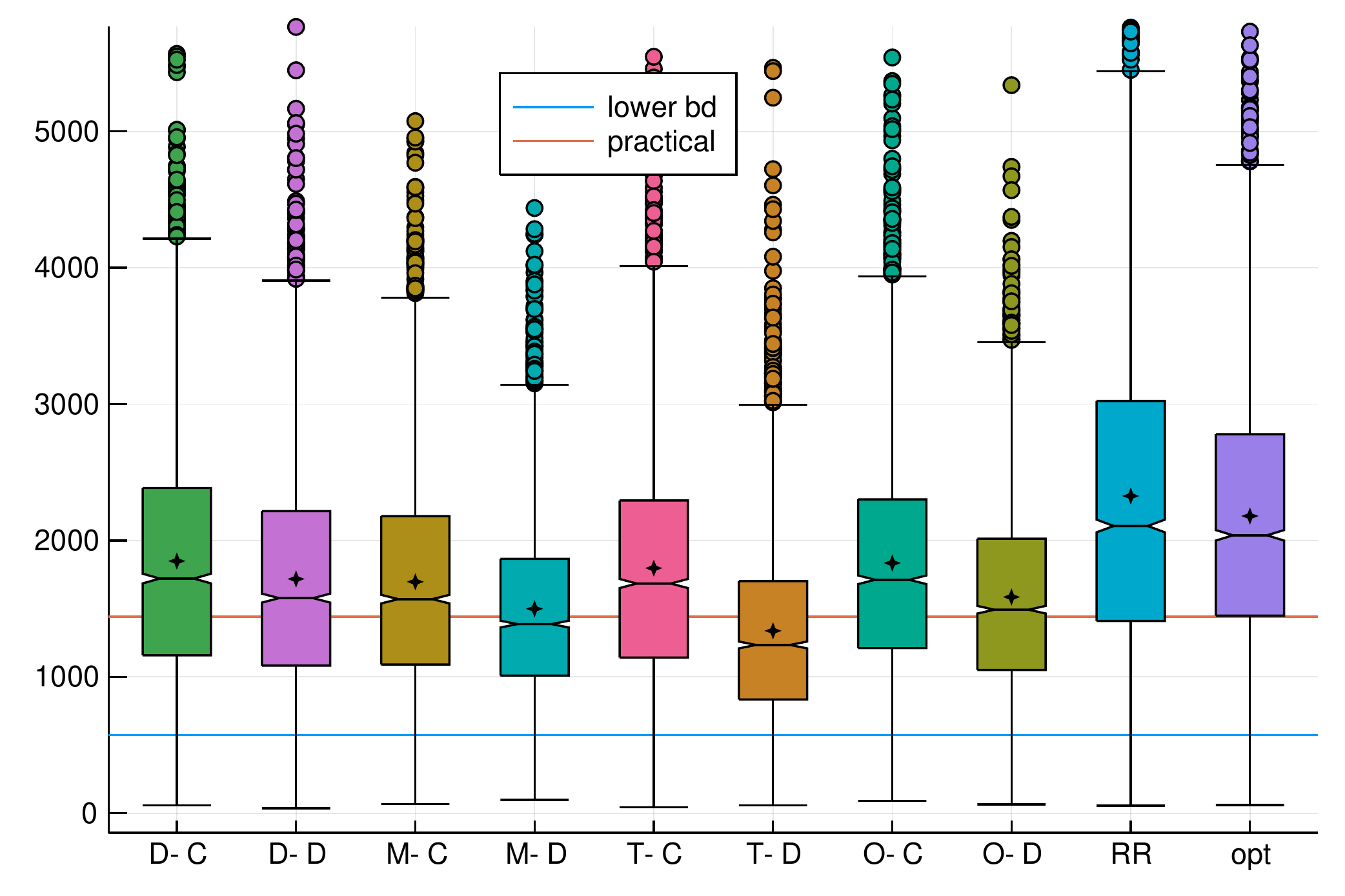}
  }%
  \hfill
  \subfigure[Minimum Threshold for Gaussian bandit model $\vmu = (0.5, 0.6)$ with threshold $\gamma=0.6$, $\w^* = \e_1$. Note the excessive sample complexity of T-C/ T-D.\label{fig:mt}]{
    \includegraphics[width=.45\textwidth]{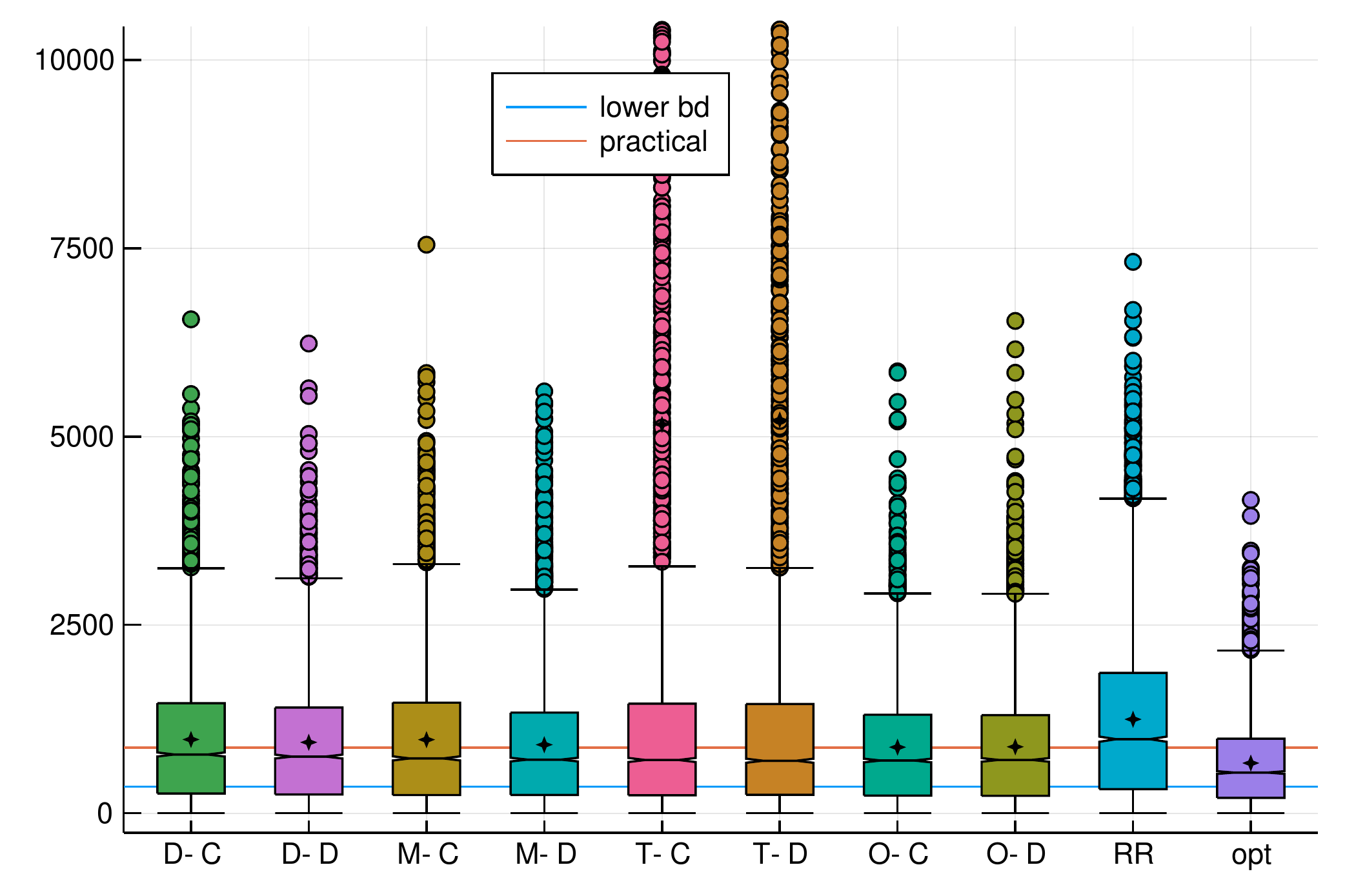}%
  }
  \caption{Selected experiments. In both cases $\delta=0.1$. Plots based on $3000$ and $5000$ runs.
  }\label{fig:selected}
\end{figure}

The gap between \textbf{opt} and \textbf{T} (or \textbf{O}) shows that Track-and-Stop outperforms its design motivation. It is an exciting open problem to understand exactly why, and to optimise for stopping early ($\vN_t/t \approx \w^*(\hat \vmu_t)$) while ensuring optimality ($\wfrac{\ex_\vmu[\vN_\tau]}{\ex_\vmu[\tau]} \approx \w^*(\vmu))$.

\section{Conclusion}

We leveraged the game point of view of the pure exploration problem, together with the use of the optimism principle, to derive algorithms with sample complexity guarantees for non-asymptotic confidence. Varying the flavours of optimism and saddle-point strategies leads to procedures with diverse tradeoffs between sample and computational complexities.
Our sample complexity bounds attain asymptotic optimality while offering guarantees for moderate confidence and the obtained algorithms are empirically sound. Our bounds however most probably do not depend optimally on the problem parameters, like the number of arms $K$. For BAI and the Top-K arms problems, lower bounds with lower order terms as well as matching algorithms were derived by \citep{simchowitz2017simulator}. A generalization of such lower bounds to the general pure exploration problem could shed light upon the optimal complexity across the full confidence spectrum.

The richness of existing saddle-point iterative algorithms may bring improved performance over our relatively simple choices. A smart algorithm could possibly take advantage of the stochastic nature of the losses instead of treating them as completely adversarial.

%\newpage

\subsubsection*{Acknowledgements}
We are grateful to Zakaria Mhammedi and Emilie Kaufmann for multiple generous discussions. Travel funding was provided by INRIA Associate Team ${}^6$PAC. The experiments were carried out on the Dutch national e-infrastructure with the support of SURF Cooperative.

\bibliographystyle{plain}
\bibliography{bib}

\newpage

\appendix

\section{Likelihood Ratio and Exponential Families}\label{sec:GLR}
\subsection{Canonical one-parameter exponential families}

We suppose that all arms have distributions in a canonical one-parameter exponential family. That is, there is a reference measure $\nu_0$ and parameters $\eta_1,\ldots,\eta_K \in\R$ such that the distribution of arm $k\in[K]$ is defined by
\begin{align*}
\nicefrac{d\nu_k}{d\nu_0}(x) \propto e^{\eta_k x - \psi(\eta_k)} \qquad \mbox{with} \qquad \psi(\eta) = \log\ex_{X\sim\nu_0} e^{\eta x} \: .
\end{align*}

Let $\phi$ be the convex conjugate of $\psi$, i.e. $\phi(x) = \sup_{y\in \mbox{dom }\psi} (xy - \psi(y))$. Let $\Theta\subset \R$ be the open interval on which the first derivative $\phi'$ is defined. The Kullback-Leibler divergence between the distributions of the exponential family with means $\mu$ and $\lambda$ in $\Theta$ is
\[
d(\mu, \lambda) = \phi(\mu) - \phi(\lambda) - (\mu-\lambda)\phi'(\lambda) \: .
\]

A distribution $\nu$ is said to be $\sigma^2$-sub-Gaussian if for all $u\in\R$,
$
\log \ex_{X\sim\nu}e^{u(X-\ex_{X\sim\nu}[X])} \leq \frac{\sigma^2}{2} u^2 
$
.
A canonical one-parameter exponential family has all distributions sub-Gaussian with constant $\sigma^2$ iff for all $\mu, \lambda \in \Theta$, it verifies $d(\mu,\lambda) \geq \frac{1}{2\sigma^2}(\mu-\lambda)^2$.

\subsection{The Generalized log-likelihood ratio}

The generalized log-likelihood ratio between the whole model space $\mathcal{M}$ and a subset $\Lambda\subseteq \mathcal{M}$ is
\begin{align*}
\mbox{GLR}_t^\mathcal{M}(\Lambda) = \log \frac{\sup_{\tilde{\vmu} \in \mathcal{M}} \mathcal{L}_{\tilde{\vmu}}(X_1,\ldots,X_t)}{\sup_{\vlambda \in \Lambda} \mathcal{L}_{\vlambda}(X_1,\ldots,X_t)} \: .
\end{align*}

In the case of a canonical one-parameter exponential family, the likelihood of the model with means $\vmu$ is
\begin{align*}
\mathcal{L}_\vmu(X_1,\ldots,X_t) = \prod_{s=1}^t \exp(\phi'(\mu^{k_s})(X_s - \mu^{k_s}) + \phi(\mu^{k_s})) d\nu_0(X_s)
\end{align*}
For $\vxi, \vlambda \in \mathcal{M}$ two mean vectors,
\begin{align*}
\log \frac{\mathcal{L}_\vxi(X_1,\ldots,X_t)}{\mathcal{L}_\vlambda(X_1,\ldots,X_t)}
= \sum_{s=1}^t d(X_s, \lambda^{k_s}) - d(X_s, \xi^{k_s})
= \sum_{k=1}^K N_t^k [d(\hat{\mu}_t^k, \lambda^k) - d(\hat{\mu}_t^k, \xi^k)] \: .
\end{align*}
The maximum likelihood estimator $\tilde{\vmu}_t$ corresponding to the data $X_1,\ldots,X_t$ is
\begin{align*}
\tilde{\vmu}_t = \argmin_{\vlambda \in\mathcal{M}} \sum_{k=1}^K N_t^k d(\hat{\mu}_t^k, \lambda^k) \: .
\end{align*}
The GLR for set $\Lambda$ is
\begin{align*}
\mbox{GLR}_t^\mathcal{M}(\Lambda)
&= \argmin_{\vlambda \in\Lambda} \sum_{k=1}^K N_t^k d(\hat{\mu}_t^k, \lambda^k) - \argmin_{\vlambda \in\mathcal{M}} \sum_{k=1}^K N_t^k d(\hat{\mu}_t^k, \lambda^k)\\
&= \argmin_{\vlambda \in\Lambda} \sum_{k=1}^K N_t^k d(\hat{\mu}_t^k, \lambda^k) - \sum_{k=1}^K N_t^k d(\hat{\mu}_t^k, \tilde{\mu}_t^k) \: .
\end{align*}

\section{Concentration Lemmas}\label{sec:concentration}

\subsection{Concentration bounds}

For $x>0$, let $\widehat{W}(x) = x + \log(x)$. Let $W_{-1}$ be the negative branch of the Lambert W function and for $x\geq 1$, $\overline{W}(x) = -W_{-1}(-e^{-x})$. Then
\begin{itemize}
\item For $x,y\geq 1$, $x-\log x \geq y \Leftrightarrow x \geq \overline{W}(y)$ .
\item For $x>1$, $\widehat{W}(x) \leq \overline{W}(x) \leq \widehat{W}(x) + \min\{\frac{1}{2}, \frac{1}{\sqrt{x}}\}$ .
\end{itemize}

\begin{lemma}[\citep{garivier2013informational}]
Let $Y_1^k,\ldots, Y_t^k$ be i.i.d. random variables in a canonical one-parameter exponential family with mean $\mu^k$. Then for $\alpha>0$,
\begin{align*}
\pr_\vmu\left\{ \exists s \leq t, s d(\frac{1}{s}\sum_{r=1}^s Y^k_r, \mu^k) \geq \alpha \right\} \leq 2e\log(t)e^{-(\alpha - \log \alpha)} \: .
\end{align*}
\end{lemma}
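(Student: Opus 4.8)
The plan is to rewrite the scaled divergence as the logarithm of a likelihood-ratio martingale and then control its running supremum by a peeling (time-slicing) argument.

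First I would use the exponential-family identity
\[
s\,d(\hat\mu_s,\mu^k) \;=\; \sup_{\eta} \log M_s(\eta), \qquad M_s(\eta) := \exp\!\big((\eta-\eta_0) S_s - s(\psi(\eta)-\psi(\eta_0))\big),
\]
where $S_s=\sum_{r=1}^s Y_r^k$, $\hat\mu_s = S_s/s$, $\eta_0=\phi'(\mu^k)$ is the true natural parameter, and the supremum is attained at the MLE $\hat\eta_s=\phi'(\hat\mu_s)$. For each fixed $\eta$, the process $M_\cdot(\eta)$ is exactly the likelihood ratio of parameter $\eta$ against $\eta_0$, hence a nonnegative martingale with $\ex_\vmu[M_s(\eta)]=1$. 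Thus $\{s\,d(\hat\mu_s,\mu^k)\ge\alpha\}=\{\sup_\eta M_s(\eta)\ge e^\alpha\}$, and a union bound splits the event into an upper deviation ($\hat\mu_s\ge\mu^k$) and a symmetric lower deviation; this accounts for the leading factor $2$, and I treat the upper one.

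Next I would run the peeling over $s$. The obstacle is that the running supremum ranges over both time $s\le t$ and the parameter $\eta$ (through the MLE), whereas Ville's/Doob's maximal inequality only controls the running supremum of a single fixed-$\eta$ martingale. Peeling resolves the $\eta$-supremum: partition $\{1,\dots,t\}$ into $J$ geometric slices $[s_j,s_{j+1})$ with ratio $\rho=s_{j+1}/s_j$, and on slice $j$ set the threshold $x_j$ by $s_{j+1}\,d(x_j,\mu^k)=\alpha$ and the tuned parameter $\eta_j=\phi'(x_j)\ge\eta_0$. By monotonicity of $x\mapsto d(x,\mu^k)$ on $[\mu^k,\infty)$, whenever some $s\in[s_j,s_{j+1})$ triggers $s\,d(\hat\mu_s,\mu^k)\ge\alpha$ with $\hat\mu_s\ge\mu^k$ we get $\hat\mu_s\ge x_j$, i.e.\ $S_s\ge s x_j$, and therefore
\[
\log M_s(\eta_j)\;\ge\; s\,d(x_j,\mu^k)\;\ge\; s_j\,d(x_j,\mu^k)\;=\;\alpha\,\tfrac{s_j}{s_{j+1}}\;=\;\tfrac{\alpha}{\rho}.
\]
Applying Doob's maximal inequality to the single martingale $M_\cdot(\eta_j)$ over $s\le s_{j+1}$ then bounds the slice-$j$ probability by $e^{-\alpha/\rho}$.

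Finally I would union bound over the $J=\mathcal{O}(\log t/\log\rho)$ slices, giving a bound of order $\tfrac{\log t}{\log\rho}\,e^{-\alpha/\rho}$ per side, and optimize the ratio. Writing $\rho=1+\epsilon$ and minimizing $\epsilon^{-1}e^{-\alpha/(1+\epsilon)}$ over $\epsilon$ yields $\epsilon\approx 1/\alpha$, which collapses the bound to $e\,\alpha\,\log(t)\,e^{-\alpha}=\log(t)\,e\,e^{-(\alpha-\log\alpha)}$ per side, and hence the advertised $2e\log(t)\,e^{-(\alpha-\log\alpha)}$ after accounting for both tails. I expect the conceptual crux to be the design of the peeling so that one fixed martingale per slice dominates the $\eta$-supremum uniformly across the slice; the remaining steps — counting the slices, controlling the $s\mapsto s_{j+1}$ approximation in the threshold, and tuning $\rho$ to extract the exact constant $2e$ and the $\alpha e^{-\alpha}$ factor — are a careful but routine optimization.
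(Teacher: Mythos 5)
Your proposal is correct and matches the source argument: the paper itself offers no proof of this lemma (it is imported from \citep{garivier2013informational}), and your construction --- the likelihood-ratio martingale representation $s\,d(\hat\mu_s,\mu^k)=\sup_\eta \log M_s(\eta)$, the two-sided split, geometric peeling with the slice-tuned parameter $\eta_j=\phi'(x_j)$ where $s_{j+1}d(x_j,\mu^k)=\alpha$, Ville's/Doob's maximal inequality applied to the single martingale $M_\cdot(\eta_j)$ on each slice, and optimization of the ratio at $\rho\approx 1+1/\alpha$ --- is precisely the proof in that reference. The only bookkeeping caveat is that the number of slices is $\lceil \log t/\log\rho\rceil$, which is why the cited theorem carries a factor $\lceil\alpha\log t\rceil$ rather than $\alpha\log t$; the lemma as restated in this paper (and hence the exact constant your optimization produces) holds up to that integer-rounding slack, which is immaterial for how the bound is used.
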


Remark that for $s\leq t$, the number of pulls verifies $N_s^k\leq t$. For $t>e$ and $\alpha = \overline{W}((1+a)\log t)$  with $a>0$, the lemma above implies
\begin{align*}
\pr_\vmu\left\{ \exists s\leq t,\: N_s^k d(\hat{\mu}_s^k, \mu^k) \geq \overline{W}((1+a) \log t) \right\} \leq 2e\frac{\log t}{t^{1+a}} \: .
\end{align*}

\begin{definition}\label{def:f}
Let $f(s) = \overline{W}((1+a)(1+b) \log s)$.
\end{definition}
For $s\geq t^{1/(1+b)}$, $\overline{W}((1+a)(1+b)\log s)  \geq \overline{W}((1+a)\log t)$, and when the event above happens,
\begin{align*}
d(\hat{\mu}_s^k, \mu^k) \leq \frac{f(s)}{N_s^k} \: . %%%%%%%%, && \mu^k \in \hat{\mu}_s^k \pm \sqrt{2\sigma^2 \frac{f(s)}{N_s^k}} \: .
\end{align*}

\subsection{Main concentration event}

Concentration event for $t\geq 3$:
\begin{align*}
\mathcal{E}_t
&=
\left\{
	\forall s \leq t, \forall k\in[K]\:
	N_s^k d(\hat{\mu}_s^k, \mu^k) \leq \overline{W}((1+a) \log t)
\right\}
\end{align*}

\begin{lemma}\label{lem:probability_of_E_t^c}
\begin{align*}
\forall t\geq 3,\: \pr_\vmu(\mathcal{E}_t^c) \leq 2eK\frac{\log t}{t^{1+a}} \: ,
&& \sum_{t=3}^{+\infty}\pr_\vmu(\mathcal{E}_t^c) \leq \frac{2eK}{a^2} \: .
\end{align*}
\end{lemma}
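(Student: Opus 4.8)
The plan is to prove the two assertions in turn, deriving the second from the first. For the pointwise bound I would start from the single-arm concentration consequence stated just before Definition~\ref{def:f}: for each fixed $k\in[K]$,
\[
\pr_\vmu\left\{\exists s\leq t:\ N_s^k\, d(\hat{\mu}_s^k,\mu^k)\geq \overline{W}((1+a)\log t)\right\}\leq 2e\frac{\log t}{t^{1+a}},
\]
which itself follows from the maximal inequality of \citep{garivier2013informational} together with the observation that $N_s^k\leq t$ for all $s\leq t$. Since $\mathcal{E}_t^c$ is exactly the union over $k\in[K]$ of these per-arm events (with a strict inequality in place of $\geq$, hence a subset of the $\geq$ event), a union bound over the $K$ arms gives $\pr_\vmu(\mathcal{E}_t^c)\leq 2eK\frac{\log t}{t^{1+a}}$. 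This is the only place the product structure of $\mathcal{E}_t$ enters, and it is routine.

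For the summed bound it suffices, after pulling out the constant $2eK$, to show $\sum_{t=3}^{\infty}\frac{\log t}{t^{1+a}}\leq \frac{1}{a^2}$, which I would establish by comparison with an integral. Differentiating $g(x)=\frac{\log x}{x^{1+a}}$ gives $g'(x)=\frac{1-(1+a)\log x}{x^{2+a}}$, so $g$ increases on $(0,e^{1/(1+a)})$ and decreases on $(e^{1/(1+a)},\infty)$; crucially $e^{1/(1+a)}<e<3$ for every $a>0$, so the summand is nonincreasing for all $t\geq 3$. Hence for every $t\geq 4$ one has $g(t)\leq\int_{t-1}^{t}g(x)\,dx$, and summing from $t=4$ yields $\sum_{t=4}^{\infty}g(t)\leq\int_{3}^{\infty}g(x)\,dx$, so that $\sum_{t=3}^{\infty}g(t)\leq g(3)+\int_{3}^{\infty}g(x)\,dx$.

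The remaining step is to evaluate the integral and bound the resulting expression. Integration by parts (with $u=\log x$ and $dv=x^{-1-a}\,dx$) gives $\int_{3}^{\infty}g(x)\,dx=\frac{1}{3^{a}}\left(\frac{\log 3}{a}+\frac{1}{a^2}\right)$, so the target reduces to the elementary inequality $\frac{\log 3}{3^{1+a}}+\frac{1}{3^{a}}\left(\frac{\log 3}{a}+\frac{1}{a^2}\right)\leq\frac{1}{a^2}$. Writing $y=a\log 3$ and clearing $a^2$, this becomes $\frac{y^2}{3\log 3}+y+1\leq e^{y}$, which holds because $\frac{1}{3\log 3}<\frac12$ and $e^{y}\geq 1+y+\frac{y^2}{2}$ for $y\geq 0$. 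Combining with the pointwise bound then gives $\sum_{t=3}^{\infty}\pr_\vmu(\mathcal{E}_t^c)\leq 2eK\sum_{t=3}^{\infty}\frac{\log t}{t^{1+a}}\leq\frac{2eK}{a^2}$, as claimed.

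I expect the only genuinely delicate point to be the sum-to-integral comparison. Because $g$ is \emph{not} monotone on all of $[2,\infty)$ — its maximum sits in $(2,e)$ for small $a$ — one cannot simply dominate the whole series by $\int_{2}^{\infty}g$ term by term, which is the tempting but incorrect shortcut. Keeping the first term $g(3)$ separate and only integrating from $3$, as above, sidesteps the non-monotonicity, at the cost of a slightly less clean final inequality; the main thing to verify carefully is that this inequality holds for \emph{all} $a>0$ rather than merely in the $a\to 0$ regime, but this reduces to the standard convexity lower bound on $e^{y}$.
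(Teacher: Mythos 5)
Your proof is correct, and its skeleton is the same as the paper's: a union bound over the $K$ arms on top of the maximal inequality of \citep{garivier2013informational} for the pointwise bound, then a sum-to-integral comparison for $\sum_{t\geq 3}\log t\,/\,t^{1+a}$. The difference is in how that comparison is justified, and here your version is genuinely more careful. The paper simply asserts $\sum_{t=3}^{\infty}\frac{\log t}{t^{1+a}}\leq\int_{1}^{\infty}\frac{\log x}{x^{1+a}}\,dx=\frac{1}{a^{2}}$, and, as you correctly flag, this is not a routine monotone comparison: with $g(x)=\log x/x^{1+a}$, the mode of $g$ lies in $(2,e)$, and for small $a$ one has both $g(3)>\int_{2}^{3}g$ and $g(3)>\int_{1}^{2}g$ (numerically, as $a\to 0$, $g(3)\to 0.366$ while $\int_{2}^{3}g\to 0.363$ and $\int_{1}^{2}g\to 0.240$), so neither of the natural term-by-term dominations $g(t)\leq\int_{t-1}^{t}g$ nor $g(t)\leq\int_{t-2}^{t-1}g$ is valid at $t=3$. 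The paper's displayed inequality is nevertheless true, because the deficit at $t=3$ is absorbed by the slack $\int_{1}^{3}g-g(3)\geq 0$, but establishing that positivity requires exactly the kind of computation you perform. Your route --- bounding the tail $\sum_{t\geq 4}g(t)$ by $\int_{3}^{\infty}g$ using monotonicity on $[3,\infty)$, evaluating $\int_{3}^{\infty}g=\frac{1}{3^{a}}\bigl(\frac{\log 3}{a}+\frac{1}{a^{2}}\bigr)$, and reducing the remaining claim to $\frac{y^{2}}{3\log 3}+y+1\leq e^{y}$ with $y=a\log 3$, which holds for all $a>0$ since $3\log 3>2$ --- is airtight and can be read as the missing justification for the paper's one-line step. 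In short: same approach and same constants, but your proof closes a small gap that the paper glosses over.
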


\begin{proof}
\begin{align*}
\sum_{t=3}^{+\infty}\pr_\vmu(\mathcal{E}_t^c)
\leq 2eK \sum_{t=3}^{+\infty} \frac{\log t}{t^{1+a}} 
\leq 2eK \int_{x=1}^{+\infty} \frac{\log x}{x^{1+a}}dx
=    \frac{2eK}{a^2}\: .
\end{align*}
\end{proof}

\section{Tracking}\label{sec:tracking}

\begin{lemma}\label{lem:tracking}
Let $(\w_s)_{s\in\N} \in \triangle_K^{\N}$ be vectors in the simplex with $\w_1,\ldots,\w_K$ equal to the basis vectors. We recursively define for $t\in\N$,
\begin{align*}
& & \forall k\in[K],\: N_K^k &= 1 \: , \\
\forall t\geq K+1, \: k_t   &= \argmin_k \frac{N_{t-1}^k}{\sum_{s=1}^t w_s^k} \: ,
& \forall k\in[K],\: N_t^k &= \sum_{s=1}^{t} \mathbb{I}\{k_s=k\} \: .
\end{align*}
The tie-breaking for the $\argmin$ is arbitrary. Then for all $t\geq K$, all $k\in[K]$, 
\begin{align*}
\sum_{s=1}^t w_s^k - (K-1) \leq N_t^k \leq \sum_{s=1}^t w_s^k + 1 \: .
\end{align*}
\end{lemma}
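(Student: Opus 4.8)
The plan is to prove the two-sided bound $\sum_{s=1}^t w_s^k - (K-1) \le N_t^k \le \sum_{s=1}^t w_s^k + 1$ by induction on $t$, treating the upper and lower bounds separately but using the pigeonhole structure of the $\argmin$ selection rule. Write $S_t^k \df \sum_{s=1}^t w_s^k$ for the cumulative target allocation, so that $\sum_k S_t^k = t$ (since each $\w_s \in \triangle_K$) and also $\sum_k N_t^k = t$ (one pull per round after initialisation). The key observation, which I would establish first, is that the chosen arm $k_t = \argmin_k N_{t-1}^k / S_t^k$ is always the arm currently \emph{most in deficit} relative to its target, in the precise sense that $N_{t-1}^{k_t} / S_t^{k_t} \le N_{t-1}^j / S_t^j$ for all $j$. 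Combined with the invariant $\sum_k N_{t-1}^k = t-1 < t = \sum_k S_t^k$, this forces $N_{t-1}^{k_t} < S_t^{k_t}$, i.e.\ the pulled arm was strictly below its target before the pull.

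For the \textbf{upper bound} $N_t^k \le S_t^k + 1$, I would argue as follows. If arm $k$ is not pulled at round $t$, then $N_t^k = N_{t-1}^k$ and $S_t^k \ge S_{t-1}^k$, so the bound is inherited from the inductive hypothesis. If arm $k = k_t$ is pulled, then by the deficit observation above $N_{t-1}^{k} < S_t^{k}$, hence $N_t^k = N_{t-1}^k + 1 < S_t^k + 1$, giving the (weak) inequality $N_t^k \le S_t^k + 1$. The base case $t = K$ holds since $N_K^k = 1 = S_K^k$ (the first $K$ rounds play the basis vectors, so each target equals $1$).

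For the \textbf{lower bound} $N_t^k \ge S_t^k - (K-1)$, I would combine the upper bound with the two conservation identities. Since $\sum_j N_t^j = \sum_j S_t^j = t$, we can write $N_t^k - S_t^k = -\sum_{j \ne k}(N_t^j - S_t^j)$; applying the upper bound $N_t^j - S_t^j \le 1$ to each of the $K-1$ terms $j \ne k$ yields $N_t^k - S_t^k \ge -(K-1)$, which is exactly the claim. This is the cleanest route: the lower bound follows for free from the upper bound plus conservation, so no separate induction is needed for it.

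The main obstacle is the deficit observation, i.e.\ showing $N_{t-1}^{k_t} < S_t^{k_t}$ rigorously from the $\argmin$ rule. The subtlety is handling the ratios when some $S_t^j$ could be small; one must check that the averaging argument is not derailed by near-zero denominators. The clean way is: if $N_{t-1}^{k_t} \ge S_t^{k_t}$ held, then by the minimality of the ratio for $k_t$ we would get $N_{t-1}^j \ge S_t^j$ for every $j$ with $S_t^j > 0$, and summing over all $j$ would yield $t-1 = \sum_j N_{t-1}^j \ge \sum_j S_t^j = t$, a contradiction. I would also verify at the outset that $S_t^k > 0$ for all $k$ and $t \ge K$ (guaranteed by the basis-vector initialisation), so the ratios are well defined and the argument goes through.
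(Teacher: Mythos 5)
Your proposal is correct and follows essentially the same route as the paper: an induction for the upper bound that splits on whether arm $k$ is pulled (with the pulled-arm case handled by the pigeonhole fact that the $\argmin$ arm is below its cumulative target, since $\sum_j N_{t-1}^j = t-1 < t = \sum_j S_t^j$), and the lower bound deduced for free from the upper bound plus the two conservation identities. The only cosmetic difference is that the paper phrases the key pigeonhole step via the mediant inequality $\min_j N_{t-1}^j/S_t^j \le (t-1)/t \le 1$ rather than your proof by contradiction; these are the same argument.
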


\begin{proof}
Let $\Sigma_t^k = \sum_{s=1}^t w_s^k$. We start by proving the inequality on the right by induction. At $t=K$, for all $k$, $N_K^k = \Sigma_K^k = 1$.

Suppose now that $N_s^i\leq \Sigma_s^i+1$ for all $i\in[K]$ and all $s\leq t-1$. We prove that it also holds for $t$.

If $i\neq k_t$, by the induction hypothesis, $N_{t-1}^i \leq \Sigma_{t-1}^i + 1$. We obtain $N_t^i = N_{t-1}^i \leq \Sigma_{t-1}^i + 1 \leq \Sigma_t^i +1$.

If $i=k_t$, we use that $\sum_{j=1}^K N_{t-1}^j = t-1$ and $\sum_{j=1}^K \Sigma_t^j = t$ to say that $\min_j \frac{N_{t-1}^j}{\Sigma_t^j} \leq \frac{t-1}{t}\leq 1$. Since $k_t$ realizes that minimum, we have
\begin{align*}
\frac{N_t^{k_t}}{\Sigma_t^{k_t}} &= \frac{N_{t-1}^{k_t}}{\Sigma_t^{k_t}} + \frac{1}{\Sigma_t^{k_t}} \leq 1 + \frac{1}{\Sigma_t^{k_t}} \: .
\end{align*}
The inequality is proved for all $k\in[K]$ at $t$.

The lower bound for $N_t^i$ follows from the fact that $\sum_{i=1}^K N_t^i = \sum_{i=1}^K \Sigma_t^i = t$.

\begin{align*}
N_t^i = t - \sum_{j\neq i} N_t^j \geq t - \sum_{j\neq i} (\Sigma_t^j + 1) = \Sigma_t^i - (K-1) \: .
\end{align*}
\end{proof}

\begin{lemma}\label{lem:sum_w/sqrt(N)}
For $t \geq t_0 \geq 1$ and $(x_s)_{s\in[t]}$ non-negative real numbers such that $\sum_{s=1}^{t_0-1}x_s>0$,
\begin{align*}
\sum_{s=t_0}^t \frac{x_s}{\sqrt{\sum_{r=1}^s x_r}}
&\leq 2\sqrt{\sum_{s=1}^t x_s} - 2\sqrt{\sum_{s=1}^{t_0-1} x_s} \: .\\
\sum_{s=t_0}^t \frac{x_s}{\sum_{r=1}^s x_r}
&\leq \log(\sum_{s=1}^t x_s) - \log(\sum_{s=1}^{t_0-1} x_s) \: .
\end{align*}
\end{lemma}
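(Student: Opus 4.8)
The plan is to introduce the partial sums $S_s = \sum_{r=1}^s x_r$ and to prove each inequality by a single per-term estimate followed by telescoping. Since the $x_s$ are non-negative, the sequence $(S_s)$ is non-decreasing, and the hypothesis $\sum_{s=1}^{t_0-1} x_s > 0$ means $S_{t_0-1} > 0$, hence $S_s \geq S_{t_0-1} > 0$ for every $s \geq t_0-1$. This guarantees that all denominators appearing below are strictly positive, which is the only place the positivity hypothesis is needed.

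For the first inequality I would bound each summand by a telescoping difference. Writing $x_s = S_s - S_{s-1}$ and using $\sqrt{S_s} + \sqrt{S_{s-1}} \leq 2\sqrt{S_s}$ (valid because $S_{s-1} \leq S_s$), I get
\[
2\bigl(\sqrt{S_s} - \sqrt{S_{s-1}}\bigr) = \frac{2(S_s - S_{s-1})}{\sqrt{S_s}+\sqrt{S_{s-1}}} \geq \frac{S_s - S_{s-1}}{\sqrt{S_s}} = \frac{x_s}{\sqrt{S_s}}.
\]
Summing over $s = t_0,\ldots,t$ the right-hand side telescopes to $2\sqrt{S_t} - 2\sqrt{S_{t_0-1}}$, which is exactly the claimed bound.

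For the second inequality I would invoke the elementary logarithmic bound $\log u \geq 1 - 1/u$ for $u > 0$, applied with $u = S_s/S_{s-1} \geq 1$, which yields
\[
\frac{x_s}{S_s} = 1 - \frac{S_{s-1}}{S_s} \leq \log \frac{S_s}{S_{s-1}} = \log S_s - \log S_{s-1}.
\]
Summing over $s = t_0,\ldots,t$ telescopes to $\log S_t - \log S_{t_0-1}$, as required.

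There is essentially no obstacle beyond bookkeeping: both halves reduce to the same two-step pattern (a convexity-type per-term inequality plus a telescoping cancellation). If anything deserves care, it is confirming that $S_{s-1}>0$ throughout the summation range so that the divisions and logarithms are well defined, which as noted follows immediately from $S_{t_0-1}>0$ and the monotonicity of $(S_s)$.
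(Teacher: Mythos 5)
Your proof is correct and follows essentially the same route as the paper's: a per-term bound of $\frac{x_s}{\sqrt{S_s}}$ by $2(\sqrt{S_s}-\sqrt{S_{s-1}})$ and of $\frac{x_s}{S_s}$ by $\log S_s - \log S_{s-1}$, followed by telescoping. The only cosmetic difference is that you obtain the square-root bound by rationalizing the difference while the paper invokes the tangent-line (concavity) inequality for $\sqrt{\cdot}$, and your bound $\log u \geq 1 - 1/u$ is exactly the concavity bound for $\log$ that the paper cites.
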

\begin{proof}
By concavity of $x\mapsto\sqrt{x}$, we have $\sqrt{x} \leq \sqrt{x+y} - \frac{y}{2\sqrt{x+y}}$. We obtain $\frac{x_s}{\sqrt{\sum_{r=1}^s x_r}} \leq 2(\sqrt{\sum_{r=1}^s x_r} - \sqrt{\sum_{r=1}^{s-1} x_r})$ . The sum is then telescopic. The second result uses the concavity of $x\mapsto\log(x)$.
\end{proof}

\begin{lemma}\label{lem:concentration_term_w/sqrt(N)}
Let $(\w_s)_{s\in\N} \in \triangle_K^{\N}$ be vectors in the simplex. Let $N_t$ be defined as in Lemma~\ref{lem:tracking}. Then
\begin{align*}
\sum_{k=1}^K\sum_{s=K}^t \frac{w_s^k}{\sqrt{N_s^k}} \leq K^2 + 2\sqrt{Kt}
\quad \mbox{and} \quad
\sum_{k=1}^K\sum_{s=K+1}^t \frac{w_s^k}{\sqrt{N_{s-1}^k}} \leq K^2 + 2\sqrt{2Kt}
\: .
\end{align*}
\end{lemma}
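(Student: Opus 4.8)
The plan is to transfer both sums from the realised pull counts $N_s^k$ to the idealised cumulative weights $\Sigma_s^k = \sum_{r=1}^s w_r^k$, and then to telescope exactly as in the proof of Lemma~\ref{lem:sum_w/sqrt(N)}. The two facts I would take from Lemma~\ref{lem:tracking} are, for $s \ge K$, the lower bounds $N_s^k \ge \Sigma_s^k - (K-1)$ and $N_s^k \ge 1$. The identity that produces the $\sqrt{Kt}$ scale is $\sum_{k=1}^K \Sigma_t^k = \sum_{s=1}^t \sum_{k=1}^K w_s^k = t$, so that by Cauchy--Schwarz (concavity of $\sqrt{\cdot}$) one has $\sum_{k=1}^K \sqrt{\Sigma_t^k} \le \sqrt{K \sum_{k=1}^K \Sigma_t^k} = \sqrt{Kt}$. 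If $N_s^k$ coincided with $\Sigma_s^k$, applying Lemma~\ref{lem:sum_w/sqrt(N)} coordinatewise with $x_s = w_s^k$ and then this inequality would immediately deliver $2\sqrt{Kt}$; all the genuine work is in absorbing the additive tracking offset $K-1$.

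To handle the offset I would split, for each arm $k$, the range of $s$ into an early and a late regime according to whether $\Sigma_{s-1}^k$ has yet exceeded its relevant threshold. In the early regime I discard the denominator via $N_s^k \ge 1$ (resp.\ $N_{s-1}^k \ge 1$), so each term is at most $w_s^k$; the weights over this initial segment sum to at most $\Sigma_{s_0}^k - \Sigma_{K}^k$ where $s_0$ is the last early index, which is $O(K)$ per arm. Since the initialisation makes $w_1,\dots,w_K$ the basis vectors, $\Sigma_K^k = 1$, and the per-arm contribution is in fact at most $K$, so the early regime contributes at most $K^2$ in total --- this is precisely the source of the additive $K^2$.

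In the late regime I set $y_s = \Sigma_s^k - (K-1)$, a non-decreasing sequence with increments $y_s - y_{s-1} = w_s^k$, and use $N_s^k \ge y_s$. For the first sum I then bound $\tfrac{w_s^k}{\sqrt{N_s^k}} \le \tfrac{w_s^k}{\sqrt{y_s}} \le 2(\sqrt{y_s} - \sqrt{y_{s-1}})$ by the same concavity inequality $\sqrt{a}-\sqrt{b} \ge \tfrac{a-b}{2\sqrt a}$ used in Lemma~\ref{lem:sum_w/sqrt(N)}; telescoping over the late segment gives $\le 2\sqrt{y_t} \le 2\sqrt{\Sigma_t^k}$, and summing over $k$ yields $2\sqrt{Kt}$. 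For the second sum the denominator is the lagged $N_{s-1}^k \ge y_{s-1}$; restricting the late regime to $y_{s-1} \ge 1$ (i.e.\ $\Sigma_{s-1}^k \ge K$) forces $y_s = y_{s-1} + w_s^k \le 2 y_{s-1}$ because $w_s^k \le 1$, whence $\tfrac{1}{\sqrt{y_{s-1}}} \le \tfrac{\sqrt 2}{\sqrt{y_s}}$ and the same telescoping produces the extra factor $\sqrt 2$, giving $2\sqrt 2\,\sqrt{\Sigma_t^k}$ per arm and $2\sqrt{2Kt}$ after summing. Combining the two regimes gives the two claimed bounds.

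The main obstacle is entirely the mismatch between $N_s^k$ and $\Sigma_s^k$ created by the tracking offset $K-1$: it is harmless once $\Sigma_s^k$ is large (the multiplicative distortion tends to $1$, or to $\sqrt 2$ for the lagged index), but it must be paid in full during the startup phase, which is exactly what the additive $K^2$ records. The only point requiring care is the boundary between the regimes --- one must ensure $y_{s-1} > 0$ (resp.\ $y_{s-1}\ge 1$) before invoking the telescoping, and check the edge terms at $s=K$ (resp.\ $s=K+1$), where $N=\Sigma=1$ --- but these are routine.
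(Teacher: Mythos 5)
Your proposal is correct and follows essentially the same route as the paper's proof: a per-arm split into an early regime (where $N\geq 1$ bounds each term by $w_s^k$, yielding the additive $K^2$) and a late regime where the tracking lower bound $N_s^k \geq \sum_{r=1}^s w_r^k - (K-1)$ licenses the telescoping square-root argument of Lemma~\ref{lem:sum_w/sqrt(N)}, followed by concavity over the $K$ arms to produce $2\sqrt{Kt}$. The only cosmetic difference is in the lagged sum: the paper extracts the $\sqrt{2}$ from $N_{s-1}^k \geq \tfrac{1}{2}N_s^k$ (valid once $N_{s-1}^k \geq 1$), whereas you extract it from $\Sigma_s^k - (K-1) \leq 2\bigl(\Sigma_{s-1}^k - (K-1)\bigr)$; both rest on the same observation that one-step increments are at most $1$.
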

\begin{proof}
We first prove the inequality on the left.
Let $t_0^k$ be the first time such that $\sum_{r=1}^{t_0^k-1} w_r^k > K-1$. Then
\begin{align*}
\sum_{s=K}^t \frac{w_s^k}{\sqrt{N_s^k}}
= \sum_{s=K}^{t_0^k-1} \frac{w_s^k}{\sqrt{N_s^k}}
+ \sum_{s=t_0^k}^t \frac{w_s^k}{\sqrt{N_s^k}} 
\leq \sum_{s=K}^{t_0^k-1} w_s^k
+ \sum_{s=t_0^k}^t \frac{w_s^k}{\sqrt{N_s^k}} 
&\leq K + \sum_{s=t_0^k}^t \frac{w_s^k}{\sqrt{N_s^k}}\: .
\end{align*}
By the tracking property of Lemma~\ref{lem:tracking},
\begin{align*}
\sum_{s=t_0^k}^t \frac{w_s^k}{\sqrt{N_s^k}}
&\leq \sum_{s=t_0^k}^t \frac{w_s^k}{\sqrt{\sum_{r=1}^s w_r^k - (K-1)}} \: .
\end{align*}
By Lemma~\ref{lem:sum_w/sqrt(N)},
\begin{align*}
\sum_{s=t_0^k}^t \frac{w_s^k}{\sqrt{\sum_{r=1}^s w_r^k - (K-1)}}
&\leq 2\sqrt{\sum_{s=1}^t w_s^k - (K-1)} - 2 \sqrt{\sum_{s=1}^{t_0^k} w_s^k - (K-1)}
\leq 2\sqrt{\sum_{s=1}^t w_s^k}\: .
\end{align*}
Putting all these computations together, we obtain
\begin{align*}
\sum_{k=1}^K\sum_{s=K}^t \frac{w_s^k}{\sqrt{N_s^k}}
&\leq K^2 + 2\sum_{k=1}^K\sqrt{\sum_{s=1}^t w_s^k}
\leq K^2 + 2\sqrt{K t} \: .
\end{align*}

We now prove the inequality on the right. For $s$ such that $N_{s-1}^k \geq 1$, we have $N_{s-1}^k \geq \frac{1}{2} N_s^k$. We remark that this is true for all $s\geq K$, apply it to the sum starting from $t_0^k$, and obtain the wanted inequality.
\end{proof}

\section{Sample complexity proof}\label{sec:sample_complexity_proof}

\subsection{Upper confidence bounds}\label{sec:ucb}

At stage $t$, we compute the empirical mean vector $\hat{\vmu}_{t-1}$ and the mixed strategies of the two players $\w_t$ and $\q_t$. %$= \argmin_{\vlambda\in\neg i_t} \sum_{k=1}^K w_t^k d(\hat{\vmu}_{t-1}^k, \lambda^k)$. 
A concentration event ensures that for all $k\in[K]$, both $\mu^k$ and $\hat{\mu}_{t-1}^k$ belong to an interval $[a_t^k, b_t^k]$.
We introduce two types of coordinate-wise upper confidence bounds (UCB). The first type is a vector $U_t \in\R^K$ such that
\begin{align*}
\forall k\in[K], \forall \xi^k \in [a_t^k, b_t^k], \: U_t^k
&\geq \ex_{\vlambda\sim\q_t} d(\xi^k, \lambda^k) \: .
\end{align*}
The second type is a function of $\vlambda$, $U_t^k(\vlambda)$ such that
\begin{align*}
\forall k\in[K], \forall \vlambda\in\mathcal{M}, \forall \xi^k \in [a_t^k, b_t^k], \: U_t^k(\vlambda)
&\geq  d(\xi^k, \lambda^k) \: .
\end{align*}

Let $[\alpha_t^k, \beta_t^k]$ be the intersection of $[\mu_{\min},\mu_{\max}]$ and the interval $\{\xi\in\Theta \: : \: d(\hat{\mu}_{t-1}^k, \xi) \leq \frac{f(t-1)}{N_{t-1}^k}\}$, where $f$ is defined in Definition~\ref{def:f} in section \ref{sec:concentration}.

Let $[a_t^t,b_t^k] = [\mu_{\min},\mu_{\max}] \cap [\hat{\mu}_{t-1}^k - \sqrt{2\sigma^2\frac{f(t-1)}{N_{t-1}^k}},\hat{\mu}_{t-1}^k + \sqrt{2\sigma^2\frac{f(t-1)}{N_{t-1}^k}}]$. 

We consider the following UCBs.
\begin{enumerate}
\item ${U_t^k}^{(1)} = \max\left\{\frac{f(t-1)}{N_{t-1}^k}, \max_{\xi \in [\alpha_t^k, \beta_t^k]} \ex_{\vlambda\sim\q_t}d(\xi, \lambda^k)  \right\}$.
\item ${U_t^k}^{(2)} = \max\left\{\frac{f(t-1)}{N_{t-1}^k}, \max_{\xi \in [a_t^k, b_t^k]} \ex_{\vlambda\sim\q_t}d(\xi, \lambda^k) \right\}$.
\item ${U_t^k}^{(1)}(\vlambda) = \max\left\{\frac{f(t-1)}{N_{t-1}^k}, \max_{\xi \in [\alpha_t^k, \beta_t^k]} d(\xi, \lambda^k)  \right\}$.
\item ${U_t^k}^{(2)}(\vlambda) = \max\left\{\frac{f(t-1)}{N_{t-1}^k}, \max_{\xi \in [a_t^k, b_t^k]} d(\xi, \lambda^k) \right\}$.
\end{enumerate}

The UCBs indexed by $(2)$ are larger but potentially easier to compute that the ones indexed by $(1)$, since $a_t^k$ and $b_t^k$ are easier to compute than $\alpha_t^k$ and $\beta_t^k$. The next lemma simplifies the computation of the UCBs.

\begin{lemma}
In all the UCBs introduced, the maximum over the interval is attained at one of the two extremal points.
\end{lemma}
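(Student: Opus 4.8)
The plan is to reduce all four UCBs to a single statement about a convex function on an interval. Each UCB has the form $\max\{f(t-1)/N_{t-1}^k,\ \max_{\xi\in I}g(\xi)\}$, where $I$ is the confidence interval (either $[\alpha_t^k,\beta_t^k]$ or $[a_t^k,b_t^k]$) and $g(\xi)$ is either $d(\xi,\lambda^k)$ (types (3),(4)) or $\ex_{\vlambda\sim\q_t}d(\xi,\lambda^k)$ (types (1),(2)). The outer maximum with the constant $f(t-1)/N_{t-1}^k$ plays no role in \emph{where} on $I$ the inner maximum is attained, so it suffices to show that in every case the inner map $\xi\mapsto g(\xi)$ attains its maximum over $I$ at one of the two endpoints of $I$.

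The key observation I would use is that for each fixed $\lambda$, the function $\xi\mapsto d(\xi,\lambda)$ is convex. This follows directly from the exponential-family expression recalled in Appendix~\ref{sec:GLR}, namely $d(\xi,\lambda)=\phi(\xi)-\phi(\lambda)-(\xi-\lambda)\phi'(\lambda)$: as a function of $\xi$ this is $\phi(\xi)$ plus an affine term, and $\phi$, being the convex conjugate of $\psi$, is convex. Hence $\xi\mapsto d(\xi,\lambda)$ is convex on $\Theta$, which settles the types (3) and (4).

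For the types (1) and (2) I would note that convexity is preserved under taking expectations. Concretely, for each fixed value of $\vlambda$ the map $\xi\mapsto d(\xi,\lambda^k)$ is convex by the previous step, and a mixture (here, the expectation over $\vlambda\sim\q_t$) of convex functions is again convex; equivalently, $\xi\mapsto\ex_{\vlambda\sim\q_t}d(\xi,\lambda^k)$ inherits convexity from the integrand. Thus in all four cases $g$ is convex on the relevant interval.

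Finally I would invoke the elementary fact that a convex function on a closed bounded interval attains its maximum at one of the two endpoints, so the inner maximization equals $\max\{g(\alpha_t^k),g(\beta_t^k)\}$ (respectively $\max\{g(a_t^k),g(b_t^k)\}$), which is exactly the claim. I do not anticipate any real obstacle here: the entire content is the convexity of $\phi$ together with stability of convexity under expectation, and the only point worth stating carefully is that the outer $\max$ with the constant term does not move the argmax off the boundary of $I$.
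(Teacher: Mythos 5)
Your proposal is correct and rests on the same structural fact as the paper's proof: writing $d(\xi,\lambda)=\phi(\xi)-\phi(\lambda)-(\xi-\lambda)\phi'(\lambda)$, the map $\xi\mapsto\ex_{\vlambda\sim\q}d(\xi,\lambda^k)$ has increasing derivative $\phi'(\xi)-\ex_{\vlambda\sim\q}\phi'(\lambda^k)$, which is exactly the convexity you invoke. The paper phrases this as ``decreasing then increasing, hence maximum at an extremity,'' while you phrase it as convexity of $\phi$ preserved under mixtures plus the endpoint-maximum property of convex functions on an interval; the two arguments are interchangeable.
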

\begin{proof}
We need to prove that a function of the form $\xi \mapsto \ex_{\vlambda\sim\q}d(\xi, \lambda^k)$ attains its maximum at an extremity of any interval. That function has derivative equal to $\phi'(\xi) - \ex_{\vlambda\sim \q} \phi'(\lambda^k)$. Since $\phi'$ is increasing, that derivative is negative below a point and positive afterwards. Hence the function is decreasing then increasing. We obtain that its maximum is indeed attained on an extremity of the interval.
\end{proof}

\begin{lemma}\label{lem:comparison_of_ucbs}
For all $k\in[K]$, all $t\in\N$, ${U_t^k}^{(1)} \leq {U_t^k}^{(2)}$. Furthermore for all $\vlambda\in\mathcal{M}$, ${U_t^k}^{(1)}(\vlambda) \leq {U_t^k}^{(2)}(\vlambda)$.
\end{lemma}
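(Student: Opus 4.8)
The plan is to reduce both inequalities to a single containment of intervals. First I would observe that within each pair the two UCBs are assembled from the \emph{same} constant $\frac{f(t-1)}{N_{t-1}^k}$ and a maximum of the \emph{same} integrand (either $\xi\mapsto\ex_{\vlambda\sim\q_t}d(\xi,\lambda^k)$ for the unparametrised versions, or $\xi\mapsto d(\xi,\lambda^k)$ at a fixed $\vlambda$ for the parametrised ones); the only difference is the set over which $\xi$ ranges, namely $[\alpha_t^k,\beta_t^k]$ for the $(1)$-versions versus $[a_t^k,b_t^k]$ for the $(2)$-versions. Since a maximum over a larger set can only be larger, it suffices to prove $[\alpha_t^k,\beta_t^k]\subseteq[a_t^k,b_t^k]$, after which taking the outer $\max$ with the common term $\frac{f(t-1)}{N_{t-1}^k}$ preserves the inequality.

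For the containment I would invoke the sub-Gaussian lower bound on the divergence from Assumption~\ref{ass:sub-Gaussian}, namely $d(\mu,\lambda)\ge\frac{1}{2\sigma^2}(\mu-\lambda)^2$. By definition any $\xi\in[\alpha_t^k,\beta_t^k]$ satisfies $\xi\in[\mu_{\min},\mu_{\max}]$ together with $d(\hat{\mu}_{t-1}^k,\xi)\le\frac{f(t-1)}{N_{t-1}^k}$. Chaining the two bounds gives
\[
\frac{1}{2\sigma^2}\bigl(\hat{\mu}_{t-1}^k-\xi\bigr)^2
\;\le\;
d(\hat{\mu}_{t-1}^k,\xi)
\;\le\;
\frac{f(t-1)}{N_{t-1}^k},
\]
so that $\lvert\hat{\mu}_{t-1}^k-\xi\rvert\le\sqrt{2\sigma^2\frac{f(t-1)}{N_{t-1}^k}}$; that is, $\xi$ lies in the symmetric interval centred at $\hat{\mu}_{t-1}^k$ of exactly that half-width. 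Intersecting with $[\mu_{\min},\mu_{\max}]$ recovers precisely the definition of $[a_t^k,b_t^k]$, which establishes the inclusion.

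With the inclusion in hand both inequalities follow immediately and uniformly in $k$, in $t$, and (for the second statement) in $\vlambda\in\mathcal{M}$, because the integrand and the additive constant are identical across the two versions, so nothing but the domain of maximisation enters the comparison. I do not expect a genuine obstacle here: essentially all the content is carried by the sub-Gaussian inequality, which converts the KL-confidence interval $[\alpha_t^k,\beta_t^k]$ into a width bound matching the Euclidean interval $[a_t^k,b_t^k]$. The one point to check carefully is that the two UCBs in each pair really do share their outer $\max$ term and their integrand, so that the domain inclusion alone drives the result.
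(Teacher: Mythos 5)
Your proposal is correct and follows exactly the paper's argument: the paper's proof is the one-line observation that, by the sub-Gaussian Assumption~\ref{ass:sub-Gaussian}, $[\alpha_t^k,\beta_t^k]\subseteq[a_t^k,b_t^k]$, from which the comparison of the UCBs is immediate since they differ only in the domain of maximisation. You have merely spelled out the chaining $\frac{1}{2\sigma^2}(\hat{\mu}_{t-1}^k-\xi)^2\le d(\hat{\mu}_{t-1}^k,\xi)\le\frac{f(t-1)}{N_{t-1}^k}$ that the paper leaves implicit.
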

\begin{proof}
By the sub-Gaussian assumption~\ref{ass:sub-Gaussian}, $[\alpha_t^k, \beta_t^k] \subseteq [a_t^k, b_t^k]$.
\end{proof}

\begin{lemma}
${U_t^k}^{(1)}$ and ${U_t^k}^{(2)}$ verify $\forall \xi \in [\alpha_t^k, \beta_t^k], \: U_t^k \geq \ex_{\vlambda\sim\q_t} d(\xi, \lambda^k)$.
${U_t^k}^{(1)}(\vlambda)$ and ${U_t^k}^{(2)}(\vlambda)$ verify $\forall \xi \in [\alpha_t^k, \beta_t^k], \: U_t^k \geq d(\xi, \lambda^k)$.
\end{lemma}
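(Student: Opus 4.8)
The plan is to observe that each of the four UCBs is, by construction, a maximum one of whose arguments is exactly the divergence quantity to be dominated, so the claimed inequalities follow immediately from the fact that a maximum is never smaller than any single term it ranges over, combined with the interval containment already recorded in the proof of Lemma~\ref{lem:comparison_of_ucbs}.

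First I would dispatch the two type-$(1)$ bounds. The definition of ${U_t^k}^{(1)}$ contains the term $\max_{\xi \in [\alpha_t^k,\beta_t^k]} \ex_{\vlambda\sim\q_t} d(\xi,\lambda^k)$; since a maximum dominates each of its inner values, ${U_t^k}^{(1)} \geq \ex_{\vlambda\sim\q_t} d(\xi,\lambda^k)$ for every $\xi \in [\alpha_t^k,\beta_t^k]$, which is precisely the first claim. Running the identical one-line argument with the plain divergence $d(\xi,\lambda^k)$ in place of its $\q_t$-expectation gives the statement for ${U_t^k}^{(1)}(\vlambda)$.

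Next I would treat the type-$(2)$ bounds. The only subtlety is that their inner maximum ranges over $[a_t^k,b_t^k]$ rather than $[\alpha_t^k,\beta_t^k]$, while we must still certify the inequality for $\xi$ in the smaller interval. The proof of Lemma~\ref{lem:comparison_of_ucbs} already records that, under the sub-Gaussian Assumption~\ref{ass:sub-Gaussian}, $[\alpha_t^k,\beta_t^k] \subseteq [a_t^k,b_t^k]$. Hence any $\xi \in [\alpha_t^k,\beta_t^k]$ also lies in $[a_t^k,b_t^k]$, so ${U_t^k}^{(2)} \geq \max_{\xi' \in [a_t^k,b_t^k]} \ex_{\vlambda\sim\q_t} d(\xi',\lambda^k) \geq \ex_{\vlambda\sim\q_t} d(\xi,\lambda^k)$, and identically for ${U_t^k}^{(2)}(\vlambda)$. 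Alternatively one may simply chain ${U_t^k}^{(2)} \geq {U_t^k}^{(1)}$ from Lemma~\ref{lem:comparison_of_ucbs} with the type-$(1)$ result just proved.

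Candidly, there is no real obstacle here: the statement is a bookkeeping consequence of the definitions together with the nesting of confidence intervals, and the prefactor $\max\{f(t{-}1)/N_{t-1}^k,\cdot\}$ is harmless, as adjoining an extra term can only increase the bound. The one point worth flagging is to invoke Assumption~\ref{ass:sub-Gaussian} explicitly for the type-$(2)$ inclusion, so that the argument is self-contained rather than relying silently on $[a_t^k,b_t^k]$ bracketing $[\alpha_t^k,\beta_t^k]$.
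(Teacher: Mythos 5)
Your proposal is correct and follows exactly the paper's own (very terse) proof: the type-$(1)$ bounds hold by definition since the inner maximum over $[\alpha_t^k,\beta_t^k]$ dominates each point of that interval, and the type-$(2)$ bounds follow by chaining ${U_t^k}^{(2)} \geq {U_t^k}^{(1)}$ from Lemma~\ref{lem:comparison_of_ucbs} (whose proof is precisely the sub-Gaussian interval containment $[\alpha_t^k,\beta_t^k] \subseteq [a_t^k,b_t^k]$ you invoke in your alternative route). Your write-up simply spells out the details the paper leaves implicit.
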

\begin{proof}
It is true for ${U_t^k}^{(1)}$ and ${U_t^k}^{(1)}(\vlambda)$ by definition and true for ${U_t^k}^{(2)}$ and ${U_t^k}^{(2)}(\vlambda)$ by Lemma~\ref{lem:comparison_of_ucbs}.
\end{proof}

The analysis will proceed identically with ${U_t^k}^{(1)}$ or ${U_t^k}^{(2)}$ (resp. ${U_t^k}^{(1)}(\vlambda)$ or ${U_t^k}^{(2)}(\vlambda)$), which will be denoted simply by $U_t^k$ (resp. $U_t^k(\vlambda)$). The following lemma is an immediate consequence of the definition.

\begin{lemma}\label{lem:exploration_term_lower_bound}
All UCBs presented verify $U_t^k \geq \frac{f(t-1)}{N_{t-1}^k}$ (resp. $U_t^k(\vlambda) \geq \frac{f(t-1)}{N_{t-1}^k}$).
\end{lemma}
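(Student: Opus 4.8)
The final statement to prove is Lemma~\ref{lem:exploration_term_lower_bound}, which asserts that all four UCBs satisfy $U_t^k \geq \frac{f(t-1)}{N_{t-1}^k}$ (and the analogous bound for the $\vlambda$-indexed versions). The plan is essentially to unwind the definitions, since each UCB was constructed as an explicit maximum whose first argument is precisely $\frac{f(t-1)}{N_{t-1}^k}$.

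\medskip

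First I would recall the four definitions given in Section~\ref{sec:ucb}. Each of ${U_t^k}^{(1)}$, ${U_t^k}^{(2)}$, ${U_t^k}^{(1)}(\vlambda)$ and ${U_t^k}^{(2)}(\vlambda)$ is written as $\max\left\{\frac{f(t-1)}{N_{t-1}^k}, \, (\text{some inner maximum of a divergence term})\right\}$. Since the maximum of a set is at least each of its elements, and $\frac{f(t-1)}{N_{t-1}^k}$ is explicitly one of the two arguments of each $\max$, the desired inequality is immediate in every case: $U_t^k \geq \frac{f(t-1)}{N_{t-1}^k}$ holds by the elementary fact that $\max\{x,y\} \geq x$.

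\medskip

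Because the quantity $f(t-1)/N_{t-1}^k$ appears identically as the first argument in all four definitions, there is genuinely nothing to separate between the cases, and the inner maximum term (whether it involves $\ex_{\vlambda\sim\q_t}d(\xi,\lambda^k)$ or $d(\xi,\lambda^k)$, and whether over $[\alpha_t^k,\beta_t^k]$ or $[a_t^k,b_t^k]$) plays no role at all in this particular claim. One does not even need the comparison Lemma~\ref{lem:comparison_of_ucbs} or the sub-Gaussian assumption here; those were needed for the \emph{upper} validity of the UCBs, whereas this lemma is a trivial \emph{lower} bound built into the construction.

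\medskip

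There is no real obstacle to anticipate: the statement is, as the text says, ``an immediate consequence of the definition.'' The only thing to be careful about is to state the argument uniformly across all four UCBs rather than writing out four nearly identical lines. I would therefore phrase the proof as a single observation: each UCB is a maximum of two quantities, one of which is $f(t-1)/N_{t-1}^k$, so the bound follows from $\max\{x,y\}\geq x$ applied in each case.
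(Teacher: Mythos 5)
Your proof is correct and matches the paper's own reasoning exactly: the paper states this lemma is ``an immediate consequence of the definition,'' since each of the four UCBs is defined as a maximum whose first argument is precisely $\frac{f(t-1)}{N_{t-1}^k}$, and $\max\{x,y\}\geq x$. Nothing further is needed.
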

This lower bound is the reason the UCBs are computed as the maximum of some expression and $\frac{f(t-1)}{N_{t-1}^k}$. But for ${U_t^k}^{(2)}$ and ${U_t^k}^{(2)}(\vlambda)$, that lower bound is also obtained automatically as soon as $[a_t^k, b_t^k] \subseteq [\mu_{\min}, \mu_{\max}]$. Indeed in that case

${U_t^k}^{(2)} \geq  \min\{d(\hat{\mu}_{t-1}^k-\sqrt{2\sigma^2\frac{f(t-1)}{N_{t-1}^k}}, \hat{\mu}_{t-1}^k), d(\hat{\mu}_{t-1}^k+\sqrt{2\sigma^2\frac{f(t-1)}{N_{t-1}^k}}, \hat{\mu}_{t-1}^k)\}$ .

From the sub-Gaussian assumption, they are both bigger than $\frac{f(t-1)}{N_{t-1}^k}$.

\subsection{Saddle point algorithms}

Let $\Lambda$ be a subset of $\mathcal{M}$.

\begin{definition}\label{def:optimistic_saddle_point}
In the context of this proof, an algorithm playing sequences $(\w_s,\q_s)_{s\leq t} \in (\triangle_K \times \pr(\Lambda))^{[t]}$ is said to be an approximate optimistic saddle point algorithm with slack $x_t$ if
\begin{align*}
\inf_{\vlambda \in \Lambda} \sum_{s=1}^t \sum_{k=1}^K w_s^k d(\hat{\mu}_{s-1}^k, \lambda^k)
&\geq \max_k \sum_{s=1}^t \ex_{\vlambda\sim\q_s} U_s^k(\vlambda) - x_t \: , \\
\mbox{or} \qquad \inf_{\vlambda \in \Lambda} \sum_{s=1}^t \sum_{k=1}^K w_s^k d(\hat{\mu}_{s-1}^k, \lambda^k)
&\geq \max_k \sum_{s=1}^t U_s^k - x_t \: .
\end{align*}
\end{definition}

We now show two ways to prove that a procedure is an approximate optimistic saddle point algorithm, introducing either upper bounds $U_t^k(\vlambda)$ or $U_t^k$.

\paragraph{Introduce UCBs, then use a saddle point property.}

We can start by replacing $d(\hat{\mu}_{s-1}^k, \lambda^k)$ by an UCB $U_s^k(\vlambda)$. Let $C_s^k = \sup_{\vlambda \in \Lambda} (U_s^k(\vlambda) - d(\hat{\mu}_{s-1}^k, \lambda^k))$.

\begin{align*}
\inf_{\vlambda \in \Lambda} \sum_{s=1}^t \sum_{k=1}^K w_s^k d(\hat{\mu}_{s-1}^k, \lambda^k)
&\geq \inf_{\vlambda \in \Lambda} \sum_{s=1}^t \sum_{k=1}^K w_s^k U_s^k(\vlambda) - \sum_{s=1}^t \sum_{k=1}^K w_s^k C_s^k \: .
\end{align*}

Consider the following ``optimistic'' zero-sum games, indexed by $t\in\N$: to actions $(k,\vlambda)\in[K]\times\Lambda$ corresponds a reward of $U_t^k(\vlambda)$ for the $k$-player.

An iterative saddle point algorithm attains an $(R_t^\vlambda, R_t^k)$ equilibrium at time $t$ on that sequence if
\begin{align*}
\inf_{\vlambda \in \Lambda} \sum_{s=1}^t \sum_{k=1}^K w_s^k U_s^k(\vlambda) + R_t^\vlambda
\geq  \sum_{s=1}^t \sum_{k=1}^K w_s^k \ex_{\vlambda\sim\q_s}U_s^k(\vlambda)
\geq \max_{k\in[K]} \sum_{s=1}^t \ex_{\vlambda\sim\q_s} U_s^k(\vlambda) - R_t^k\: .
\end{align*}

The notations $R_t^\vlambda$ and $R_t^k$ reflect a common strategy to attain such an equilibrium: instantiate two regret minimization algorithms for the two players, with linear losses $\ell_t^\w(\w) = -\ex_{\vlambda\sim\q_t} \sum_{k=1}^K w^k U_t^k(\vlambda)$ and $\ell_t^\vlambda(\q) = \ex_{\vlambda\sim\q} \sum_{k=1}^K w_t^k U_t^k(\vlambda)$. If we do so, the left and right inequalities are the regret properties of the algorithm for $\vlambda$ and $k$ respectively.
At that point we have
\begin{align*}
\inf_{\vlambda \in \Lambda} \sum_{s=1}^t \sum_{k=1}^K w_s^k d(\hat{\mu}_{s-1}^k, \lambda^k)
&\geq \max_{k\in[K]} \sum_{s=1}^t \ex_{\vlambda\sim\q_s} U_s^k(\vlambda) - R_t^\vlambda - R_t^k - \sum_{s=1}^t \sum_{k=1}^K w_s^k C_s^k\ .
\end{align*}
We obtain the desired property with $x_t = R_t^\vlambda + R_t^k + \sum_{s=1}^t \sum_{k=1}^K w_s^k C_s^k$ .

\paragraph{Use a regret property for $\vlambda$, then introduce UCBs, then use a regret property for $k$.}

We take here for the $\vlambda$-player a regret minimization algorithm for the loss $\ell_t^\vlambda(\q) = \sum_{k=1}^K w_t^k \ex_{\vlambda\sim\q}d(\hat{\mu}_{t-1}^k, \lambda^k)$. It verifies
\begin{align*}
\inf_{\vlambda \in \Lambda} \sum_{s=1}^t \sum_{k=1}^K w_s^k d(\hat{\mu}_{s-1}^k, \lambda^k)
\geq  \sum_{s=1}^t \sum_{k=1}^K w_s^k \ex_{\vlambda\sim\q_s}d(\hat{\mu}_{s-1}^k, \lambda^k) - R_t^\vlambda \: .
\end{align*}
We now introduce UCBs $U_s^k$,
\begin{align*}
\sum_{s=1}^t \sum_{k=1}^K w_s^k \ex_{\vlambda\sim\q_s}d(\hat{\mu}_{s-1}^k, \lambda^k)
&\geq \sum_{s=1}^t \sum_{k=1}^K w_s^k U_s^k
	- \sum_{s=1}^t \sum_{k=1}^K w_s^k C_s^k\ .
\end{align*}

The $k$-player uses a regret minimization algorithm for the loss $\ell_t^k(\w) = -\sum_{k=1}^K w_s^k U_s^k$, with regret $R_t^k$. Let $C_s^k = U_s^k - \ex_{\vlambda\sim\q_s}d(\hat{\mu}_{t-1}^k, \lambda^k)$ .
\begin{align*}
\sum_{s=1}^t \sum_{k=1}^K w_s^k U_s^k
&\geq \max_k \sum_{s=1}^t U_s^k
	-  R_t^k  \: .
\end{align*}
We obtain the desired property with  $x_t = R_t^\vlambda + R_t^k + \sum_{s=1}^t \sum_{k=1}^K w_s^k C_s^k$ .

\subsection{Concentration arguments}

Concentration event:
$%\begin{align*}
\mathcal{E}_t
= \left\{ \forall s \leq t, \forall k\in[K], \: d(\hat{\mu}_s^k, \mu^k) \leq \frac{f(t^{\nicefrac{1}{(1+b)}})}{N_s^k} \right\}
%&\qquad \cap \left\{ \forall s\in [\sqrt{t}, t], \sum_{k=1}^t N_{s-1}^k d(\hat{\mu}_{s-1}^k, \mu^k) \leq \beta'(s, \frac{1}{s^3})  \right\}
$ .%\end{align*}

\begin{lemma}\label{lem:from_mu_to_hat_mu_s}
Under the event $\mathcal{E}_t$, for all $s\in[t]$, $k\in[K]$ and $\vlambda \in \mathcal{M}$,
\begin{align*}
|d(\mu^k, \lambda^k) - d(\hat{\mu}_{s-1}^k, \lambda^k)| &\leq L\sqrt{2\sigma^2 \frac{f(t^{\nicefrac{1}{(1+b)}})}{N_{s-1}^k}} \: .
\end{align*}
\end{lemma}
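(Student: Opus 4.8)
The plan is to bound the difference $|d(\mu^k, \lambda^k) - d(\hat{\mu}_{s-1}^k, \lambda^k)|$ by controlling two separate things: how far $\hat{\mu}_{s-1}^k$ is from $\mu^k$, and how much the divergence $d(\cdot, \lambda^k)$ changes over that distance. First I would invoke the $L$-Lipschitz property of $x \mapsto d(x, y)$ established as a consequence of Assumption~\ref{ass:compact_M}: since both $\mu^k$ and $\hat{\mu}_{s-1}^k$ lie in $[\mu_{\min}, \mu_{\max}]$ and $\lambda^k \in [\mu_{\min}, \mu_{\max}]$, we have
\begin{align*}
|d(\mu^k, \lambda^k) - d(\hat{\mu}_{s-1}^k, \lambda^k)| \leq L \, |\mu^k - \hat{\mu}_{s-1}^k| \: .
\end{align*}
This reduces the problem to bounding $|\mu^k - \hat{\mu}_{s-1}^k|$, the empirical deviation on arm $k$ after $s-1$ rounds.

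Next I would convert the KL-based concentration from the event $\mathcal{E}_t$ into a deviation bound on means. Under $\mathcal{E}_t$ we have $d(\hat{\mu}_{s-1}^k, \mu^k) \leq f(t^{1/(1+b)})/N_{s-1}^k$ for every $s \le t$ and every $k$. The key tool is the sub-Gaussian characterisation stated in the model section (and in Lemma~\ref{lem:comparison_of_ucbs}'s proof): the $\sigma^2$-sub-Gaussian exponential family satisfies $d(x, y) \geq \frac{1}{2\sigma^2}(x - y)^2$ for all $x, y \in \Theta$. Applying this with $x = \hat{\mu}_{s-1}^k$ and $y = \mu^k$ gives
\begin{align*}
\frac{1}{2\sigma^2}(\hat{\mu}_{s-1}^k - \mu^k)^2 \leq d(\hat{\mu}_{s-1}^k, \mu^k) \leq \frac{f(t^{1/(1+b)})}{N_{s-1}^k} \: ,
\end{align*}
and rearranging yields $|\mu^k - \hat{\mu}_{s-1}^k| \leq \sqrt{2\sigma^2 f(t^{1/(1+b)})/N_{s-1}^k}$. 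Chaining this with the Lipschitz bound above produces exactly the claimed inequality.

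The main thing to watch is a domain/index subtlety rather than a deep obstacle: one must ensure $\hat{\mu}_{s-1}^k \in \Theta$ (or the relevant compact interval) so the Lipschitz and sub-Gaussian facts both apply. This is handled by the convention stated just after Assumption~\ref{ass:compact_M} that $\hat{\vmu}_t$ denotes the \emph{projected} empirical mean onto $[\mu_{\min}, \mu_{\max}]^K$; projection can only decrease the divergence $d(\hat{\mu}_{s-1}^k, \mu^k)$ since $\mu^k$ lies in that interval, so the concentration bound survives projection and the Lipschitz constant $L$ remains valid. I expect the proof to be essentially two lines once these two stated facts (Lipschitzness and the sub-Gaussian lower bound on $d$) are combined, with no nontrivial estimation required.
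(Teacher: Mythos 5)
Your proof is correct and takes essentially the same route as the paper: the paper's own proof is the one-line instruction ``Use the Lipschitz property of $x\mapsto d(x,y)$, then the sub-Gaussian assumption and finally the definition of $\mathcal{E}_t$,'' which is precisely the chain you carry out (Lipschitz reduction to $|\mu^k - \hat{\mu}_{s-1}^k|$, then the bound $d(x,y)\geq \frac{1}{2\sigma^2}(x-y)^2$ combined with the event $\mathcal{E}_t$). Your added remark on the projected empirical mean is a sound clarification of a detail the paper leaves implicit.
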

\begin{proof}
Use the Lipschitz property of $x\mapsto d(x,y)$, then the sub-Gaussian assumption and finally the definition of $\mathcal{E}_t$.
\end{proof}

\begin{lemma}
Let $C_s^k = \max\left\{ 2L\sqrt{2\sigma^2\frac{f(\max\{s-1, t^{\nicefrac{1}{(1+b)}}\})}{N_{s-1}^k}} , \frac{f(\max\{s-1, t^{\nicefrac{1}{(1+b)}}\})}{N_{s-1}^k} \right\}$. Let $\alpha_s^k$ and $\beta_s^k$ be defined as in section~\ref{sec:ucb}. Under the event $\mathcal{E}_t$, for all $s\in[t]$, all $\vlambda \in \mathcal{M}$,
\begin{align*}
\sup_{\xi\in[\alpha_s^k, \beta_s^k]}(U_s^k(\vlambda) - d(\xi, \lambda^k)) &\leq C_s^k \: ,\\
\sup_{\xi\in[\alpha_s^k, \beta_s^k]}(U_s^k - \ex_{\vlambda\sim\q_s}d(\xi , \lambda^k)) &\leq C_s^k \: .
\end{align*}
\end{lemma}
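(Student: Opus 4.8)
The plan is to bound the gap between each upper confidence bound $U_s^k$ (or $U_s^k(\vlambda)$) and the corresponding divergence evaluated at any point $\xi \in [\alpha_s^k, \beta_s^k]$, uniformly over $s \in [t]$ and $\vlambda \in \mathcal{M}$, on the concentration event $\mathcal{E}_t$. I would treat the two inequalities in parallel, since the one for $U_s^k(\vlambda)$ involves a fixed $\vlambda$ and the one for $U_s^k$ merely takes $\ex_{\vlambda\sim\q_s}$ of the same expression; by linearity of expectation, proving the pointwise-in-$\vlambda$ statement immediately yields the averaged one. So the real content is the first inequality.

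First I would unfold the definition of $U_s^k$ as a maximum of two terms: the exploration floor $\frac{f(s-1)}{N_{s-1}^k}$ and the maximum of $d(\xi',\lambda^k)$ over $\xi' \in [\alpha_s^k,\beta_s^k]$ (using the preceding lemma that this max is attained at an endpoint, and its monotone position relative to the actual $f$ index via Definition~\ref{def:f}). I want to bound $\sup_{\xi\in[\alpha_s^k,\beta_s^k]}\del{U_s^k(\vlambda) - d(\xi,\lambda^k)}$. The key observation is that both $U_s^k(\vlambda)$ and $d(\xi,\lambda^k)$ are evaluations of $\xi'\mapsto d(\xi',\lambda^k)$ over the same interval $[\alpha_s^k,\beta_s^k]$, so their difference is at most the oscillation of that function over the interval. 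By the $L$-Lipschitz property of $x\mapsto d(x,y)$ (a consequence of Assumption~\ref{ass:compact_M}), this oscillation is at most $L(\beta_s^k - \alpha_s^k)$, and I would control the interval width using the sub-Gaussian Assumption~\ref{ass:sub-Gaussian}: the KL-ball $\set{\xi : d(\hat\mu_{s-1}^k,\xi)\le f(s-1)/N_{s-1}^k}$ is contained in $[\hat\mu_{s-1}^k \pm \sqrt{2\sigma^2 f(s-1)/N_{s-1}^k}}]$, giving width $\le 2\sqrt{2\sigma^2 f(s-1)/N_{s-1}^k}$, hence the $2L\sqrt{2\sigma^2 f/N_{s-1}^k}$ term in $C_s^k$.

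The remaining subtlety, and the second term $\frac{f}{N_{s-1}^k}$ in the max defining $C_s^k$, comes from the exploration floor: when $U_s^k(\vlambda)$ equals $\frac{f(s-1)}{N_{s-1}^k}$ rather than a divergence value, I must bound $\frac{f(s-1)}{N_{s-1}^k} - d(\xi,\lambda^k)$. Since $d \ge 0$, this difference is at most $\frac{f(s-1)}{N_{s-1}^k}$, which is exactly the second argument of the max. Taking the larger of the two contributions handles whichever case is active, explaining the $\max\set{\cdot,\cdot}$ structure of $C_s^k$. The indices $\max\set{s-1, t^{1/(1+b)}}$ arise because, to invoke $\mathcal{E}_t$ (which controls $d(\hat\mu_{s-1}^k,\mu^k)$ via $f(t^{1/(1+b)})/N_{s-1}^k$) and simultaneously use the confidence interval built from $f(s-1)$, I should take the worse of the two thresholds.

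The main obstacle I anticipate is bookkeeping around the $f$-index rather than any deep inequality: I must ensure that the interval $[\alpha_s^k,\beta_s^k]$ and the sub-Gaussian enclosing box are both consistently indexed by the same $f$ value so that $\mathcal{E}_t$ applies, and that the endpoint-attainment lemma legitimately reduces the supremum over $\xi$ to a Lipschitz-oscillation bound. Once those are aligned, the proof is a short chain: reduce to the per-$\vlambda$ statement by linearity, split on which term the max defining $U_s^k$ selects, bound the divergence-oscillation by $L$ times the interval width, bound the interval width by the sub-Gaussian enclosure, and collect both cases into the stated $C_s^k$.
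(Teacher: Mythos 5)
Your proposal is correct and follows essentially the same route as the paper's proof: split on which term of the max defines the UCB, bound the divergence case by the Lipschitz oscillation of $\xi'\mapsto d(\xi',\lambda^k)$ over an enclosing interval whose width is controlled via the sub-Gaussian assumption (the paper uses the interval $[\hat{\mu}_{s-1}^k \pm \sqrt{2\sigma^2 f(\max\{s-1,t^{\nicefrac{1}{(1+b)}}\})/N_{s-1}^k}]$, which contains both types of confidence intervals), and bound the exploration-floor case by $f(s-1)/N_{s-1}^k$ using $d\geq 0$.

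One caveat on your reduction of the second inequality to the first: it is not literally an application of linearity, because in $U_s^k$ the expectation over $\vlambda\sim\q_s$ sits \emph{inside} the maximum over $\xi$, so $U_s^k \neq \ex_{\vlambda\sim\q_s}U_s^k(\vlambda)$ in general. The reduction still goes through because the inequality runs in the favourable direction, $U_s^k \leq \ex_{\vlambda\sim\q_s}U_s^k(\vlambda)$ (exchange of maximum and expectation), after which averaging your pointwise bound over $\q_s$ yields the claim. The paper sidesteps this entirely by carrying out the identical computation with expectations in place throughout and noting that the pointwise version is the ``same computations without expectations''; either fix is one line, but as written your linearity step needs this extra observation.
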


\begin{proof}
Let $u_s^k = \hat{\mu}_{s-1}^k - \sqrt{2\sigma^2\frac{f(\max\{s-1, t^{\nicefrac{1}{(1+b)}}\})}{N_{s-1}^k}}$ and $v_s^k = \hat{\mu}_{s-1}^k + \sqrt{2\sigma^2\frac{f(\max\{s-1, t^{\nicefrac{1}{(1+b)}}\})}{N_{s-1}^k}}$.

Under the event $\mathcal{E}_t$, for all $s\in[t]$, we have $\hat{\mu}_{s-1}^k,\mu^k \in [u_s^k, v_s^k]$. $U_s^k$ is defined as the maximum of $\frac{f(s-1)}{N_{s-1}^k}$ and a maximum over an interval which is contained in $[u_s^k, v_s^k]$. If $U_s^k$ is equal to the latter,
\begin{align*}
\sup_{\xi\in[\alpha_s^k, \beta_s^k]}(U_s^k - \ex_{\vlambda\sim\q_s}d(\xi , \lambda^k))
&\leq \sup_{\eta, \xi \in [u_s^k, v_s^k]} |\ex_{\vlambda\sim\q_s}d(\eta, \lambda^k) - \ex_{\vlambda\sim\q_s}d(\xi, \lambda^k)|\\
&\leq L |u_s^k - v_s^k|
\leq 2L\sqrt{2\sigma^2\frac{f(\max\{s-1, t^{\nicefrac{1}{(1+b)}}\})}{N_{s-1}^k}} \: .
\end{align*}
If $U_s^k=\frac{f(s-1)}{N_{s-1}^k}$, then $\sup_{\xi\in[\alpha_s^k, \beta_s^k]}(U_s^k - \ex_{\vlambda\sim\q_s}d(\xi , \lambda^k))
\leq  U_s^k = \frac{f(s-1)}{N_{s-1}^k}$ .

Same computations for $U_s^k(\vlambda)$, without expectations.
\end{proof}

\begin{lemma}\label{lem:sum_C_s}
\begin{align*}
\sum_{s=K+1}^t \sum_{k=1}^K w_s^k C_s^k
\leq 2L\sqrt{2\sigma^2f(t)}(K^2 + 2\sqrt{2Kt}) + f(t)(K^2 + 2K\log(t/K)) \: .
\end{align*}
\end{lemma}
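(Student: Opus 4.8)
The plan is to bound each $C_s^k$ by the sum of the two terms inside its defining maximum, replace the argument of $f$ by its worst case, and thereby reduce the double sum to two elementary tracking sums, the first of which is already delivered by Lemma~\ref{lem:concentration_term_w/sqrt(N)} and the second of which requires an analogous logarithmic estimate.

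First I would write, using $\max\{x,y\}\le x+y$ for nonnegative $x,y$,
\begin{align*}
C_s^k \le 2L\sqrt{2\sigma^2 f_s^k/N_{s-1}^k} + f_s^k/N_{s-1}^k
\qquad\text{with}\qquad
f_s^k \df f(\max\{s-1, t^{1/(1+b)}\}).
\end{align*}
Since $f$ is increasing (Definition~\ref{def:f}) and both $s-1\le t$ and $t^{1/(1+b)}\le t$ when $b>0$, we have $f_s^k\le f(t)$, so after factoring out the scale,
\begin{align*}
\sum_{s=K+1}^t\sum_{k=1}^K w_s^k C_s^k
\le 2L\sqrt{2\sigma^2 f(t)}\sum_{k=1}^K\sum_{s=K+1}^t \frac{w_s^k}{\sqrt{N_{s-1}^k}}
+ f(t)\sum_{k=1}^K\sum_{s=K+1}^t \frac{w_s^k}{N_{s-1}^k}.
\end{align*}
The first double sum is at most $K^2+2\sqrt{2Kt}$ by the right-hand inequality of Lemma~\ref{lem:concentration_term_w/sqrt(N)}, which already produces the first summand of the claim. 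It then remains to show $\sum_{k}\sum_{s=K+1}^t w_s^k/N_{s-1}^k \le K^2+2K\log(t/K)$.

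For this logarithmic sum I would mimic the proof of Lemma~\ref{lem:concentration_term_w/sqrt(N)}, substituting the square-root estimate of Lemma~\ref{lem:sum_w/sqrt(N)} with its logarithmic counterpart. Writing $\Sigma_s^k=\sum_{r\le s}w_r^k$ (so $\Sigma_K^k=1$), I split at $t_0^k$, the first round with $\Sigma_s^k\ge K$, i.e.\ $\Sigma_s^k-(K-1)\ge 1$. On the rounds $s\le t_0^k$ I use only $N_{s-1}^k\ge 1$, bounding the contribution by $\sum_{s=K+1}^{t_0^k}w_s^k=\Sigma_{t_0^k}^k-1<K$. On the rounds $s>t_0^k$ I combine $N_{s-1}^k\ge \tfrac12 N_s^k$ (valid as $N_{s-1}^k\ge 1$) with the tracking bound $N_s^k\ge \Sigma_s^k-(K-1)$ from Lemma~\ref{lem:tracking}, giving $w_s^k/N_{s-1}^k\le 2w_s^k/(\Sigma_s^k-(K-1))$. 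A telescoping argument using $1-a/b\le \log(b/a)$ then bounds $\sum_{s>t_0^k}w_s^k/(\Sigma_s^k-(K-1))$ by $\log(\Sigma_t^k-(K-1))-\log(\Sigma_{t_0^k}^k-(K-1))\le \log\Sigma_t^k$, where the subtracted term is nonnegative precisely because $\Sigma_{t_0^k}^k-(K-1)\ge 1$. Summing the two parts over $k$ and applying concavity of $\log$ together with $\sum_k\Sigma_t^k=t$ gives $\sum_k\bigl(K+2\log\Sigma_t^k\bigr)\le K^2+2K\log(t/K)$, exactly the second summand.

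The main obstacle is the bookkeeping in this logarithmic sum. The factor-of-two passage $N_{s-1}^k\ge\tfrac12 N_s^k$ is indispensable for converting the overestimating left endpoint $N_{s-1}^k$ into a telescoping right endpoint, but applying it on the initial rounds where $N_s^k$ is small would inflate the $K^2$ constant; it is therefore essential to place the split point $t_0^k$ exactly where $\Sigma_s^k-(K-1)$ reaches $1$, so that the telescoped logarithm starts from a nonnegative value while the initial rounds are handled by the trivial bound $N_{s-1}^k\ge 1$ alone. Finally the per-arm logarithms must be aggregated through Jensen's inequality, using $\sum_k\Sigma_t^k=t$, to sharpen $\log t$ into $\log(t/K)$.
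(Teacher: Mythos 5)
Your proposal is correct and takes essentially the same route as the paper: bound the maximum defining $C_s^k$ by the sum of its two terms, pull out $f(t)$ by monotonicity, and control the two resulting tracking sums, the square-root one via Lemma~\ref{lem:concentration_term_w/sqrt(N)} and the logarithmic one via the analogous argument built on Lemma~\ref{lem:sum_w/sqrt(N)}. The only difference is one of exposition: the paper compresses the logarithmic sum into a ``Similarly'', while you write out that step in full (split point chosen so the telescoped logarithm starts at a value at least $1$, the factor-of-two passage from $N_{s-1}^k$ to $N_s^k$, and Jensen to sharpen $\log t$ into $\log(t/K)$), which is exactly the argument the paper intends.
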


\begin{proof}
Since $C_s^k$ is the maximum of two quantities, it is smaller than their sum. By Lemma~\ref{lem:concentration_term_w/sqrt(N)},
\begin{align*}
\sum_{s=K+1}^t \sum_{k=1}^K w_s^k 2L\sqrt{2\sigma^2\frac{f(\max\{s-1, t^{\nicefrac{1}{(1+b)}}\})}{N_{s-1}^k}}
&\leq 2L\sqrt{2\sigma^2f(t)}\sum_{s=K+1}^t \sum_{k=1}^K  \frac{w_s^k}{\sqrt{N_{s-1}^k}}\\
&\leq 2L\sqrt{2\sigma^2f(t)}(K^2 + 2\sqrt{2Kt}) \: ,
\end{align*}
Similarly,
\begin{align*}
\sum_{s=K+1}^t \sum_{k=1}^K w_s^k \frac{f(\max\{s-1, t^{\nicefrac{1}{(1+b)}}\})}{N_{s-1}^k}
&\leq f(t)\sum_{s=K+1}^t \sum_{k=1}^K  \frac{w_s^k}{N_{s-1}^k}\\
&\leq f(t)(K^2 + 2K\log(t/K)) \: ,
\end{align*}
\end{proof}

\begin{lemma}\label{lem:from_hat_mu_to_mu}
Under $\mathcal{E}_t$, for any $\vlambda\in\mathcal{M}$,
\begin{align*}
\sum_{k=1}^K N_t^k d(\hat{\mu}_t^k, \lambda^k)
&\geq \sum_{k=1}^K N_t^k d(\mu^k, \lambda^k) - L\sqrt{2\sigma^2 K t f(t)} \: .
\end{align*}
\end{lemma}
\begin{proof}
By the Lipschitzness assumption,
\begin{align*}
\sum_{k=1}^K N_t^k d(\hat{\mu}_t^k, \lambda^k)
&\geq \sum_{k=1}^K N_t^k d(\mu^k, \lambda^k) - L\sum_{k=1}^K N_t^k |\hat{\mu}_t^k - \mu^k| \: .
\end{align*}
Using the sub-Gaussian hypothesis, under $\mathcal{E}_t$, $|\hat{\mu}_t^k - \mu^k| \leq \sqrt{2\sigma^2 d(\hat{\mu}_t^k, \mu^)} \leq \sqrt{2\sigma^2\frac{f(t)}{N_t^k}}$.
\begin{align*}
\sum_{k=1}^K N_t^k d(\hat{\mu}_t^k, \lambda^k)
&\geq \sum_{k=1}^K N_t^k d(\mu^k, \lambda^k) - L\sum_{k=1}^K N_t^k \sqrt{2\sigma^2\frac{f(t)}{N_t^k}}\\
&=    \sum_{k=1}^K N_t^k d(\mu^k, \lambda^k) - L\sqrt{2\sigma^2 f(t)}\sum_{k=1}^K \sqrt{N_t^k}\\
&\geq \sum_{k=1}^K N_t^k d(\mu^k, \lambda^k) - L\sqrt{2\sigma^2 K t f(t)} \: .
\end{align*}
\end{proof}

\subsection{The candidate answer}

The data seen before time $t$ is summarized in the vector $\hat{\vmu}_{t-1}\in\Theta^K$. That vector does not in general belong to $\mathcal{M}$.

Our algorithm finds any point in the intersection of $\mathcal{M}$ and the confidence box around $\hat{\vmu}_{t-1}$.
The point obtained is denoted by $\vmu^\mathcal{M}_{t-1}$ and verifies that for all $k\in[K]$, $d(\hat{\mu}_{t-1}^k, \mu_{t-1}^{\mathcal{M} k})\leq \frac{f(t-1)}{N_{t-1}^k}$  .
The candidate answer used at time $t$ is then $i_t = i^*(\vmu^\mathcal{M}_{t-1})$.

\subsection{When the candidate answer is not the correct answer}

\paragraph{Chernoff information.} For $x,y\in\Theta$, let $\ch(x,y) = \inf_{u\in\Theta}(d(u,x) + d(u,y))$ be the Chernoff information between $x$ and $y$.

\begin{assumption}\label{ass:chernoff_separation}
There exists $\varepsilon>0$ such that for all $\vlambda \in \neg i^*(\vmu)$, there exists $k\in[K]$ such that $\ch(\lambda^k, \mu^k) \geq \varepsilon$.
\end{assumption}

If the distributions are sub-Gaussian with parameter $\sigma^2$, then $\ch(x,y) \geq \frac{(x-y)^2}{8\sigma^2}$ and that assumption is true for every $\vmu\in\mathcal{M}$ with $D_\vmu>0$. i.e. Assumption~\ref{ass:sub-Gaussian} implies Assumption~\ref{ass:chernoff_separation}.

\begin{lemma}\label{lem:different_answers_means_a_small_N_v2}
Suppose that Assumption~\ref{ass:chernoff_separation} holds for $\vmu\in\mathcal{M}$ and that for all $k\in[K]$, $d(\hat{\mu}_{t-1}^k,\mu^k) \leq \frac{\log(t-1)}{N_{t-1}^k}$. If $i^*(\vmu^\mathcal{M}_{t-1})\neq i^*(\vmu)$ then there exists $j\in[K]$ such that $\frac{f(t-1)}{N_{t-1}^j} \geq \frac{\varepsilon}{2}$.
\end{lemma}
\begin{proof}
If $i^*(\tilde{\vmu}_{t-1})\neq i^*(\vmu)$ then $\vmu^\mathcal{M}_{t-1}$ belongs to the set $\neg i^*(\vmu)$.

By Assumption~\ref{ass:chernoff_separation}, there exists $j\in[K]$ such that $\ch(\mu^k, {\mu^\mathcal{M}_{t-1}}^k) \geq \varepsilon$. By definition of $\ch$ as an infimum over $\Theta$, it is smaller than $d(\hat{\mu}_{t-1}^j, \mu^j) + d(\hat{\mu}_{t-1}^j, {\mu^\mathcal{M}_{t-1}}^j)$. That sum is then bigger than $\varepsilon$, with consequence that either $d(\hat{\mu}_{t-1}^j, \mu^j) \geq \varepsilon/2$ or $d(\hat{\mu}_{t-1}^j, {\mu^\mathcal{M}_{t-1}}^j) \geq \varepsilon/2$.

If $d(\hat{\mu}_{t-1}^j, \mu^j) \geq \varepsilon/2$, then by hypothesis, $\frac{f(t-1)}{N_{t-1}^j} \geq d(\hat{\mu}_{t-1}^j, \mu^j) \geq \varepsilon/2$.

Otherwise $d(\hat{\mu}_{s-1}^j, {\mu^\mathcal{M}_{t-1}}^j) \geq \varepsilon/2$. By definition of $\mu^\mathcal{M}_{t-1}$, $\frac{f(t-1)}{N_{t-1}^j} \geq d(\hat{\mu}_{t-1}^j, {\mu^\mathcal{M}_{t-1}}^j)$. We proved that $\frac{f(t-1)}{N_{t-1}^j} \geq \frac{\varepsilon}{2}$.
\end{proof}

\paragraph{Linear increase in information.}

For $i\in\mathcal{I}$, let $n_i(t)$ be the number of stages $s\leq t$ in which $i_s = i$. To shorten notations, let $i^*=i^*(\vmu)$. The goal of this section is to find a lower bound for $n_{i^*}(t)$. We do it by showing that when the answer $i_s$ is not the correct one, a quantity is linearly increasing, while at the same time being $O(\sqrt{t})$ by a concentration argument. Hence the number of time steps this can happen is also $O(\sqrt{t})$.

Using that $\vmu \in \neg i_s$,
\begin{align*}
\sum_{s\leq t, i_s\neq i^*} \sum_{k=1}^K w_s^k d(\hat{\mu}_{s-1}^k, \mu^k) 
&\geq \sum_{i\in\mathcal{I}\setminus \{i^*\}} \inf_{\vlambda\in\neg i} \sum_{s\leq t, i_s=i} \sum_{k=1}^K w_s^k d(\hat{\mu}_{s-1}^k, \lambda^k)\: .
\end{align*}
Let $\varepsilon_t$ be the quantity on the left, which will be small by a concentration argument.

The algorithm used when $i_s=i$ is an optimistic approximate saddle point algorithm with slack $R_{n_i(t)}^k + R_{n_i(t)}^\vlambda + \sum_{s\leq t, i_s=i} \sum_{k=1}^K w_s^k C_s^k$. Hence we have
\begin{align*}
\varepsilon_t
\geq \sum_{i\in\mathcal{I}\setminus \{i^*\}} \max_k \sum_{s\leq t, i_s=i} U_s^k
	- \sum_{i\in\mathcal{I}\setminus \{i^*\}} (R_{n_i(t)}^k + R_{n_i(t)}^\vlambda) - \sum_{s\leq t, i_s\neq i^*}\sum_{k=1}^K w_s^k C_s^k \: .
\end{align*}
Note: if UCBs of the form $U_s^k(\vlambda)$ are used instead of $U_s^k$, replace $U_s^k$ by $\ex_{\vlambda\sim\q_s}U_s^k(\vlambda)$ here and in the following expressions.

For fixed $i\in\mathcal{I}\setminus \{i^*\}$, we now shos that the quantity $\max_k \sum_{s\leq t, i_s=i} U_s^k$ increases linearly with the number of terms of the sum, $n_i(t)$.
We proved in Lemma~\ref{lem:exploration_term_lower_bound} that for all $s\in\N$ and $k\in[K]$, $U_s^k \geq \frac{f(s-1)}{N_{s-1}^k}$.
When the event $\mathcal{E}_t$ holds, for all $s\in[t^{\nicefrac{1}{(1+b)}},t]$ with $i_s\neq i^*$, there is a $j_s\in[K]$ such that
$U_{s}^{j_s} \geq \varepsilon/2$ by Lemma~\ref{lem:different_answers_means_a_small_N_v2}.

Let $t'$ be the last term of the sum and suppose that $t'>\sqrt{t}$. Let $j$ be such that $U_{t'}^j \geq \varepsilon/2$. Then for all $s\in[\lceil t^{\nicefrac{1}{(1+b)}}\rceil, t']$,
\begin{align*}
\frac{f(s-1)}{N_{s-1}^j}
\geq \frac{f(s-1)}{N_{t'-1}^j}
= \frac{f(s-1)}{f(t'-1)} \frac{f(t'-1)}{N_{t'-1}^j}
\geq \frac{f(t^{\nicefrac{1}{(1+b)}})}{f(t)} \varepsilon/2 \: .
\end{align*}
For $t>e$,  $\frac{f(t^{\nicefrac{1}{(1+b)}})}{f(t)} \geq \frac{1}{3(1+b)}$. Let $C_b = 1/(3(1+b))$.

Hence for that arm $j$, $\sum_{s\leq t, i_s=i} U_s^j \geq C_b \varepsilon (n_i(t) - n_i(t^{\nicefrac{1}{(1+b)}}))/2$.

We conclude that the maximum over $k$ of the sums is also bigger than this quantity.
We have shown
\begin{align*}
\varepsilon_t
&\geq \sum_{i\in\mathcal{I}\setminus \{i^*\}} \frac{C_b \varepsilon}{2} (n_i(t) -n_i(t^{\nicefrac{1}{(1+b)}}))
- \sum_{i\in\mathcal{I}\setminus \{i^*\}} (R_{n_i(t)}^k + R_{n_i(t)}^\vlambda)
- \sum_{s=K+1}^t \sum_{k=1}^K w_s^k C_s^k \\
&\geq \frac{C_b \varepsilon}{2} (t - t^{\nicefrac{1}{(1+b)}} - n_{i^*}(t))
- (|\mathcal{I}|-1) (R_t^k + R_t^\vlambda)
- \sum_{s=K+1}^t \sum_{k=1}^K w_s^k C_s^k \: .
\end{align*}

If $n\mapsto R^k_n$ and $n\mapsto R^\vlambda_n$ are concave (for example regret proportional to $\sqrt{n}$), the regret term has the form $(|\mathcal{I}|-1) (R_{(t-n_{i^*})/(|I|-1)}^k + R_{(t-n_{i^*})/(|I|-1)}^\vlambda)$.

By concentration,
\begin{align*}
\varepsilon_t \leq f(t^{\nicefrac{1}{(1+b)}}) \sum_{s=1}^t \sum_{k=1}^K\frac{w_s^k}{N_{s-1}^t} \leq f(t)(K^2 + 2K\log(t/K)) \: .
\end{align*}

We proved
\begin{align}\label{eq:number_rounds_with_correct_answer_is_linear}
n_{i^*}(t) \geq t - t^{\nicefrac{1}{(1+b)}} - \frac{2}{C_b \varepsilon}\left( (|\mathcal{I}|-1) (R_t^k{+}R_t^\vlambda) + f(t)(K^2 {+} 2K\log(t/K)) + \hspace{-6pt} \sum_{s=K+1}^t \sum_{k=1}^K w_s^k C_s^k \right)
\end{align}

\subsection{When the candidate answer is the correct answer}\label{sec:proof_correct_answer}

Let $t'\leq t$ be the last round in which $i_{t'}=i^*$ before the algorithm stops. Then $t'\geq n_{i^*}(t)$, we have $i_{t'} = i^*$ and $n_{i^*}(t') = n_{i^*}(t)$.
\begin{align*}
\beta(t, \delta)
\geq \beta(t',\delta)
\geq \inf_{\vlambda\in\neg i_{t'}}\sum_{k=1}^K N_{t'}^k d(\hat{\mu}_{t'}^k, \lambda^k)
&=    \inf_{\vlambda\in\neg i^*} \sum_{k=1}^K N_{t'}^k d(\hat{\mu}_{t'}^k, \lambda^k)\\
&\geq \inf_{\vlambda\in\neg i^*} \sum_{k=1}^K N_{t'}^k d(\mu^k, \lambda^k) - L\sqrt{2\sigma^2 Kt f(t)}\: .
\end{align*}
Using the tracking Lemma~\ref{lem:tracking}, then concentration Lemma~\ref{lem:from_mu_to_hat_mu_s},
\begin{align*}
\beta(t, \delta)
&\geq \inf_{\vlambda\in\neg i^*} \sum_{s=1}^{t'} \sum_{k=1}^K w_s^k d(\mu^k, \lambda^k) - KD - L\sqrt{2\sigma^2 Kt f(t)}\\
&\geq \inf_{\vlambda\in\neg i^*} \sum_{s=K+1}^{t'} \sum_{k=1}^K w_s^k d(\hat{\mu}_{s-1}^k, \lambda^k)\\
&\qquad - L\sqrt{2\sigma^2f(t)}\sum_{s=K+1}^t \sum_{k=1}^K \frac{w_s^k}{\sqrt{N_{s-1}^k}} - KD - L\sqrt{2\sigma^2 Kt f(t)} \\
&\geq \inf_{\vlambda\in\neg i^*} \sum_{s=K+1}^{t'} \sum_{k=1}^K w_s^k d(\hat{\mu}_{s-1}^k, \lambda^k) \\
&\qquad- 2L\sqrt{2\sigma^2f(t)}(K^2 + 2\sqrt{2K t}) - KD - L\sqrt{2\sigma^2 Kt f(t)}
\end{align*}
We drop the rounds in which $i_s\neq i^*$.
\begin{align*}
\beta(t, \delta)
&\geq \inf_{\vlambda\in\neg i^*} \sum_{K+1\leq s\leq t', i_s=i^*} \sum_{k=1}^K w_s^k d(\hat{\mu}_{s-1}^k, \lambda^k)\\
&\qquad- 2L\sqrt{2\sigma^2f(t)}(K^2 {+} 2\sqrt{2K t}) - KD - L\sqrt{2\sigma^2 Kt f(t)} \: .
\end{align*}

The algorithm used is an optimistic approximate saddle point algorithm with slack $R_t^\vlambda + R_t^k + \sum_{s=1}^t \sum_{k=K+1}^K w_s^k C_s^k$ :
\begin{align*}
\inf_{\vlambda\in\neg i^*} \sum_{s\leq t', i_s=i^*} \sum_{k=1}^K w_s^k d(\hat{\mu}_{s-1}^k, \lambda^k)
&\geq \max_k \sum_{s\leq t', i_s=i^*} U_s^k - (R_t^\vlambda + R_t^k + \sum_{s=K+1}^t \sum_{k=1}^K w_s^k C_s^k) \ .
\end{align*}

Let $A_t = \sum_{s=K+1}^t \sum_{k=1}^K w_s^k C_s^k + 2L\sqrt{2\sigma^2f(t)}(K^2 + 2\sqrt{2K t}) + KD + L\sqrt{2\sigma^2 Kt f(t)}$. We obtain
\begin{align*}
\beta(t, \delta)
&\geq \max_k \sum_{K < s\leq t', i_s=i^*} U_s^k - R_t^k - R_t^\vlambda - A_t \: .
\end{align*}

Let $t_b = t^{\nicefrac{1}{(1+b)}}$. Since $U_t$ is a coordinate-wise upper confidence bound when concentration holds (for $s\geq t^{\nicefrac{1}{(1+b)}}$), we have
\begin{align*}
\beta(t, \delta)
&\geq \max_k \sum_{t_b\leq s\leq t', i_s=i^*} \ex_{\vlambda\sim\q_s}d(\mu^k, \lambda^k) - R_t^k - R_t^\vlambda - A_t\\
&=    (n_{i^*}(t')-t_b) \max_k \frac{1}{(n_{i^*}(t')-t_b)}\sum_{t^{\nicefrac{1}{(1+b)}} \leq s\leq t', i_s=i^*} \ex_{\vlambda\sim\q_s}d(\mu^k, \lambda^k) - R_t^k - R_t^\vlambda - A_t\\
&\geq (n_{i^*}(t')-t_b) \inf_{\q\in\pr(\neg i^*)} \max_k \ex_{\vlambda\sim\q} d(\mu^k, \lambda^k) - R_t^k - R_t^\vlambda - A_t\\
&=    (n_{i^*}(t')-t^{\nicefrac{1}{(1+b)}}) D_\vmu - R_t^k - R_t^\vlambda - A_t \: .
\end{align*}

$t'$ is such that $n_{i^*}(t') = n_{i^*}(t)$. Combining that result and the lower bound on $n_{i^*}(t)$ of equation~\eqref{eq:number_rounds_with_correct_answer_is_linear}, we have 
\begin{align}\label{eq:stopping_time_equation}
&\frac{\beta(t, \delta) + A_t + R_t^k + R_t^\vlambda}{D_\vmu}\\
&\geq t - 2 t^{\nicefrac{1}{(1+b)}} - \frac{2}{C_b\varepsilon}\left( (|\mathcal{I}|-1) (R_t^k+R_t^\vlambda) + f(t)(K^2 + 2K\log(t/K)) + \hspace{-3pt} \sum_{s=K+1}^t \sum_{k=1}^K w_s^k C_s^k \right) \: .
\end{align}

\subsection{Stopping time upper bound}

We can solve equation~\eqref{eq:stopping_time_equation} to find an upper bound for $t$ such that the algorithm does not stop.
Suppose that there exists $R>0$ such that $R_t^k+R_t^\vlambda \leq R \sqrt{Kt}$. Take $b=1$.
By Lemma~\ref{lem:sum_C_s},

$\sum_{s=K+1}^t \sum_{k=1}^K w_s^k C_s^k \leq 2L\sqrt{2\sigma^2f(t)}(K^2 + 2\sqrt{2Kt}) + f(t)(K^2 + 2K\log(t/K))$.

$A_t \leq 4L\sqrt{2\sigma^2f(t)}(K^2 + 2\sqrt{2K t}) + KD + L\sqrt{2\sigma^2 Kt f(t)} + f(t)(K^2 + 2K\log(t/K))$.

We now define
\begin{align*}
h(t) &= 2\sqrt{t} + \frac{A_t + R\sqrt{Kt}}{D_\vmu}\\
 &\: + \frac{2}{C_b\varepsilon}\left((|\mathcal{I}|-1)R\sqrt{Kt} + 2f(t)(K^2 {+} 2K\log(t/K)) + 2L\sqrt{2\sigma^2f(t)}(K^2 {+} 2\sqrt{2Kt}) \right) \: .
\end{align*}

We have that $h(t) = \mathcal{O}(\sqrt{t \log t})$ and we obtained that if $t<\tau_\delta$ then
\begin{align*}
t - h(t) \leq \frac{\beta(t,\delta)}{D_\vmu} \: .
\end{align*}

\section{Algorithms}
\subsection{Optimistic Track and Stop}

We prove that under the concentration event $\mathcal{E}_t$, there is an upper bound on $t$ such that $t<\tau_\delta$.

Let $\mathcal{C}_s = \{\vxi\in \Theta^K : \forall k\in[K], d(\hat{\mu}_{s-1}^k, \xi^k) \leq \frac{f(s-1)}{N_{s-1}^k} \}$ be a confidence region around $\hat{\vmu}_{s-1}$.

\paragraph{When $i_t\neq i^*(\vmu)$.}

Let $i\in\mathcal{I}\setminus\{i^*(\vmu)\}$. Since $i_s\neg i^*(\vmu)$ implies that $\vmu\in\neg i_s$,
\begin{align*}
\sum_{s\leq t, i_s=i} \sum_{k=1}^K w_s^k d(\hat{\mu}^k_{s-1}, \mu^k)
&\geq \inf_{\vlambda\in\neg i} \sum_{s\leq t, i_s=i} \sum_{k=1}^K w_s^k d(\hat{\mu}^k_{s-1}, \lambda^k)\ .
\end{align*}
Let $\varepsilon_t^i$ be the left hand side of that inequality. Since $\hat{\vmu}_{s-1}$ and $\vmu^{+}_s$ both belong to $\mathcal{C}_s$, we have
\begin{align*}
\sum_{s\leq t,i_s=i} \sum_{k=1}^K w_s^k d(\hat{\mu}^k_{s-1}, \lambda^k)
\geq \sum_{s\leq t,i_s=i} \sum_{k=1}^K w_s^k d(\mu^{+ k}_s, \lambda^k) - L\sqrt{2\sigma^2 f(t)}\sum_{s\leq t,i_s=i} \frac{w_s^k}{\sqrt{N_{s-1}^k}} \ .
\end{align*}
By definition of $\vmu^+_s$,
\begin{align*}
\inf_{\vlambda\in\neg i} \sum_{s\leq t, i_s=i} \sum_{k=1}^K w_s^k d(\mu^{+ k}_s, \lambda^k)
\geq \sum_{s\leq t, i_s=i} \inf_{\vlambda\in\neg i} \sum_{k=1}^K w_s^k d(\mu^{+ k}_s, \lambda^k)
&=    \sum_{s\leq t, i_s=i} D_{\vmu^+_s} \ .
\end{align*}
For $s\geq t^{1/(1+b)}$, $\vmu\in\mathcal{C}_s$ and by definition of $\vmu^+_s$, $D_{\vmu^+_s} \geq D_\vmu$. We obtain, with $n_i(t)$ the number of times with $i_s=i$ until $t$,
\begin{align*}
\sum_{i\in\mathcal{I}\setminus\{i^*(\vmu)\}} \varepsilon_t^i
&\geq (t - n_{i^*(\vmu)}(t) - t^{1/(1+b)}) D_\vmu - L\sqrt{2\sigma^2 f(t)}\sum_{s\leq t} \frac{w_s^k}{\sqrt{N_{s-1}^k}}\\
&\geq (t - n_{i^*(\vmu)}(t) - t^{1/(1+b)}) D_\vmu - L\sqrt{2\sigma^2 f(t)}(K^2 + 2\sqrt{2Kt})\ .
\end{align*}

See Lemma~\ref{lem:concentration_term_w/sqrt(N)} for that last inequality. By concentration,
\begin{align*}
\sum_{i\in\mathcal{I}\setminus\{i^*(\vmu)\}} \varepsilon_t^i
&\leq f(t) \sum_{s=1}^t \sum_{k=1}^K \frac{w_s^k}{N_{s-1}^k} \leq f(t)(K^2 + 2K\log(t/K)) \ .
\end{align*}
Finally,
\begin{align*}
n_{i^*(\vmu)}(t) \geq t - t^{1/(1+b)} - \frac{1}{D_\vmu} \left( L\sqrt{2\sigma^2 f(t)}(K^2 + 2\sqrt{2Kt}) + f(t)(K^2 + 2K\log(t/K)) \right) \ .
\end{align*}

\paragraph{When $i_t=i^*(\vmu)$.}

Let $t'\geq n_{i^*(\vmu)}(t)$ be such that $i_{t'} = i^*(\vmu)$ and $n_{i^*(\vmu)}(t') = n_{i^*(\vmu)}(t)$. Using concentration and tracking properties, as in the main sample complexity proof of Appendix~\ref{sec:proof_correct_answer},
\begin{align*}
\beta(t',\delta)
&\geq \inf_{\vlambda\in\neg i^*(\vmu)} \sum_{k=1}^K N_t^k d(\hat{\mu}_{t'}^k, \lambda^k)\\
&\geq \inf_{\vlambda\in\neg i^*(\vmu)} \sum_{k=1}^K N_t^k d(\mu^k, \lambda^k) - L\sqrt{2\sigma^2 K t f(t)}\\
&\geq \inf_{\vlambda\in\neg i^*(\vmu)} \sum_{s=1}^{t'} \sum_{k=1}^K w_s^k d(\mu^k, \lambda^k) - KD - L\sqrt{2\sigma^2 K t f(t)}\\
&\geq \inf_{\vlambda\in\neg i^*(\vmu)} \sum_{s=1}^{t'} \sum_{k=1}^K w_s^k d(\hat{\mu}^k_{s-1}, \lambda^k)\\
&\qquad- L\sqrt{2\sigma^2 f(t)}(K^2 {+} 2\sqrt{2Kt}) - KD - L\sqrt{2\sigma^2 K t f(t)}
\end{align*}
Since $\hat{\vmu}_{s-1}$ and $\vmu^{+}_s$ both belong to $\mathcal{C}_s$, we have
\begin{align*}
\inf_{\vlambda\in\neg i^*(\vmu)} \sum_{s=1}^{t'} \sum_{k=1}^K w_s^k d(\hat{\mu}^k_{s-1}, \lambda^k)
\geq \inf_{\vlambda\in\neg i^*(\vmu)} \sum_{s=1}^{t'} \sum_{k=1}^K w_s^k d(\mu^{+ k}_s, \lambda^k) - L\sqrt{2\sigma^2 f(t)}(K^2 {+} 2\sqrt{2Kt}) \ .
\end{align*}
Let $B_t =  2L\sqrt{2\sigma^2 f(t)}(K^2 + 2\sqrt{2Kt}) + KD + L\sqrt{2\sigma^2 K t f(t)}$.
\begin{align*}
\beta(t,\delta)
&\geq \inf_{\vlambda\in\neg i^*(\vmu)} \sum_{s\leq t',i_s=i^*(\vmu)} \sum_{k=1}^K w_s^k d(\mu^{+ k}_s, \lambda^k) - B_t\\
&\geq \sum_{s\leq t',i_s=i^*(\vmu)} \inf_{\vlambda\in\neg i_t} \sum_{k=1}^K w_s^k d(\mu^{+ k}_s, \lambda^k) - B_t \\
&=    \sum_{s\leq t',i_s=i^*(\vmu)} D_{\vmu^+_s} - B_t \ .
\end{align*}
For $s\geq t^{1/(1+b)}$, $\vmu \in \mathcal{C}_s$. Then by definition of $\vmu^+_s$, $D_{\vmu^+_s} \geq D_\vmu$.
\begin{align*}
\beta(t,\delta)
&\geq \sum_{t^{1/(1+b)}\leq s\leq t', i_s=i^*(\vmu)} D_\vmu - B_t\\
&=    (n_{i^*(\vmu)}(t)-t^{1/(1+b)}) D_\vmu - B_t \: .
\end{align*}

\paragraph{Putting things together.}

Let $h(t) = 3L\sqrt{2\sigma^2 f(t)}(K^2 + 2\sqrt{2Kt}) + f(t)(K^2 + 2K\log(t/K)) + KD + L\sqrt{2\sigma^2 K t f(t)}$. When the concentration event $\mathcal{E}_t$ holds, if $t<\tau_\delta$ then
\begin{align*}
\frac{\beta(t,\delta) + h(t)}{D_\vmu} \geq t - 2t^{1/(1+b)} \ .
\end{align*}
Let $T_0(\delta)$ be the maximal $t$ verifying this inequality. Then the expected sample complexity is lower than $T_0(\delta) + \frac{2eK}{a^2}$. Note that $f(t)$ depends on $a$ and $b$.

\subsection{Follow The Perturbed Leader}\label{sec:ftpl_proof}

In this section, we suppose that the rewards are bounded and we define $C>0$ such that for all times $s$ and $k\in[K]$, $|X_s^k - \hat{\mu}_{s-1}^k|\leq C$.

At stage $t$, the loss of a vector $\vlambda$ is $\ell_t(\vlambda) = d(\hat{\mu}_{t-1}^{k_t}, \lambda^{k_t})$. The only unknown quantity for the $\vlambda$-player is $k_t$. We will use the form of that loss in the way we perturb the leader.
For $\vsigma\in \R^K_+$ and $\vxi\in\Theta^K$ we define
\begin{align*}
\vlambda_t(\vsigma, \vxi)
&= \argmin_\vlambda \sum_{s=1}^{t-1} \ell_s(\vlambda) + \sum_{k=1}^K \sigma^k d(\xi^k, \lambda^k) \: .
\end{align*}

We study the expected regret of an algorithm playing $\vlambda_t(\vsigma_t, \hat{\vmu}_{t-1})$ with exponentially distributed perturbations $\vsigma_t$. Let $\q_t$ be the distribution of $\vlambda_t(\vsigma_t, \hat{\vmu}_{t-1})$.
Let $\tilde{\mu}_{t-1}^k = \frac{1}{N_{t-1}^k}\sum_{s=1}^{t-1}\hat{\mu}_{s-1}^k \mathbb{I}\{k_s=k\}$. We show in the following lemma that the point $\vlambda_t(\vsigma_t, \hat{\vmu}_{t-1})$ can be computed by the best-response oracle, as
\begin{align*}
\argmin_{\vlambda\in\Lambda} \sum_{k=1}^K (N_{t-1}^k + \sigma_t^k) d\left( \frac{N_{t-1}^k}{N_{t-1}^k + \sigma_t^k}\tilde{\mu}_{t-1}^k + \frac{\sigma_t^k}{N_{t-1}^k + \sigma_t^k}\hat{\mu}_{t-1}^k, \lambda^k \right) \: .
\end{align*}

\begin{lemma}\label{lem:oracle_can_do_FTL}
Let $(\vmu_s)_{s\in[t]}$ be $t$ points in $\Theta^K$. Then
\begin{align*}
\argmin_{\vlambda\in\Lambda} \sum_{s=1}^t d(\mu_{s}^{k_s}, \lambda^{k_s})
= \argmin_{\vlambda\in\Lambda} \sum_{k=1}^K N_t^k d(\frac{\sum_{s=1}^t \mu_s^k \mathbb{I}\{k_s=k\}}{N_t^k}, \lambda^k) \: .
\end{align*}
\end{lemma}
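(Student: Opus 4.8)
The plan is to exploit the canonical exponential family structure recorded in Appendix~\ref{sec:GLR}, namely that $d(x,\lambda) = \phi(x) - \phi(\lambda) - (x-\lambda)\phi'(\lambda)$, which is affine in $x$ for fixed $\lambda$. First I would regroup the left-hand objective according to which arm was pulled in each round, writing
\begin{align*}
\sum_{s=1}^t d(\mu_s^{k_s}, \lambda^{k_s})
= \sum_{k=1}^K \sum_{s=1}^t \mathbb{I}\{k_s=k\}\, d(\mu_s^k, \lambda^k) \: ,
\end{align*}
so that the two sides of the claimed identity decouple across coordinates $k$, and it suffices to treat each arm separately.

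For a fixed $k$, I would expand each divergence $d(\mu_s^k,\lambda^k)$ with the formula above and collect the terms that depend on $\lambda^k$. Writing $\bar{\mu}^k = \frac{1}{N_t^k}\sum_{s=1}^t \mathbb{I}\{k_s=k\}\mu_s^k$ for the empirical average and using $N_t^k \bar{\mu}^k = \sum_{s=1}^t \mathbb{I}\{k_s=k\}\mu_s^k$, the crucial observation is that
\begin{align*}
\sum_{s=1}^t \mathbb{I}\{k_s=k\}\, d(\mu_s^k, \lambda^k)
- N_t^k\, d(\bar{\mu}^k, \lambda^k)
= \sum_{s=1}^t \mathbb{I}\{k_s=k\}\,\phi(\mu_s^k) - N_t^k\,\phi(\bar{\mu}^k) \: .
\end{align*}
All the $\lambda^k$-dependent contributions, namely $-N_t^k\phi(\lambda^k)$, the linear term $-(N_t^k\bar{\mu}^k)\phi'(\lambda^k)$, and $+N_t^k\lambda^k\phi'(\lambda^k)$, cancel identically because $\mu_s^k$ enters $d(\mu_s^k,\lambda^k)$ only through the affine terms $\phi(\mu_s^k)$ and $-\mu_s^k\phi'(\lambda^k)$, and the sum of the $\mu_s^k$ equals $N_t^k\bar{\mu}^k$.

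Summing the per-arm identity over $k\in[K]$ shows that the two objectives differ by the $\lambda$-independent constant $\sum_{s=1}^t \phi(\mu_s^{k_s}) - \sum_{k=1}^K N_t^k\phi(\bar{\mu}^k)$, and therefore have the same minimiser over $\Lambda$, which is exactly the claim. I do not anticipate a genuine obstacle here: the only point requiring care is that the formula $d(x,\lambda)=\phi(x)-\phi(\lambda)-(x-\lambda)\phi'(\lambda)$ is valid for all the means involved, which is guaranteed since each $\mu_s^k$ (and hence $\bar{\mu}^k$, by convexity of $\Theta$) lies in $\Theta$; the rest is the linearity of the Bregman divergence in its first argument, for which averaging is the natural sufficient statistic.
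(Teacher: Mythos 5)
Your proof is correct and follows essentially the same route as the paper's: both expand the Bregman divergence $d(x,\lambda)=\phi(x)-\phi(\lambda)-(x-\lambda)\phi'(\lambda)$ and observe that the $\lambda$-dependent terms of the two objectives coincide because the first argument enters only affinely, so the objectives differ by a $\lambda$-independent constant. If anything, your write-up is more complete than the paper's, which only works out the two-point, single-coordinate case and declares the general statement an "extension"; your per-arm decomposition and explicit cancellation fill in exactly that gap.
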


\begin{proof}
This is an extension of the following property:
\begin{align*}
\argmin_{\lambda} d(\mu_1, \lambda) + d(\mu_2, \lambda) = \argmin_{\lambda} d(\frac{\mu_1 + \mu_2}{2}, \lambda) \: .
\end{align*}
Indeed we can observe that fact by developing the divergence in terms of $\phi$ and observing that the terms depending on $\lambda$ are the same up to a multiplicative factor.
\begin{align*}
d(\mu_1, \lambda) + d(\mu_2, \lambda) &= \phi(\mu_1) + \phi(\mu_2) - 2 \left(\phi(\lambda) + \phi'(\lambda) (\frac{\mu_1 + \mu_2}{2} - \lambda)\right) \ ,\\
d(\frac{\mu_1 + \mu_2}{2}, \lambda) &= \phi(\frac{\mu_1 + \mu_2}{2}) - \left(\phi(\lambda) + \phi'(\lambda) (\frac{\mu_1 + \mu_2}{2} - \lambda)\right)\: .
\end{align*}
\end{proof}

\begin{theorem}\label{th:regret_FTPL}
The expected regret of the FTPL procedure introduced above against an oblivious adversary, with perturbations $\sigma^k_t = \eta_t^k\sigma_1^k$ with $\eta_t^k = \sqrt{N_{t-1}^k}$ and $\sigma_1^k$ exponential with parameter $\eta$ is
\begin{align*}
\sum_{s=1}^t \ex_{\vlambda\sim\q_s}\ell_s(\vlambda) - \inf_{\vlambda\in\Lambda}\sum_{s=1}^t \ell_s(\vlambda)
\leq R_t = \sqrt{Kt}\left( \frac{D + 2CL}{\eta} + 2D\eta \right) \: .
\end{align*}
\end{theorem}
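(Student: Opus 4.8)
The plan is to analyze this Follow-The-Perturbed-Leader procedure through the standard ``Be-The-Leader'' (BTL) reduction, adapted to the fact that the perturbation scale $\eta_t^k = \sqrt{N_{t-1}^k}$ grows over time and that losses $\ell_s(\vlambda) = d(\hat\mu_{s-1}^{k_s}, \lambda^{k_s})$ enter only through the single active coordinate $k_s$. First I would rewrite, using Lemma~\ref{lem:oracle_can_do_FTL}, the perturbed leader $\vlambda_t(\vsigma_t,\hat\vmu_{t-1})$ as the minimizer of the cumulative loss plus a single fictitious ``prior'' term $\sum_k \sigma_t^k d(\hat\mu_{t-1}^k,\lambda^k)$, so that the perturbation behaves like a phantom observation. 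The key decomposition splits the regret into a \emph{leader} term (how the true cumulative loss of $\q_s$ compares with the in-hindsight optimum) and a \emph{stability/overshoot} term coming from the perturbation. The BTL lemma gives that a leader which also sees its own loss pays no regret, so the regret is controlled by (i) the gap between playing the leader at time $s$ versus $s-1$, i.e.\ the one-step ``prediction drift'', and (ii) the total magnitude of the injected perturbation, which contributes the in-expectation bias of the argmin.

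Concretely, the main work is two estimates. For the overshoot term, since $\sigma_1^k$ is exponential with parameter $\eta$ we have $\ex[\sigma_1^k] = 1/\eta$, and with $\eta_t^k = \sqrt{N_{t-1}^k}$ the expected perturbation injected at arm $k$ scales like $\sqrt{N_{t-1}^k}/\eta$; multiplied by the per-coordinate loss range $D$ (recall $d(\cdot,\cdot)\le D$ from Assumption~\ref{ass:compact_M}) and summed against the telescoping structure $\sum_s w_s^k/\sqrt{N_{s-1}^k}$-type quantities, this yields a contribution of order $D\eta^{-1}\sqrt{Kt}$ after applying Lemma~\ref{lem:sum_w/sqrt(N)} (or the $N_{s-1}^k \ge \tfrac12 N_s^k$ trick already used in Lemma~\ref{lem:concentration_term_w/sqrt(N)}). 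For the stability term I would bound the expected change in the chosen $\vlambda$ between consecutive rounds: adding one loss term $\ell_s$ shifts the empirical center by an amount controlled by $|X_s^k - \hat\mu_{s-1}^k|\le C$ and the Lipschitz constant $L$, so each step costs at most $\mathcal O((D+CL)/\sqrt{N_{s-1}^{k_s}})$ in expectation over the exponential noise, whose density at the relevant scale is $\eta \cdot \eta_t^k$. Summing these per-round stabilities again via $\sum_s 1/\sqrt{N_{s-1}^{k_s}} = \mathcal O(\sqrt{Kt})$ produces the $(D+2CL)\eta^{-1}\sqrt{Kt}$ and the $2D\eta\sqrt{Kt}$ pieces, matching the claimed $R_t = \sqrt{Kt}\bigl(\tfrac{D+2CL}{\eta} + 2D\eta\bigr)$.

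The hard part will be handling the \emph{time-varying} perturbation scale $\eta_t^k = \sqrt{N_{t-1}^k}$ correctly inside the Be-The-Leader argument: the classical FTPL analysis assumes a fixed (or monotonically scheduled scalar) perturbation, whereas here the effective learning rate is coordinate-dependent and random, driven by the realized pull counts $N_{t-1}^k$. I expect the cleanest route is to recognize $\sigma_t^k d(\hat\mu_{t-1}^k,\lambda^k)$ as $\eta_t^k$ copies of a unit perturbation and to track how the leader's argmin moves as $N_{t-1}^k$ increases by one on the active arm, keeping the ``phantom sample count'' $N_{t-1}^k + \sigma_t^k$ aligned with the growing denominator so that the per-round stability genuinely decays like $1/\sqrt{N_{s-1}^{k_s}}$. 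Controlling the cross-terms where the active arm $k_s$ changes—and verifying that the exponential tails of $\vsigma_t$ do not spoil the expectation—is the delicate bookkeeping; everything else reduces to the telescoping sums already established in the tracking lemmas.
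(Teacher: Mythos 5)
Your proposal follows the same route as the paper's proof: the FTL--BTL regret decomposition, the ``phantom sample'' identity by which adding the loss $\ell_t$ to the leader's objective is the same as incrementing the perturbation on coordinate $k_t$ by one (so that $\vlambda_{t+1}(\vsigma,\hat\vmu_{t-1}) = \vlambda_t(\vsigma + e_{k_t},\hat\vmu_{t-1})$), an exponential density-shift argument for the stability term, Lipschitz control of the drifting perturbation center $\hat\vmu_{t-1}$, and the telescoping sums $\sum_s 1/\sqrt{N_{s-1}^{k_s}} = \mathcal O(\sqrt{Kt})$ to finish. The attribution of the pieces of $R_t$ is also essentially the paper's, up to bookkeeping: in the paper the $D/\eta$ piece is the \emph{terminal} overshoot $\ex [\vsigma_t^\top d(\hat\vmu_{t-1},u)] \le D\Vert\eta_t\Vert_1/\eta \le D\sqrt{Kt}/\eta$ (a single evaluation at time $t$, not a quantity summed over rounds as you suggest), the $2CL/\eta$ piece is the center drift inside the Be-The-Leader induction, and the $2D\eta$ piece is the stability term.

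However, you explicitly defer the one genuinely non-standard step --- handling the time-varying, coordinate-dependent scale $\eta_t^k=\sqrt{N_{t-1}^k}$ inside the BTL induction --- and this is exactly where the paper's proof has an idea your sketch is missing. When the induction is unrolled, the error term $A_t(u)$ contains, besides the center-drift terms you identified, the scale-change terms $\sum_{s}(\vsigma_{s-1}-\vsigma_s)^\top d(\hat\vmu_{s-2},\vlambda_{s+1}(\vsigma_s,\hat\vmu_{s-1}))$. The paper disposes of these at a stroke: since $\eta_t^k$ is non-decreasing in $t$, the vector $\vsigma_{s-1}-\vsigma_s$ has non-positive coordinates, while the divergences multiplying them are non-negative, so the whole sum is non-positive --- the growing scale costs nothing. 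Without this monotonicity observation (or a substitute) the induction does not close and your bound cannot be assembled; ``keeping the phantom sample count aligned with the growing denominator'' does not by itself produce it. Two further slips worth fixing: the relevant exponential rate is $\eta/\eta_t^k$, not $\eta\cdot\eta_t^k$, giving the per-round stability bound $D(1-e^{-\eta/\eta_t^{k_t}}) \le D\eta/\eta_t^{k_t}$; and your final accounting double counts the $D/\eta$ contribution, since you credit the stability term with producing both $(D+2CL)/\eta$ and $2D\eta$ while also crediting the overshoot with $D/\eta$.
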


The expected regret of the FTPL algorithm in which the noises are independent in time and $\sigma_t^k$ is exponential with parameter $\eta/\eta_t^k$ is the same.

For non-oblivious adversaries, the quantity $\sum_{s=1}^t \ex_{\vlambda\sim\q_s}\ell_s(\vlambda) - \inf_{\vlambda\in\Lambda}\sum_{s=1}^t \ell_s(\vlambda)$ is also bounded by the same $R_t$, according to Lemma 4.1 of \citep{cesa2006prediction}.

\begin{proof}[Proof of Theorem~\ref{th:regret_FTPL}]
Regret decomposition:
for any $u$, the regret compared to $u$ is
\begin{align*}
\sum_{s=1}^{t} \ell_s(\vlambda_{s}(\vsigma_s, \hat{\vmu}_{s-1})) - \sum_{s=1}^{t} \ell_s(u)
&\leq \sum_{s=1}^{t} \ell_s(\vlambda_{s}(\vsigma_s, \hat{\vmu}_{s-1})) - \ell_s(\vlambda_{s+1}(\vsigma_s, \hat{\vmu}_{s-1})) \\
&\qquad + \sum_{s=1}^{t} \ell_s(\vlambda_{s+1}(\vsigma_s, \hat{\vmu}_{s-1})) - \sum_{s=1}^{t} \ell_s(u)
\end{align*}

\paragraph{Second term of the regret.} We are analysing here the regret of a noisy Be-The-Leader.
We first show by induction that
\begin{align*}
\sum_{s=1}^{t} &\ell_s(\vlambda_{s+1}(\vsigma_s, \hat{\vmu}_{s-1})) - \sum_{s=1}^{t} \ell_s(u)\\
&\leq \vsigma_t^\top d(\hat{\vmu}_{t-1}, u) + \sum_{s=1}^t\vsigma_s^\top d(\hat{\vmu}_{s-1}, \vlambda_{s+1}(\vsigma_s, \hat{\vmu}_{s-1})) - \vsigma_{s-1}^\top d(\hat{\vmu}_{s-2}, \lambda_{s+1}(\vsigma_s, \hat{\vmu}_{s-1}))
\end{align*}

Initialization: for all $u\in\Lambda$,
\begin{align*}
\ell_1(\vlambda_2(\vsigma_1, \hat{\vmu}_0))
&= \ell_1(\vlambda_2(\vsigma_1, \hat{\vmu}_0)) + \vsigma_1^\top d(\hat{\vmu}_0, \vlambda_2(\vsigma_1, \hat{\vmu}_0)) - \vsigma_1^\top d(\hat{\vmu}_0, \vlambda_2(\vsigma_1, \hat{\vmu}_0))\\
&\leq \ell_1(u) + \vsigma_1^\top d(\hat{\vmu}_0, u) - \vsigma_1^\top d(\hat{\vmu}_0, \vlambda_2(\vsigma_1, \hat{\vmu}_0)) \: .
\end{align*}

Let $A_1(u) = \vsigma_1^\top d(\hat{\vmu}_0, u) - \vsigma_1^\top d(\hat{\vmu}_0, \vlambda_2(\vsigma_1, \hat{\vmu}_0))$. Then for all $u\in\Lambda$, $\ell_1(\vlambda_2(\vsigma_1, \hat{\vmu}_0)) - \ell_1(u) \leq A_1(u)$.

Induction:
suppose that for all $u\in\Lambda$, $\sum_{s=1}^{t-1} \ell_s(\vlambda_{s+1}(\vsigma_s, \hat{\vmu}_{s-1})) \leq \sum_{s=1}^{t-1} \ell_s(u) + A_{t-1}(u)$, with
\begin{align*}
A_{t-1}(u) = \vsigma_{t-1}^\top d(\hat{\vmu}_{t-2}, u) + \sum_{s=1}^{t-1}\vsigma_{s-1}^\top d(\hat{\vmu}_{s-2}, \vlambda_{s+1}(\vsigma_s, \hat{\vmu}_{s-1})) - \vsigma_{s}^\top d(\hat{\vmu}_{s-1}, \vlambda_{s+1}(\vsigma_s, \hat{\vmu}_{s-1})) 
\end{align*}
where $\vsigma_0=0$. Apply it to $u = \vlambda_{t+1}(\vsigma_t, \hat{\vmu}_{t-1})$.
\begin{align*}
\sum_{s=1}^t \ell_s(\vlambda_{s+1}(\vsigma_s, \hat{\vmu}_{s-1}))
&\leq \sum_{s=1}^{t-1} \ell_s(\vlambda_{t+1}(\vsigma_t, \hat{\vmu}_{t-1})) + \ell_t(\vlambda_{t+1}(\vsigma_t, \hat{\vmu}_{t-1}))\\
&\qquad + A_{t-1}(\vlambda_{t+1}(\vsigma_t, \hat{\vmu}_{t-1}))\\
&= \sum_{s=1}^{t} \ell_s(\vlambda_{t+1}(\vsigma_t, \hat{\vmu}_{t-1})) + \vsigma_t^\top d(\hat{\vmu}_{t-1}, \vlambda_{t+1}(\vsigma_t, \hat{\vmu}_{t-1})) \\
&\qquad - \vsigma_t^\top d(\hat{\vmu}_{t-1}, \vlambda_{t+1}(\vsigma_t, \hat{\vmu}_{t-1})) + A_{t-1}(\vlambda_{t+1}(\vsigma_t, \hat{\vmu}_{t-1}))\\
&\leq \sum_{s=1}^{t} \ell_s(u) + \vsigma_t^\top d(\hat{\vmu}_{t-1}, u)\\
&\qquad - \vsigma_t^\top d(\hat{\vmu}_{t-1}, \vlambda_{t+1}(\vsigma_t, \hat{\vmu}_{t-1})) + A_{t-1}(\vlambda_{t+1}(\vsigma_t, \hat{\vmu}_{t-1})) \: .
\end{align*}
We obtain
\begin{align*}
A_t(u) -  \vsigma_t^\top d(\hat{\vmu}_{t-1}, u)
&= A_{t-1}(\vlambda_{t+1}(\vsigma_t, \hat{\vmu}_{t-1})) - \vsigma_t^\top d(\hat{\vmu}_{t-1}, \vlambda_{t+1}(\vsigma_t, \hat{\vmu}_{t-1})) \\
&= \sum_{s=1}^t\vsigma_{s-1}^\top d(\hat{\vmu}_{s-2}, \vlambda_{s+1}(\vsigma_s, \hat{\vmu}_{s-1})) - \vsigma_{s}^\top d(\hat{\vmu}_{s-1}, \vlambda_{s+1}(\vsigma_s, \hat{\vmu}_{s-1}))\: .
\end{align*}
End of the induction proof.

We now bound $A_t(u)$. First we write
\begin{align*}
A_t(u) -  \vsigma_t^\top d(\hat{\vmu}_{t-1}, u)
&= \sum_{s=1}^t\vsigma_{s-1}^\top d(\hat{\vmu}_{s-2}, \vlambda_{s+1}(\vsigma_s, \hat{\vmu}_{s-1})) - \vsigma_{s}^\top d(\hat{\vmu}_{s-1}, \vlambda_{s+1}(\vsigma_s, \hat{\vmu}_{s-1}))\\
&= \sum_{s=1}^t\vsigma_s^\top [d(\hat{\vmu}_{s-2}, \vlambda_{s+1}(\vsigma_s, \hat{\vmu}_{s-1})) - d(\hat{\vmu}_{s-1}, \vlambda_{s+1}(\vsigma_s, \hat{\vmu}_{s-1}))]\\
&\quad + \sum_{s=1}^t(\vsigma_{s-1} - \vsigma_{s})^\top d(\hat{\vmu}_{s-2}, \vlambda_{s+1}(\vsigma_s, \hat{\vmu}_{s-1})) \: .
\end{align*}
We now bound separately the two sums. The first one uses the Lipschitz-continuity of $d$ and the fact that successive $\hat{\vmu}_t$ are not far from each other.
\begin{align*}
\ex\sum_{s=1}^t & \vsigma_s^\top [d(\hat{\vmu}_{s-2}, \vlambda_{s+1}(\vsigma_s, \hat{\vmu}_{s-1})) -  d(\hat{\vmu}_{s-1}, \vlambda_{s+1}(\vsigma_s, \hat{\vmu}_{s-1}))]\\
&=    \ex\sum_{s=1}^t \vsigma_s^{k_{s-1}} [d(\hat{\mu}_{s-2}^{k_{s-1}}, \lambda_{s+1}^{k_{s-1}}(\vsigma_s, \hat{\vmu}_{s-1})) -  d(\hat{\mu}_{s-1}^{k_{s-1}}, \lambda_{s+1}^{k_{s-1}}(\vsigma_s, \hat{\vmu}_{s-1}))]\\
&\leq \sum_{s=1}^t \ex [ \sigma_s^{k_{s-1}} ] L |\hat{\mu}_{s-1}^{k_{s-1}} - \hat{\mu}_{s-2}^{k_{s-1}}| 
\leq CL  \sum_{s=1}^t \ex [ \sigma_s^{k_{s-1}} ]  \frac{1}{N_{s-1}^{k_{s-1}}} 
\leq \frac{CL}{\eta} \sum_{s=1}^t \frac{\eta_s^{k_{s-1}}}{N_{s-1}^{k_{s-1}}} \: .
\end{align*}
For $\eta_t^k$ non-decreasing in $t$, $\vsigma_{s-1} - \vsigma_s$ has non-positive coordinates and the second sum is negative.

We obtain
\begin{align*}
\ex A_t(u)
&\leq \ex \vsigma_t^\top d(\hat{\vmu}_{t-1}, u) + \frac{CL}{\eta} \sum_{s=1}^t \frac{\eta_s^{k_{s-1}}}{N_{s-1}^{k_{s-1}}}
%+ \frac{D}{\eta}  \sum_{s=1}^t \sum_{k=1}^K | \eta_s^k - \eta_{s-1}^k | \\
\leq   \frac{D \Vert\eta_t\Vert_1}{\eta} + \frac{CL}{\eta} \sum_{s=1}^t \frac{\eta_s^{k_{s-1}}}{N_{s-1}^{k_{s-1}}} \ .
%+ \frac{D}{\eta}  \sum_{s=1}^t \sum_{k=1}^K | \eta_s^k - \eta_{s-1}^k |\: .
\end{align*}

\paragraph{First term of the regret.}

Remark that
$\vlambda_{t+1}(\vsigma, \hat{\vmu}_{t-1}) = \vlambda_t(\vsigma+e_{k_t}, \hat{\vmu}_{t-1})$ . Let $f$ be the density of the distribution of $\sigma_t$.
In expectation, the first term of the regret is
\begin{align*}
\ex_{\vsigma_t} &[\ell_t(\vlambda_t(\vsigma_t, \hat{\vmu}_{t-1})) - \ell_t(\vlambda_{t+1}(\vsigma_t, \hat{\vmu}_{t-1}))]\\
&= \int_{\vsigma_t} [\ell_t(\vlambda_t(\vsigma_t, \hat{\vmu}_{t-1})) - \ell_t(\vlambda_t(\vsigma_t+e_{k_t}, \hat{\vmu}_{t-1}))] f(\vsigma_t) d\vsigma_t\\
&= \int_{\vsigma_t} \ell_t( \vlambda_t(\vsigma_t, \hat{\vmu}_{t-1})) (f(\vsigma_t)-f(\vsigma_t - e_{k_t})) d\vsigma_t\\
\end{align*}
By positivity of $\ell_t$ (since it is a divergence),
\begin{align*}
\ex_{\vsigma_t} &[\ell_t(\vlambda_t(\vsigma_t, \hat{\vmu}_{t-1})) - \ell_t(\vlambda_{t+1}(\vsigma_t, \hat{\vmu}_{t-1}))]\\
&\leq \int_{\vsigma_t} \ell_t(\vlambda_t(\vsigma_t, \hat{\vmu}_{t-1})) \mathbb{I}\{f(\vsigma_t)-f(\vsigma_t - e_{k_t}) > 0\} (f(\vsigma_t)-f(\vsigma_t - e_{k_t})) d\vsigma_t\\
&\leq D \int_{\vsigma_t}\mathbb{I}\{f(\vsigma_t)-f(\vsigma_t - e_{k_t}) > 0\} (f(\vsigma_t)-f(\vsigma_t - e_{k_t})) d\vsigma_t\\
&\leq D \int_{\vsigma_t}\mathbb{I}\{f(\vsigma_t)-f(\vsigma_t - e_{k_t}) > 0\} f(\vsigma_t) d\vsigma_t\\
&= D \int_{\sigma_t^{k_t}\leq 1} f(\vsigma_t) d\vsigma_t\\
&= D (1 - e^{-\eta/\eta_t^{k_t}}) \\
&\leq D \eta/\eta_t^{k_t}  \: .
\end{align*}

\paragraph{Putting things together.}

Choose $\eta_t^k = \sqrt{N_{t-1}^k}$.
\begin{align*}
\ex R_t
&\leq  D  \frac{\Vert\eta_t\Vert_1}{\eta} + \frac{CL}{\eta} \sum_{s=1}^t \frac{\eta_s^{k_{s-1}}}{N_{s-1}^{k_{s-1}}} + D\eta \sum_{s=1}^t \frac{1}{\eta_s^{k_s}}  % + \frac{D}{\eta}  \sum_{s=1}^t \sum_{k=1}^K | \eta_s^k - \eta_{s-1}^k | 
\leq \sqrt{Kt}\left( \frac{D + 2CL}{\eta} + 2D\eta \right) \: . %+ \frac{D}{\eta}\sqrt{Kt} \: .
\end{align*}

\end{proof}

\paragraph{Approximation of $\q_t$ by an empirical distribution.} We want the $k$-player to use optimistic best-response to $\q_t$. This requires the computation of
\begin{align*}
\argmax_{k\in[K]} U_t^k && \text{with } U_t^k = \max_{\xi\in\{a_t^k, b_t^k\}}\ex_{\vlambda \sim \q_t} d(\xi, \lambda^k) \: .
\end{align*}
for some values $a_t^k, b_t^k$.

Since we cannot compute an expectation under $\q_t$ exactly, we compute instead the expectation under an empirical distribution based on $t$ samples $\vlambda_t^{(1)},\ldots,\vlambda_t^{(t)}$ of $\q_t$. For all $\xi$, $d(\xi, \lambda^k)$ is bounded by $D$. Hence, by Hoeffding's inequality,
\begin{align*}
\pr\left\{ \frac{1}{t}\sum_{j=1}^t d(\xi, \lambda_t^{(j)k}) - \ex_{\vlambda \sim \q_t} d(\xi, \lambda^k) \geq \sqrt{\frac{3D^2\log(t)}{2t}} \right\}
&\leq \frac{1}{t^3} \: .
\end{align*}
In the concentration analysis of the algorithm, we replace $\mathcal{E}_t$ by $\mathcal{E}_t \cap \mathcal{E}_t'$ with
\begin{align*}
\mathcal{E}_t' = \left\{\forall k\in[K], \forall s\leq t, \forall \xi\in\{a_s^k, b_s^k\}\: \frac{1}{s}\sum_{j=1}^s d(\xi, \lambda_s^{(j)k}) - \ex_{\vlambda \sim \q_s} d(\xi, \lambda^k) \leq D\sqrt{\frac{3\log(t)}{2t}} \right\}
\end{align*}
It verifies $\sum_{t=1}^{+\infty} \pr({\mathcal{E}_t'}^c) \leq 2K\sum_{t=1}^{+\infty} 1/t^2 \leq K\pi^2/3$ .

Under the event $\mathcal{E}_t'$,
\begin{align*}
\sum_{s=1}^t \frac{1}{s}\sum_{j=1}^s d(\xi, \lambda_s^{(j)k}) - \sum_{s=1}^t \ex_{\vlambda \sim \q_s} d(\xi, \lambda^k) \leq D\sqrt{\frac{3}{2}t\log(t)} \: .
\end{align*}
We obtain that the procedure based on these empirical distributions has $\mathcal{O}(\sqrt{t\log t})$ regret.

\section{On the statistical assumptions}\label{sec:ass_discussion}
\subsection{The sub-Gaussian assumption}\label{sec:ass_discussion_subG}

The natural coordinate-wise concentration events for exponential families have the form $N_t^k d(\hat{\mu}_t^k, \mu^k) \leq c$ for some constant $c>0$.
In our proofs, we need then to relate $d(\hat{\mu}_t^k, \lambda^k)$ and $d(\mu^k, \lambda^k)$ for a given $\lambda^k$ under such a concentration constraint. However, we now show that for some convex function $\phi$ (such that $d$ is the associated Bregman divergence), these two quantities could be very far apart even under the constraint $d(\hat{\mu}_t^k, \mu^k)=0$.

If $d(\hat{\mu}_t^k, \mu^k)=0$, we have the equalities
\begin{align*}
d(\hat{\mu}_t^k, \lambda^k) - d(\mu^k, \lambda^k)
&= d(\hat{\mu}_t^k, \mu^k) + (\hat{\mu}_t^k - \mu^k)(\phi'(\mu^k) - \phi'(\lambda^k)) \\
&= (\hat{\mu}_t^k - \mu^k)(\phi'(\mu^k) - \phi'(\lambda^k)) \: .
\end{align*}

Let $\phi:\R\to\R$ be defined by $\phi(x)=\max\{0,x\}$. Let $\lambda^k = 1$, $\mu^k = -1$ and $\hat{\mu}_t^k < -1$. Then
\begin{align*}
d(\hat{\mu}_t^k, \mu^k) &= 0 \: ,\\
d(\hat{\mu}_t^k, \lambda^k) - d(\mu^k, \lambda^k) &= |\hat{\mu}_t^k - \mu^k| \: .
\end{align*} 
In that example, the constraint on $d(\hat{\mu}_t^k, \mu^k)$ is not sufficient to bound $d(\hat{\mu}_t^k, \lambda^k) - d(\mu^k, \lambda^k)$.

The example exploits the piecewise linearity of $\phi$. Such a function $\phi$ cannot arise from an exponential family. Indeed, for an exponential family $\phi$ is the convex conjugate of a cumulant generating function. In particular, $\phi$ is strictly convex. But it could still have very low curvature (for example for an exponential distribution with high mean). The sub-Gaussian assumption ensures that $\phi$ is strongly convex.

Our work and previous parametric pure exploration papers treat $d$ as a general Bregman divergence. The present example shows that either we need to also use more specific properties of $d$ due to the fact that it is a Kullback-Leibler divergence, or we need to impose additional assumptions like sub-Gaussianity.

\subsection{The upper bound assumption}\label{sec:ass_discussion_lipschitz}

A first way to relax the assumption that $\mathcal{M} \subseteq [\mu_{\min}, \mu_{\max}]^K$ is to remark that we do not need to bound $d(\mu,\lambda)$ for any $\mu$ and $\lambda$.

For $\vmu \in \mathcal{M}$ and $\w\in\triangle_K$, let $\vlambda(\vmu, \w) = \argmin_{\vlambda \in \neg i} \sum_{k=1}^K w^k d(\mu^k, \lambda^k)$.
Our proofs are valid for example under the following assumption.
\begin{assumption}
There exists $D>0$ and $L>0$ such that for all $\w\in\triangle_K$, for all $\vmu\in\mathcal{M}$, $\Vert d(\vmu, \vlambda(\vmu, \w)) \Vert_\infty \leq D$ and $\Vert \phi'(\vmu) - \phi'(\vlambda(\vmu, \w))\Vert_\infty \leq L$.
\end{assumption}

We could also use the concentration events to replace it with weaker hypotheses.
Under event $\mathcal{E}_t$ and with Assumption~\ref{ass:sub-Gaussian}, for all $s\leq t$, $\Vert d(\hat{\vmu}_s, \vmu) \Vert_\infty \leq f(t)$ and $\Vert\hat{\vmu}_s - \vmu\Vert_\infty \leq \sqrt{2\sigma^2 f(t)}$. That is, we get from concentration only, without assumptions, that $\hat{\vmu}_t$ is in a bounded set around $\vmu$. We can then quantify $L$ and $D$ on that set.

Let $L_\vmu = \sup_{\w\in\triangle_k} \max_k |\phi'(\mu^k) - \phi'(\lambda(\vmu,\w)^k) |$.

\begin{assumption}
For all $\vmu\in\mathcal{M}$, $L_\vmu$ is finite.
\end{assumption}
This is true for BAI, where $L_\vmu \leq \phi'(\max_k\mu^k) - \phi'(\min_k\mu^k)$.

\begin{assumption}
There exists $M>0$ such that $\vmu\mapsto L_\vmu$ is $M$-Lipschitz for the $\ell^\infty$ norm.
\end{assumption}
This is true for BAI on sets on which $\phi'$ is Lipschitz. For example, it is true on $\R$ for Gaussian arm distributions, but is still only true in intervals of the form $[\varepsilon, 1-\varepsilon]$ for Bernoulli distributions.

Then for any $\vmu,\vxi$ and $\vlambda_\vmu$ minimal point for $\vmu$, for any coordinate $k \in [K]$ (omitted in the computations),
\begin{align*}
d(\mu, \lambda_\mu)
&= d(\xi, \lambda_\mu) + (\mu - \xi)(\phi'(\mu) - \phi'(\lambda_\mu)) - d(\xi,\mu)\\
&\geq d(\xi, \lambda_\mu) -|\mu-\xi| L_\mu - d(\xi,\mu)\\
&\geq d(\xi, \lambda_\mu) -|\mu-\xi| L_\xi - M(\mu - \xi)^2 - d(\xi,\mu)\\
&\geq d(\xi, \lambda_\mu) - L_\xi\sqrt{2\sigma^2 \min\{d(\mu,\xi),d(\xi,\mu)\}} - 2\sigma^2 M \min\{d(\mu,\xi),d(\xi,\mu)\} - d(\xi,\mu)
\end{align*}
Examples for the quantities used in the proofs:
\begin{align*}
d(\mu, \lambda_\mu)
&\geq d(\hat{\mu}_{s-1}, \lambda_\mu) - L_\mu \sqrt{2\sigma^2 d(\hat{\mu}_{s-1},\mu)} - d(\hat{\mu}_{s-1},\mu)
\\
d(\hat{\mu}_t, \lambda_{\hat{\mu}_t})
&\geq d(\mu, \lambda_{\hat{\mu}_t}) - L_\mu \sqrt{2\sigma^2 d(\hat{\mu}_t,\mu)} - 2\sigma^2 M d(\hat{\mu}_t,\mu) - d(\mu,\hat{\mu}_t)
\end{align*}
The proofs must then be adapted to account for the additional terms in these inequalities.

\section{Numerical Experiments}\label{appx:experiments}
\subsection{Best Arm}

\begin{figure}[htp]
  \centering
  \subfigure[Bernoulli bandit $\vmu = (0.5, 0.45, 0.43, 0.4)$, $\w^* = (0.42, 0.39, 0.14, 0.06)$]{
    \includegraphics[width=.45\textwidth]{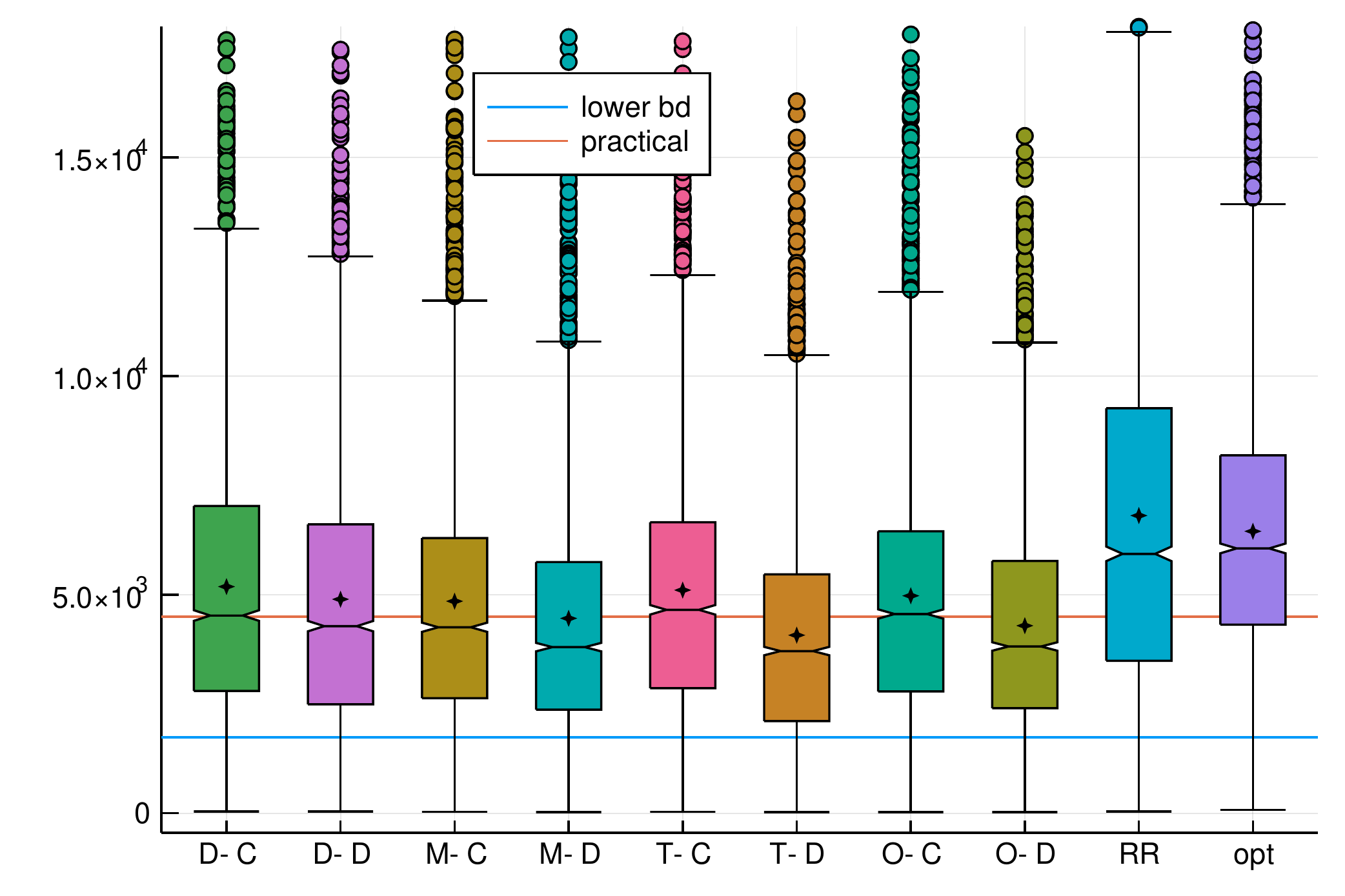}
  }%
  \quad
  \subfigure[Bernoulli bandit $\vmu = (0.3, 0.21, 0.2, 0.19, 0.18)$, $\w^* = (0.34, 0.25, 0.18, 0.13, 0.10)$]{
    \includegraphics[width=.45\textwidth]{figs/experiment_bai2_1}
  }
  \caption{Best Arm experiments from \cite{garivier2016optimal}. In both cases $\delta = 0.1$. Plots show $3000$ runs.}
\end{figure}

\begin{figure}[htp]
  \centering
  \subfigure[$\delta=0.1$]{
    \includegraphics[width=.45\textwidth]{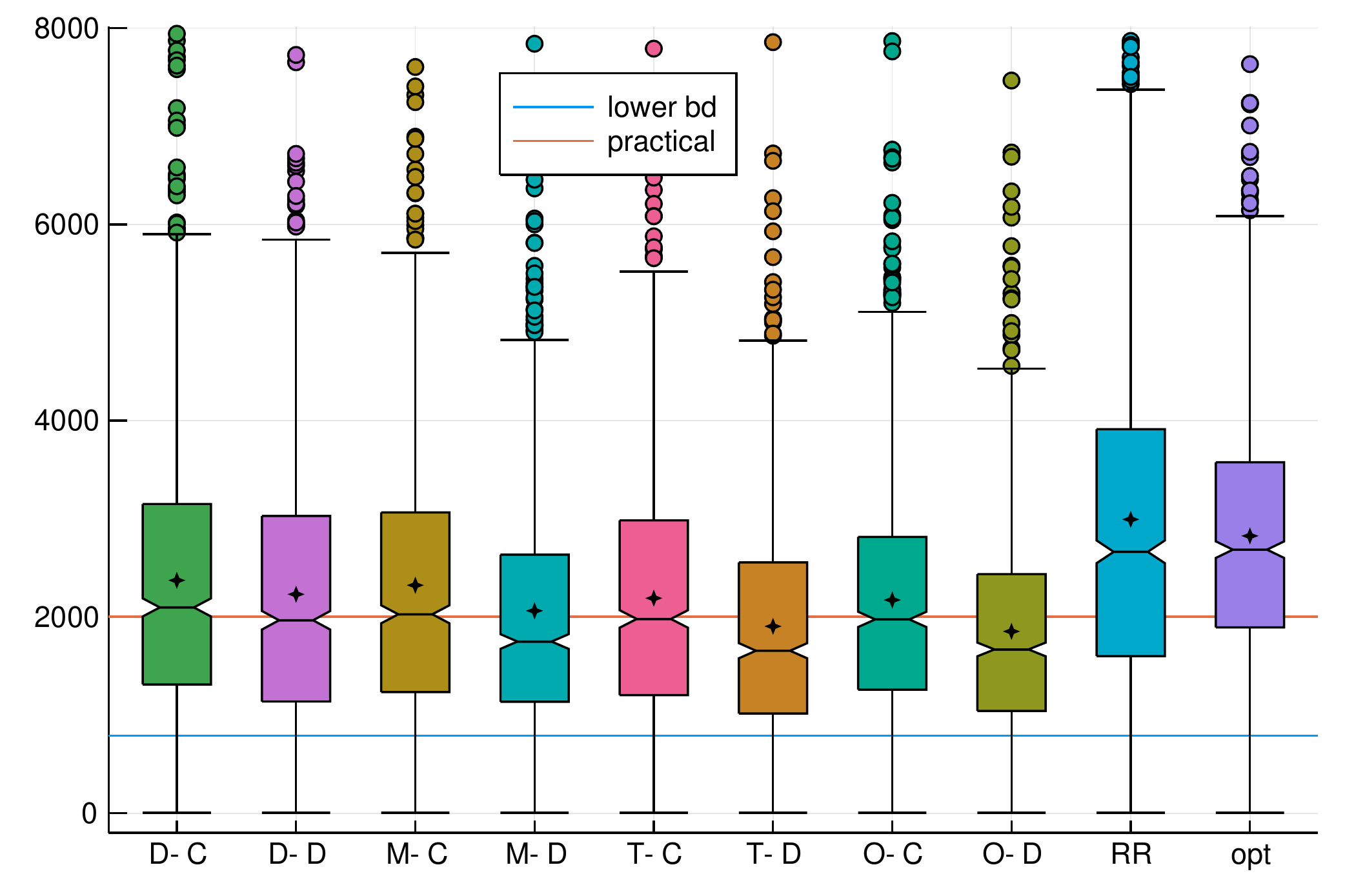}%
  }%
  \quad
  \subfigure[$\delta=0.01$]{
    \includegraphics[width=.45\textwidth]{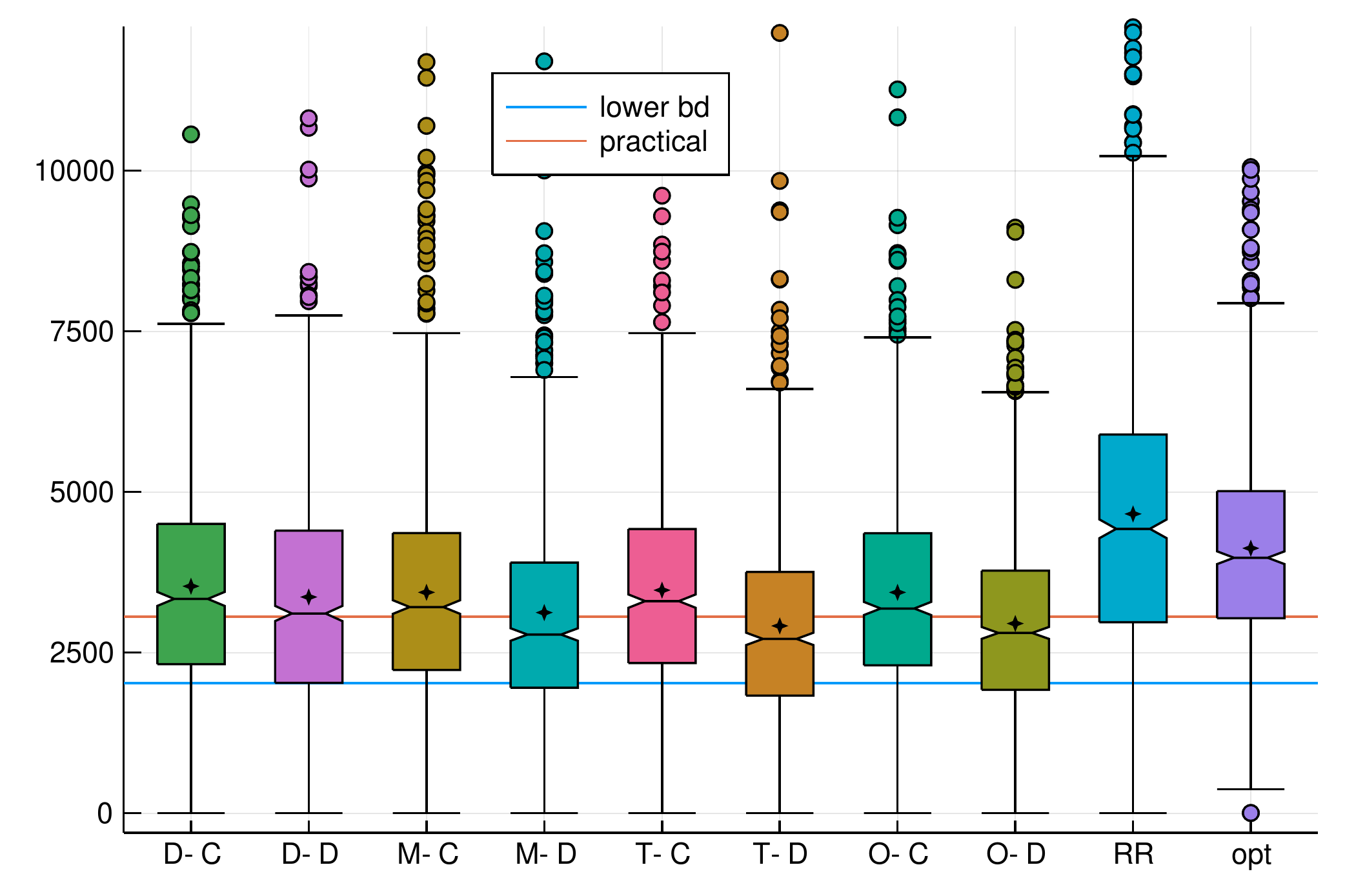}
  }
  \caption{Best Arm experiment from \cite{Menard19}. Gaussian bandit $\vmu = (1., 0.85, 0.8, 0.7)$, $\w^* = (0.41, 0.38, 0.15, 0.06)$. Plots show $3000$ runs.}
\end{figure}

\clearpage
\subsection{Minimum Threshold}\label{appx:exper.mt}

\begin{figure}[htp]
  \centering
  \subfigure[Gaussian bandit $\vmu = (-1, \ldots, 1)$ with $K=10$ arms and $\delta = e^{-23}$, $\w^* = \e_1$]{
    \includegraphics[width=.45\textwidth]{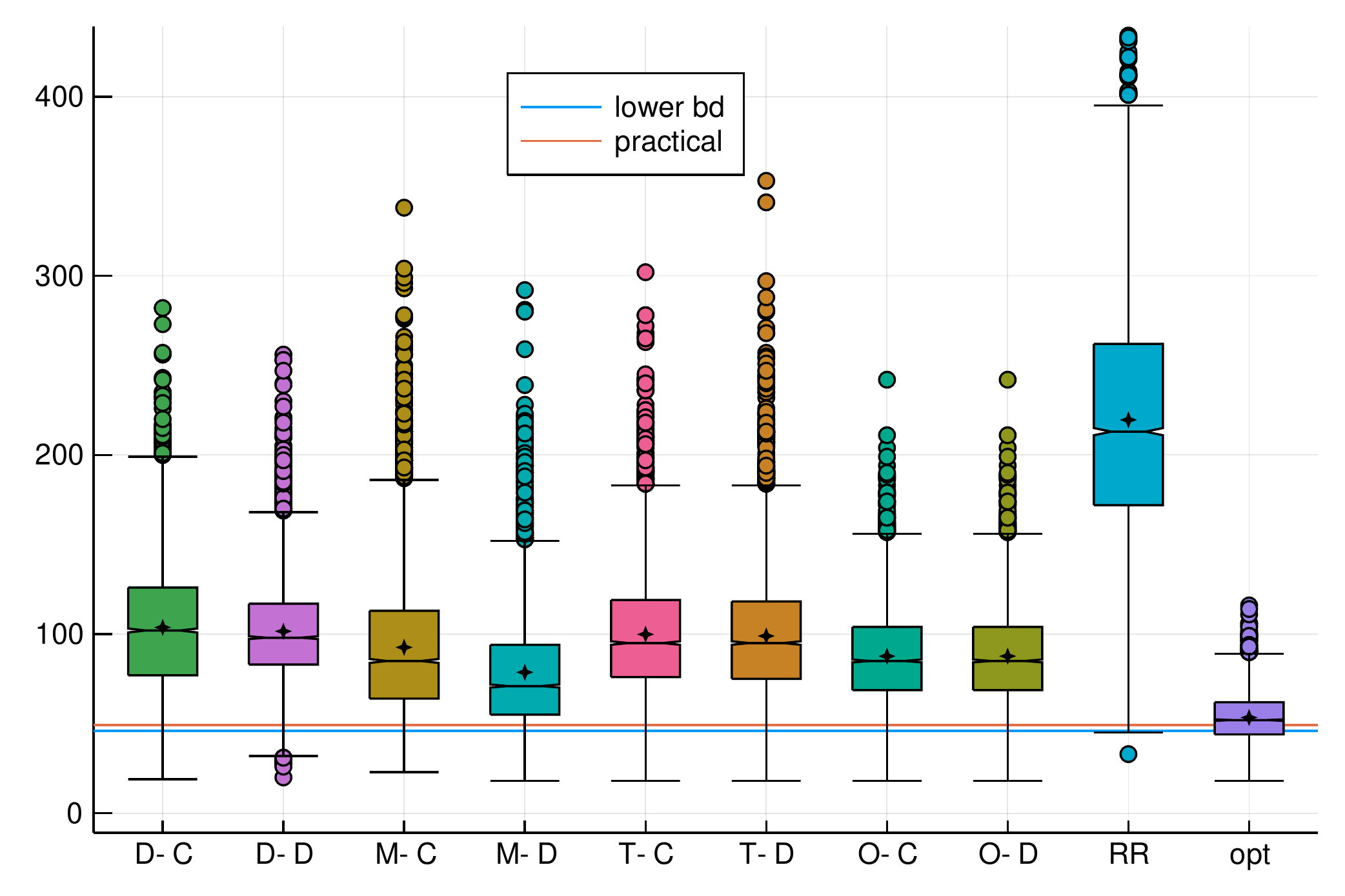}
  }%
  \quad
  \subfigure[Gaussian bandit $\vmu = (0.5, \ldots, 1)$ with $K=5$ arms and $\delta = e^{-7}$, $\w^*=(0.38, 0.24, 0.17, 0.12, 0.09)$]{
    \includegraphics[width=.45\textwidth]{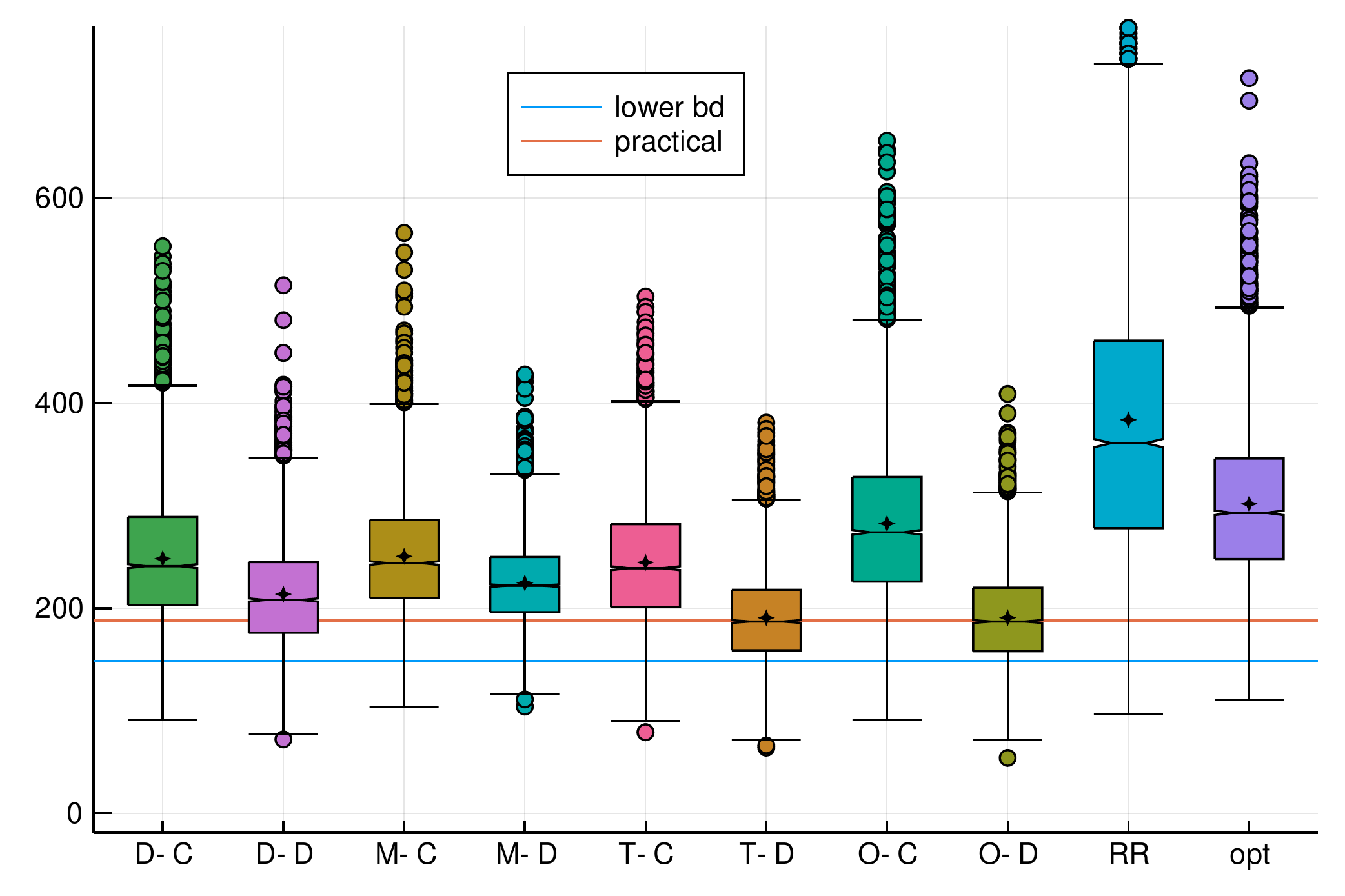}
  }
  \caption{Minimum Threshold experiments from \cite{kaufmann2018sequential} with threshold $\gamma=0$. Plots show $5000$ runs.}
\end{figure}

\begin{figure}[htp]
  \centering
  \subfigure[$\delta =0.1$]{
    \includegraphics[width=.45\textwidth]{figs/experiment_threshold3_1}%
  }%
  \quad
  \subfigure[$\delta =0.0001$]{
    \includegraphics[width=.45\textwidth]{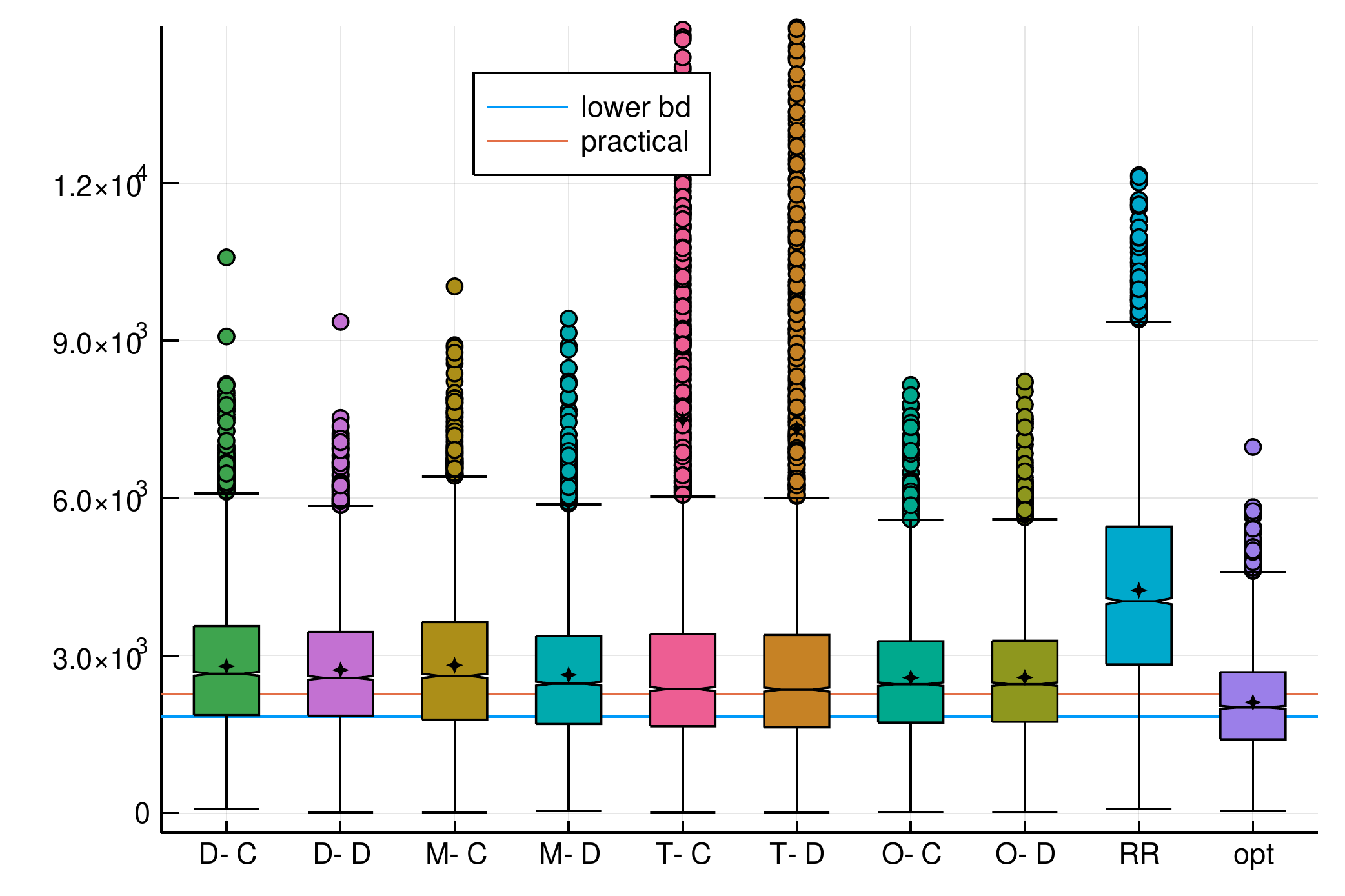}
  }
  \quad
    \subfigure[$\delta = 10^{-10}$]{
    \includegraphics[width=.45\textwidth]{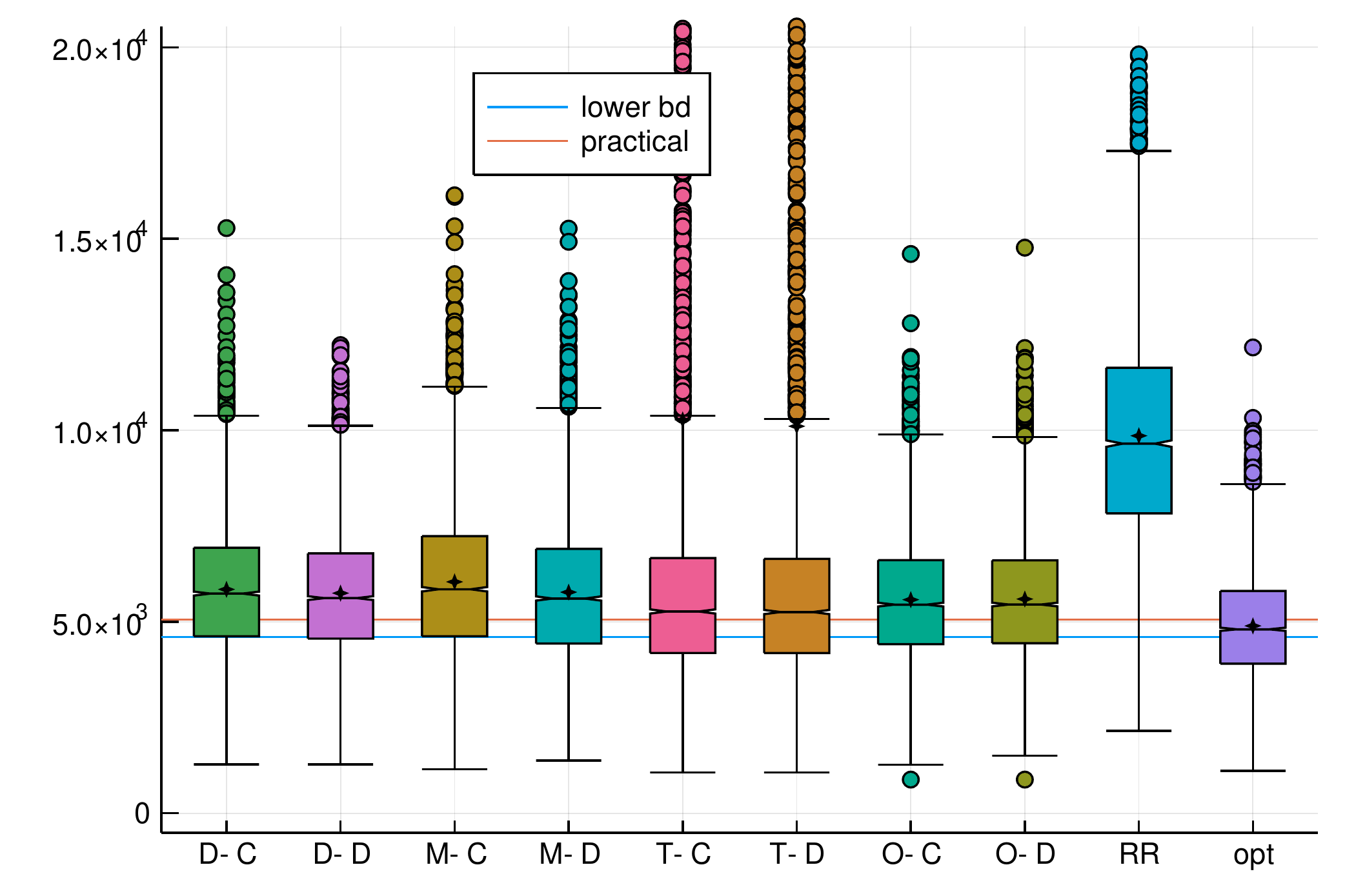}
  }
  \quad
    \subfigure[$\delta = 10^{-20}$\label{fig:tas.bad}]{
    \includegraphics[width=.45\textwidth]{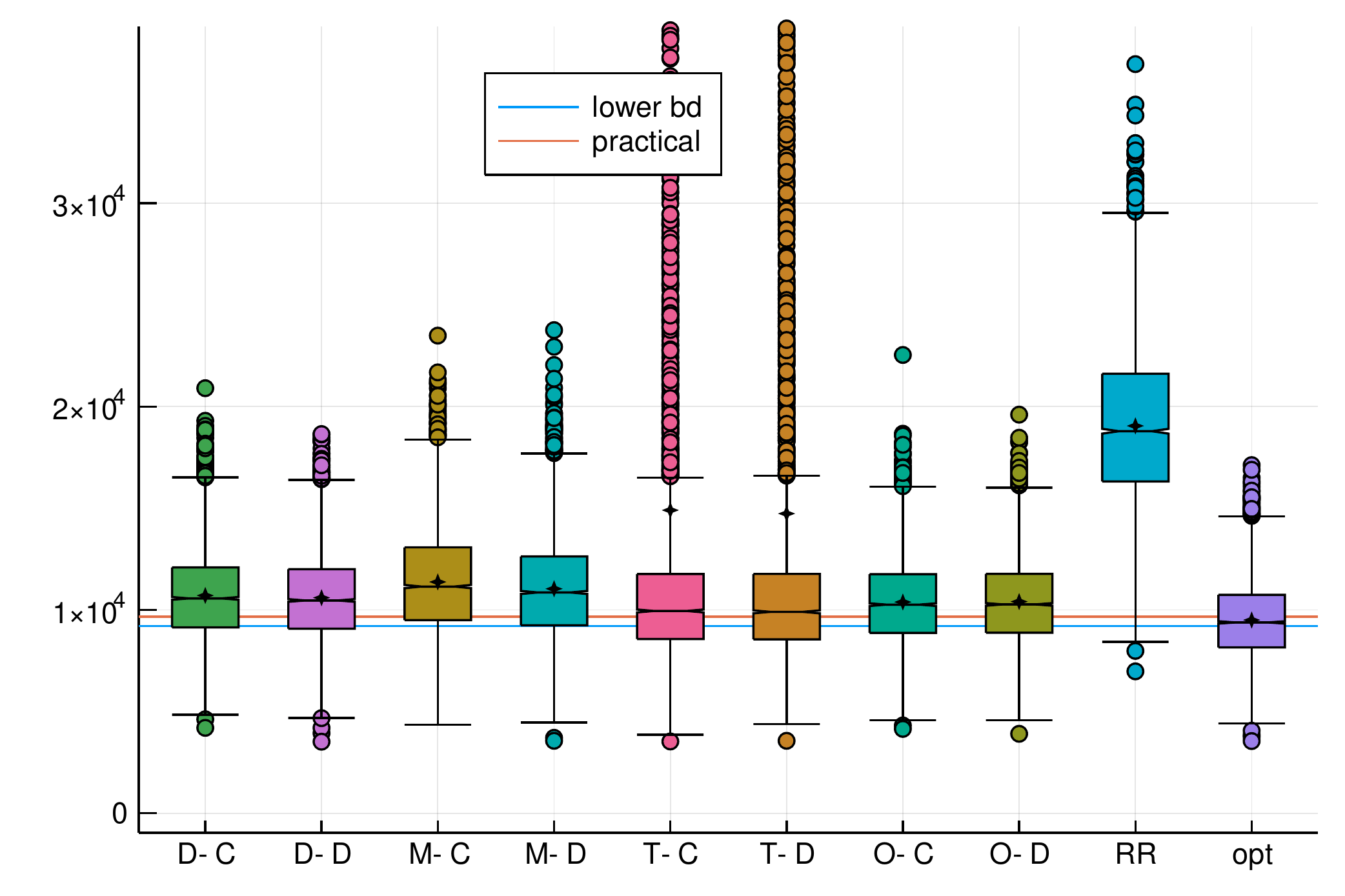}
  }
  \caption{Minimum Threshold experiment (new): Gaussian bandit $\vmu = (0.5, 0.6)$ with threshold $\gamma=0.6$, $\w^* = \e_1$. Note the excessive sample complexity of Track-and-Stop (T-C and T-D). Plots show $5000$ runs.}\label{fig:tas.bad}
\end{figure}

The reason for the bad performance of Track-and-Stop in Figure~\ref{fig:tas.bad} is that with small but non-negligible probability the algorithm finds $\hat \mu_t^1 \gg \gamma$ estimated too high at some early $t$. In this situation $\w^*(\vmu_t)$ will be $\e_2$ (exactly if $\hat \mu_t^2 \le \gamma$, approximately if $\hat \mu_t^2 > \gamma$), and constantly pulling arm $2$ will not correct the estimate of arm $1$. \textbf{T} relies on forced exploration to correct the estimate.

\end{document}